\def\abs#1{\left|#1\right|}
\def\argmin{{\arg\min}}
\def\argmax{{\arg\max}}
\def\bB{\mathbf{B}}
\def\bH{\mathbf{H}}
\def\bW{\mathbf{W}}
\def\bX{\mathbf{X}}
\def\ba{\mathbf{a}}
\def\br{\mathbf{r}}
\def\bw{\mathbf{w}}
\def\by{\mathbf{y}}
\def\bbE{\mathbb{E}}
\def\bbN{\mathbb{N}}
\def\bbP{\mathbb{P}}
\def\bbR{\mathbb{R}}
\def\cA{\mathcal{A}}
\def\cB{\mathcal{B}}
\def\cC{\mathcal{C}}
\def\cD{\mathcal{D}}
\def\cE{\mathcal{E}}
\def\cG{\mathcal{G}}
\def\cH{\mathcal{H}}
\def\cL{\mathcal{L}}
\def\cM{\mathcal{M}}
\def\cN{\mathcal{N}}
\def\cO{\mathcal{O}}
\def\cP{\mathcal{P}}
\def\cQ{\mathcal{Q}}
\def\cR{\mathcal{R}}
\def\cZ{\mathcal{Z}}
\def\xmax{x_{\sf max}}
\def\bmax{b_{\sf max}}
\def\eps{{\epsilon}}
\def\ind{{\mathbbm{1}}}
\def\KL{{\sf KL}}
\def\norm#1{\left\|#1\right\|}
\def\Pr{{\bbP}}
\def\supp{{\sf supp}}
\def\clip{{\sf clip}}
\def\pr{{\bbP}}
\def\TV{{\sf TV}}
\def\Lap{{\sf Lap}}
\def\sfA{{\sf A}}
\newcommand{\innerprod}[1]{\left\langle#1\right\rangle}
\newcommand{\ceil}[1]{\left\lceil#1\right\rceil}
\newcommand{\floor}[1]{\left\lfloor#1\right\rfloor}
\theoremstyle{plain}
\newtheorem{theorem}{Theorem}[section]
\newtheorem{proposition}[theorem]{Proposition}
\newtheorem{lemma}[theorem]{Lemma}
\theoremstyle{definition}
\newtheorem{definition}[theorem]{Definition}
\newtheorem{assumption}[theorem]{Assumption}
\theoremstyle{remark}
\def\yes{{\color{blue} [Yes]}}
\def\no{{\color{red} [No]}}
\def\na{{\color{orange} [Not Applicable]}}
\begin{document}

%

%

\twocolumn[

\aistatstitle{FLIPHAT: Joint Differential Privacy\\ for High Dimensional Sparse Linear Bandits}

\aistatsauthor{ Sunrit Chakraborty \And Saptarshi Roy  \And  Debabrota Basu }

\aistatsaddress{ University of Michigan \And  Institution 2 \And Inria Lille – Nord Europe, équipe Scool} ]
\begin{abstract}
  High dimensional sparse linear bandits serve as an efficient model for sequential decision-making problems (e.g. personalized medicine), where high dimensional features (e.g. genomic data) on the users are available, but only a small subset of them are relevant. Motivated by data privacy concerns in these applications, we study the joint differentially private high dimensional sparse linear bandits, where both rewards and contexts are considered as private data. First, to quantify the cost of privacy, we derive a lower bound on the regret achievable in this setting. To further address the problem, we design a computationally efficient bandit algorithm, \textbf{F}orgetfu\textbf{L} \textbf{I}terative \textbf{P}rivate \textbf{HA}rd \textbf{T}hresholding (FLIPHAT). Along with doubling of episodes and episodic forgetting, FLIPHAT deploys a variant of Noisy Iterative Hard Thresholding (N-IHT) algorithm as a sparse linear regression oracle to ensure both privacy and regret-optimality. We show that FLIPHAT achieves optimal regret in terms of privacy parameters $\epsilon, \delta$, context dimension $d$, and time horizon $T$ up to a linear factor in model sparsity and logarithmic factor in $d$. We analyze the regret by providing a novel refined analysis of the estimation error of N-IHT, which is of parallel interest.
\end{abstract}

\section{INTRODUCTION}
Multi-armed bandits, in brief \textit{bandits}, constitute an archetypal model of sequential decision-making under partial information~\citep{lattimore2018bandit}. In bandits, a learner sequentially chooses one decision from a collection of actions (or arms). In return, it observes only the reward corresponding to the chosen action with the goal of maximizing the accumulated rewards over a time horizon.
Additionally, the available decisions at any point of interaction might depend on some side information about the actions, encoded as a feature vector. In linear contextual bandits (LCB) \citep{auer2002using,abe2003reinforcement}, the learner observes such a feature vector (aka \textit{context}) for each action before taking a decision, and the rewards are generated by a linear function of the chosen context and an unknown parameter vector. 
LCBs are widely studied and deployed in real-life applications, such as online advertising~\citep{schwartz2017customer}, recommendation systems~\citep{silva2022multi}, investment portfolio design~\citep{silva2022multi}, and personalized medicine~\citep{peng2021machine}. 

In many of these applications, the personal information used as contexts often has a large dimension, which can be modeled by viewing the contexts as high-dimensional vectors. This is relevant in the modern big data era, where the applications usually deal with many features. However, the established regret lower bound in LCB polynomially depends on the context dimension~\citep{dani2008stochastic, chu2011contextual}. Thus, additional structural conditions, such as sparsity, are needed to be enforced to make the problem tractable. Sparsity aligns with the idea that among these features, possibly only a small subset of them are relevant to generate the reward. This setting is called the \textit{high-dimensional sparse linear contextual bandit} (SLCB). Recently, SLCB has received significant attention~\citep{wang2018minimax,bastani2020online,HaoLattimore2020,chakraborty2023thompson} and have been useful in practical applications~\citep{miao2022online,ren2024dynamic}.

\textbf{Data Privacy in SLCB.} We observe that the applications of SLCB, e.g. recommender systems and personalized medicines, involve sensitive data of individuals. This naturally invokes data privacy concerns and requires development and analysis of data privacy-preserving SLCB algorithms. 

\textit{Example 1.} One of the earliest successful application of linear contextual bandits is personalized news article recommendation and its validation on Yahoo! Today Module by~\citet{li2010contextual}. In the dataset collected from Yahoo! platform, each user is represented by a 1193-dimensional features but for a successful deployment they are reduced to 5 dimensions. 
The feature of every user include their gender, age, geographical address, and even behavorial quantifiers. 
These are sensitive data as per GDPR~\citep{voigt2017eu} and California Consumer Privacy Act~\citep{ccpa}. Thus, protecting data privacy in this problem, and designing efficient and private SLCB algorithms are fundamentally necessary.

\textit{Example 2.} A recent example of deploying SLCB with user-data is optimizing click-through rates in the Tencent search advertising
dataset~\citep{tencent}. Though each context is 509256 dimensional, it lends itself to a sparse structure~\citep{wangtencent}. 
Here, also the context incorporates sensitive information, like gender, age etc., which are protected by privacy laws. Thus, private SLCBs are a natural requirement here.

\textit{Example 3.} 
Another application of SLCB is a mobile health application that recommends a personalized treatment plan (actions) to each patient (user) based on her personal information, such as age, weight, height, medical history etc.~\citep{tewari2017ads,killian2023robust}. 
Here, for every user and personalized treatment plan contexts are generally obtained through a feature representation, where the dimension is often large but there are only fewer relevant features. 
The algorithm uses patients' data including treatment outcomes and medical contexts of patients, which are sensitive information as per GDPR, CCPA, and HIPAA~\citep{annas2003hipaa}. Hence, this is a natural application that motivates the study of efficient and privacy-preserving SLCB algorithms.

\textbf{Related Work: Differential Privacy in Bandits.} Problems like the ones in the aforementioned examples motivate us to study data-privacy preserving SLCB algorithms. Specifically, we follow Differential Privacy (DP)~\citep{dwork2006differential} as the data privacy framework. DP aims to ensure that while looking into the output of an algorithm, an adversary is unable to conclude whether a specific user was in the input dataset. DP has emerged as the gold standard of data privacy. It has been extensively studied in various domains with the goal of safe-guarding sensitive data from adversarial attacks~\citep{dwork2014algorithmic,Abadi_MomentAccountant,McMahan_Privacy}.
Impact of DP and designing efficient DP-preserving algorithms have also been studied for different settings of bandits, such as stochastic bandits~\citep{Mishra2015NearlyOD, tossou2016algorithms, sajed2019optimal, hu2022near,azize2022privacy_dpband}, adversarial bandits~\citep{tossou2017achieving} and LCB~\citep{shariff2018differentially,neel2018mitigating, hanna2024differentially,azize2023privacy}. 

The literature also reveals that different notions of DP are possible in bandits depending on what we consider as the sensitive input to be saved and what are the public output. In comparison to the supervised learning literature, additional difficulties arise due to the complex interaction between the policy employed by the learner, and the stochastic nature of the contexts and rewards. \textit{Local}~\citep{duchi2013localextended} and \textit{global} \citep{basu2019differential} definitions of DP are extended to bandits, where we aim to keep the rewards generated during interactions private.
In local DP, users already send to the learner privatized rewards as feedback. In global DP, the learner gets true rewards but it publishes actions in a way that the rewards stay private.
For LCBs, compared to reward privacy, \cite{shariff2018differentially} study a stronger form of privacy that aims to keep both the rewards and the contexts private while publishing the actions. This is called Joint Differential Privacy (JDP), which is further strengthened by~\cite{vietri2020private}. 
In this paper, \textit{we follow $(\epsilon, \delta)$-JDP of \citep{vietri2020private} as our privacy framework}, since it best aligns with our interest of protecting both users' information and the outcomes collected over all the interactions.

We observe that though there are multiple algorithms for  $(\epsilon, \delta)$-JDP (Definition~\ref{def: JDP}) satisfying LCBs, they are tailored for the low dimensional setting and scales polynomially with the context dimension $d$. In contrast, the study of JDP in SLCB setting is still unexplored. This leads us to two questions:\\
1. \textit{What is the cost of ensuring $(\epsilon, \delta)$-JDP in high-dimensional sparse linear contextual bandits?}\\
2. \textit{Can we design a privacy-preserving SLCB algorithm that achieves the minimal cost due to $(\epsilon, \delta)$-JDP?}

\textbf{Contributions.} Our study of these questions leads us to the following contributions.

    1. \textit{Regret Lower Bound}:
    We derive a {  \textit{problem independent}} regret lower bound for SLCBs with $(\epsilon,\delta)$-JDP. Our lower bound demonstrates a phase transition between the hardness of the problem depending on the privacy level, similar to~\citep{azize2023privacy} (reward-private low-dimension LCB).
    Compared to linear dependence on $d$ \citep{azize2023privacy}, our lower bound depends linearly on { {$\sqrt{s^*} \log d$}} for a fixed privacy budget, where $s^*$ is the problem sparsity parameter and $d$ is the context dimension.
    
    2. \textit{Algorithm and Regret Analysis}: We propose a novel SLCB algorithm, namely FLIPHAT, that uses N-IHT as a key component, and prove its $(\epsilon,\delta)$-JDP guarantee. We analyze the expected regret for FLIPHAT and demonstrate that in the problem-independent case, the upper bound matches our lower bound up to  { {$s^*$} and} logarithmic factors in $T$. We also numerically demonstrate its logarithmic dependence on $d$ across privacy regimes.
   
    3. \textit{Interaction of Privacy and Hardness of Bandit:} In Theorem \ref{thm: regret}, we identify a unique interplay between two competing hardness: privacy and the internal hardness of the bandit instance. We show that as the internal hardness of bandit decreases, the hardness due to privacy becomes more prevalent and persists for wider (and larger) values of the privacy parameters, and vice-versa. To the best of our knowledge, this is a previously unidentified phenomenon, and demands careful understanding.

     4. \textit{Improved analysis of N-IHT}: We provide a refined analysis for Peeling-based~\citep{dwork2021differentially} empirical risk minimization under non i.i.d. setting. This yields a bound on estimation error of N-IHT under the SLCB setting, and it is used in the regret analysis of FLIPHAT. The results and tools developed here might be of independent interest and can be applied in settings where N-IHT is deployed.



\section{PROBLEM FORMULATION}\label{sec: problem}

A high-dimensional \textit{sparse linear stochastic contextual bandit (SLCB)} consists of $K\in \mathbb{N}$ arms. At time $t\in [T]$\footnote{For $K \in \bbN$, $[K]$ denotes the set $\{1, \ldots, K\}$.}, context vectors $\{x_i(t)\}_{i\in [K]}$ are revealed for every arm $i$. 
For every $i\in[K]$, we assume $x_i(t)\in \bbR^d$, and are sampled i.i.d.\ from some unknown distribution $P_i$. 
At every step $t$, an action $a_t \in [K]$ is chosen by the learner and a noisy reward $r(t)$ is obtained through a linear model:
\begin{align}\label{eq: base model}
    r(t) = x_{a_t}(t)^\top \beta^* + \epsilon(t)\,.
\end{align}
Here, $\beta^*\in\bbR^d$ is the unknown true signal, which is $s^*$-sparse i.e., $\norm{\beta^*}_0 = s^*$ with $s^* \ll d$. $\{\epsilon(t)\}_{t\in[T]}$ are independent sub-Gaussian noise, which are independent of all the other processes. Furthermore, we assume boundedness assumption on the contexts and parameter, i.e., $\norm{x_i(t)}_\infty \le \xmax \;\forall (i,t) \in [K]\times [T]$ almost surely, and $\norm{\beta^*}_1 \le \bmax$. Such assumptions are common in prior LCB literature, \citep{HaoLattimore2020, li2021regret, azize2023privacy} and are crucial for privacy guarantees.

Now, to formalize the notion of performance of a learner, we define the best action in hindsight at the step $t$ as $a_t^*:= \argmax_{i \in [K]} x_i(t)^\top \beta^*$. Then, the goal is to design a sequential algorithm (policy) $\pi$. At each step $t$, $\pi$ takes the observed reward-context pairs until time $t-1$ as input and outputs a distribution $\pi_t$ over the action space $[K]$. The learner selects an action $a_t\sim \pi_t$ using the policy $\pi$. Then, we define regret of policy $\pi$ as\footnote{We sometimes drop the subscript $\pi$ in regret, when the dependence of on the policy is clear from context.} 
\begin{align}\label{eq:regret}
    R_{\pi}(T) := \sum\nolimits_{t \in [T]} \left[x_{a_t^*}(t)^\top \beta^* - x_{a_t}(t)^\top \beta^*\right]\, .
\end{align}
The goal of the learner is to construct a SLCB policy $\pi$ to maximize the cumulative reward till time $T$, or equivalently, minimize $R_{\pi}(T)$.

\textbf{Joint DP for SLCB.} Now, we concretely define the notion of DP in SLCB. 
It is well known that the original notion of DP~\citep{dwork2006calibrating} is not ideal in contextual bandit setting due to the inevitability of linear regret~\citep[Claim 13]{shariff2018differentially}.
Therefore, we consider the notion of JDP~\citep{vietri2020private}. In JDP, both rewards and contexts are considered as sensitive information about the users which are used by $\pi$ as input and should be kept private from any external adversary. 

To formalize further, let us denote by $\br_t:= (r_1(t), \ldots, r_K(t))$ the collection of rewards, and $\cC_t:= (x_1(t), \ldots, x_K(t))$ the collection of user contexts at time $t$. At the beginning of the game, the adversary prefixes the whole collection of rewards $\{\br_t\}_{t \in [T]}$ and contexts $\{\cC_t\}_{t\in [T]}$.  
In the $t$-th step, the learner observes the contexts $\cC_t$ and chooses an action $a_t$ using only the past \textit{observed} rewards and contexts. Then, adversary reveals the reward $r_{a_t}(t)$ and the game continues.
Under this paradigm, one can define the whole input dataset as the collection of all the rewards and contexts. Specifically, we consider $\cD:= \{(\br_t, \cC_t)\}_{t \in [T]}$ as the input data to a bandit algorithm $\cA$.  
A randomized bandit algorithm $\cA$ maps every reward-context sequence $\cD$ of length $T$ to a (random) sequence of actions $a:=(a_1,\dots,a_T)\in [K]^T$. Therefore, $\cA(\cD)$ can be imagined as a random vector taking values in $[K]^T$. 
Let $\cA_{-t}(\cD)$ denote all the outputs excluding the output in the $t$-th step of the interaction of $\cA$ with the bandit instance. 
$\cA_{-t}(\cD)$ captures all the outputs that might leak information about the user at the $t$th time point in interactions after the $t$th round, as well as all the outputs from the previous episodes where other users could be submitting information to the agent adversarially to condition its interaction with the $t$th user. Finally, we say that two input datasets $\cD$ and $\cD^\prime := \{(\br^\prime_t, \cC^\prime_t)\}_{t \in [T]}$ are $\tau$-neighbors, if for all $t \ne \tau$ we have $(\br_t, \cC_t) = (\br^\prime_t, \cC^\prime_t)$, and $(\br_\tau, \cC_\tau) \ne (\br^\prime_\tau, \cC^\prime_\tau)$.
\begin{definition}[JDP in LCB]
\label{def: JDP}
    A randomized bandit algorithm $\cA$ is $(\epsilon, \delta)$-JDP if for any $\tau \in [T]$ and any pair of $\tau$-neighboring sequences $\cD$ and $\cD^\prime$, and any subset $E\subset [K]^{T-1}$, it holds that  $$\pr(\cA_{- \tau}(\cD) \in E) \le e^\epsilon \pr(\cA_{- \tau}(\cD^\prime) \in E) + \delta.$$
\end{definition}
Although the above JDP definition is valid for any randomized contextual bandit algorithm $\cA$,
it is useful to point out that in our setup, the bandit protocol $\cA$ is solely defined by the inherent policy $\pi$. 

\begin{table*}[t]
  \caption{Comparison with prior works on private LCB (under no-margin condition)}
  \label{table: prior work}
  \centering
  \small{
  \begin{tabular}{llll}
    \toprule
        Paper & Settings & Contexts  & Regret Bound\\
    \midrule

    \cite{shariff2018differentially} & LCB + JDP  & adversarial & $\tilde{O}(d \sqrt{T/\epsilon})$\\
    \cite{zheng2020locally} & LCB + LDP & adversarial & $\tilde{O}((dT)^{3/4}/\epsilon)$\\
    \cite{han2021generalized} & LCB + LDP  & stochastic & $\tilde{O}(d^{3/2}\sqrt{T/\epsilon})$\\
    \cite{garcelon2022privacy} & LCB + JDP & adversarial & $\tilde{O}(dT^{2/3}/\epsilon^{1/3})$\\
    \cite{hanna2024differentially} & LCB + reward-DP & adversarial & $\tilde{O}(d \sqrt{T} + d^2/\epsilon)$ \\
    \cite{chowdhury2022shuffle} & LCB + LDP & adversarial & $\tilde{O}(d T^{3/5} + d^{3/4}T^{3/5}/\epsilon^{1/2})$\\\cite{azize2023privacy} & LCB + reward-DP & stochastic & $\tilde{O}(\sqrt{dT} + d/\epsilon)$ \\
    \textbf{This paper} & \textbf{Sparse LCB + JDP}   & stochastic & $\tilde{O}(s^*\sqrt{T \log d} \vee (s^{* 3/2} \log d)/\epsilon)$\\
    \bottomrule
  \end{tabular}}
\end{table*}

\textbf{Connection with Prior Works on Private LCB.} 
\cite{shariff2018differentially} first studied LCB under $(\epsilon,\delta)$-JDP. Their proposed UCB-based algorithm enjoys $\tilde{O}(d\sqrt{T/\eps
})$\footnote{$\tilde{O}(\cdot)$ hides the poly-logarithmic dependence in $T$ and $d$.} regret bound. However, \textit{their notion of JDP is slightly weaker as it only considers the privacy of future actions}. \cite{garcelon2022privacy} considered a stronger definition of JDP (same as Definition \ref{def: JDP}), and proposed an algorithm with $\tilde{O}(d T^{2/3}/\epsilon^{1/3})$ regret. Still, achieving the known lower bound under JDP~\citep{he2022reduction}, i.e. $\Omega(\sqrt{dT\log K} + \frac{d}{\epsilon+\delta})$, is an open question. 
There is a parallel line of works~\citep{zheng2020locally, han2021generalized, chowdhury2022distributed} that considers Local Differential Privacy (LDP) under the LCB setting. LDP is a much stronger notion of privacy compared to the central nature of JDP. Therefore, the algorithms under LDP typically suffer a worse regret guarantee, i.e. $\Omega(\sqrt{dT}/\eps)$. 
In another stream, \cite{hanna2024differentially} and \cite{azize2023privacy} considered the reward-DP model under the LCB setting, which only aims to privatize the reward stream. Under an adversarial (contexts) setting, \cite{hanna2024differentially}  obtains $\tilde{O}(d \sqrt{T} + d^{2}/\epsilon)$, and \cite{azize2023privacy} obtains $\tilde{O}(\sqrt{dT} + d/\epsilon)$ regret under stochastic setting that matches the lower bound. However, all these algorithms suffer a polynomial dependence in $d$, i.e. $\Omega(\sqrt{d})$ in non-private term and $\Omega(d)$ in private term, which are not desirable in high-dimension. In our work, we propose a novel algorithm under the SLCB setting that only scales poly-logarithmically on the dimension $d$ under JDP (ref. Table \ref{table: prior work}). Moreover, our regret upper and lower bounds demonstrate a phase transition in the regret in the SLCB setting under JDP, similar to \citep{garcelon2022privacy,azize2022privacy}.

\section{LOWER BOUNDS ON REGRET}\label{sec: lower bound}
The first question that we want to address is what is the inherent cost of privacy in SLCB. Hence, we derive a minimax regret lower bound on the regret achieved by any algorithm ensuring $(\epsilon,\delta)$-JDP for SLCB. The other goal is to understand the cost of privacy on the minimax lower bound compared to the existing lower bounds for SLCB without differential privacy. First, we formally define the minimax regret in this setting. 
\begin{definition}[Minimax regret] We define the minimax regret in the $(\epsilon,\delta)$-JDP SLCB  as $R^{\mathrm{minimax}}_{(\epsilon,\delta)}(T) := \inf_{\pi\in \Pi(\epsilon,\delta)} \sup_{P_{\bX,\br}\in \cB} \bbE[R_{\pi}(T)]$, where the supremum is taken over $\cB$, i.e. the space of SLCB instances, and the infimum is taken over $\Pi(\epsilon,\delta)$, i.e. the space of $(\epsilon,\delta)$-JDP policies. 
\end{definition}
Further details about the minimax regret, the space of bandit instances and the associated probability space are in Appendix \ref{app: lower bound proof new}.
Our first result gives a lower bound on the minimax regret, in terms of the impact of the bandit model parameters $s^{*}, d, T$, and privacy parameters $\epsilon, \delta$.

\begin{theorem}[Lower Bound]\label{thm:lb_edp}
    If $\epsilon, \delta> 0$ and $\epsilon^{2} < \log(1/\delta)$, then for sufficiently large $s^* \log(d/s^*)$ the minimax regret for SLCBs under $(\epsilon,\delta)$-JDP
    \begin{align*}
        & R^{\mathrm{minimax}}_{(\epsilon,\delta)}(T)=\\
        &\Omega\Big(\max \Big\{\underbrace{  \sqrt{\frac{s^* \log^2 (d/s^*) \log(1/\delta)}{\epsilon^2}}}_{{\text{private} \text{and high-d}}}, \underbrace{\sqrt{s^* T \log (d/s^*)}}_{\substack{\text{non-private}\\ \text{and high-d}}}\Big\}\Big).
    \end{align*}
    Additionally, for $\delta=0$, we get $R^{\mathrm{minimax}}_{\epsilon}(T) = \Omega\Big(\max \{s^{*} \log^{3/2} (d/s^*) \epsilon^{-1}, 
    {\sqrt{s^* T\log (d/s^*)} }\rbrace\Big).$
\end{theorem}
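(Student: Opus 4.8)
The plan is to reduce the regret lower bound to a testing problem over a hard family of SLCB instances and to produce the two competing terms --- a classical statistical term and a privacy term --- from two essentially separate arguments whose maximum gives the claimed bound. To build the hard family, I would fix a gap $\Delta>0$ to be optimized later and construct instances in which the reward-generating signal $\beta^*$ is supported on an unknown set $S\subset[d]$ of size $s^*$ with entries in $\{\pm\Delta\}$, the contexts being chosen so that identifying $(S,\text{sign})$ is exactly what a low-regret policy must do. A Varshamov--Gilbert packing over supports and sign patterns yields a sub-family of cardinality $\exp(\Omega(s^*\log(d/s^*)))$ that is pairwise separated in the regret pseudo-metric; the entropy $\log\binom{d}{s^*}\asymp s^*\log(d/s^*)$ is precisely the source of the $s^*\log(d/s^*)$ factor. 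Writing $m:=s^*\log(d/s^*)$, I would arrange the reduction so that $\bbE[R_\pi(T)]\gtrsim \Delta\,T\,\pr(\text{test error})$, turning both desired bounds into control of the testing error under the relevant constraint.

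For the non-private term I would run the standard information-theoretic machinery (a Fano or Assouad argument over the packing). The per-round KL divergence between the action-sequence laws of two neighbouring instances is $O(\Delta^2)$, so the accumulated information after $T$ rounds is $O(T\Delta^2)$, and Fano keeps the test error bounded below as long as $T\Delta^2\lesssim m$. Choosing $\Delta\asymp\sqrt{m/T}$ then balances the incurred regret $\asymp\Delta T$ against this constraint and yields $\Omega(\sqrt{s^*T\log(d/s^*)})$; this part simply recovers the known non-private sparse-bandit rate.

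The crux is the privacy term, where I would replace the pure information bound by a privacy-constrained coupling. I would couple two instances of the packing so that their reward/context streams agree outside a block of $k$ rounds, and then invoke group privacy for JDP: an $(\epsilon,\delta)$-JDP policy satisfies $\pr(\cA_{-\tau}(\cD)\in E)\le e^{k\epsilon}\pr(\cA_{-\tau}(\cD^\prime)\in E)+\tfrac{e^{k\epsilon}-1}{e^\epsilon-1}\,\delta$ across $k$-apart streams, which caps the discrimination advantage and hence forces test errors unless enough ``confusing'' rounds --- each contributing $\asymp\Delta$ regret --- are spent. For approximate DP I would not use this $\delta$-blowup crudely; instead I would employ the sharper coupling available under $(\epsilon,\delta)$-DP (a secrecy-of-the-sample / fingerprinting argument, as in privatised Assouad/Fano lemmas), whose effective per-coordinate privacy cost is $\sqrt{\log(1/\delta)}/\epsilon$ rather than $1/\epsilon$; aggregating over the $m$ coordinates of the packing produces the term $m\,\{\epsilon^2/\log(1/\delta)\}^{-1/2}=s^*\log(d/s^*)\,\{\epsilon^2/\log(1/\delta)\}^{-1/2}$, and the hypothesis $\epsilon^2<\log(1/\delta)$ is exactly what keeps this regime dominant. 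The pure-DP bound I would then obtain as the analogue of the same argument in which the role of the ``confusion budget'' $\log(1/\delta)$ is played by the log-cardinality $\asymp m$ of the packing itself (a packing lower bound), giving $m^{3/2}/\epsilon=(s^*\log(d/s^*))^{3/2}\epsilon^{-1}$.

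The step I expect to be the main obstacle is this last one. Extracting the tight $\sqrt{\log(1/\delta)}/\epsilon$ dependence --- rather than the lossy $\log(1/\delta)/\epsilon$ that naive group privacy yields --- requires a delicate coupling that exploits concentration of the privacy loss across the $k$ differing rounds, and it must be executed within the JDP view $\cA_{-\tau}$, which excludes the $\tau$-th action. I would therefore need to design the packing so that the regret can be charged to rounds whose outputs remain observable under $\cA_{-\tau}$, and then assemble these per-round privacy constraints, together with the adaptive and sequential nature of $\pi$, into a single global bound; reconciling the adaptivity of the policy with the batch-style coupling is where I anticipate most of the technical effort.
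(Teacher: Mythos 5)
Your hard-instance construction (a packing of $s^*$-sparse sign vectors with metric entropy $\asymp s^*\log(d/s^*)$) and your treatment of the non-private term match the paper in substance. But the route you propose for the privacy term is not the one the paper takes, and the step you yourself flag as the main obstacle is precisely the step you never execute. The paper does not reduce to a single global test with $\bbE[R_\pi(T)]\gtrsim \Delta\, T\,\pr(\text{test error})$; it reduces regret to a \emph{cumulative estimation} lower bound via the reduction of \cite{he2022reduction}, namely $\bbE[R_\pi(T)]=\Omega\big(\tfrac{1}{r\sqrt{s^*}}\sum_{t\le T}\inf_{\beta_t}\bbE_{\nu}\|\beta_t-\beta^*\|_2^2\big)$, where the infimum is over policy-induced private estimators, and then applies a Fano-type bound to each summand. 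A point your sketch skips entirely is that one must show this induced estimator inherits the $(\epsilon,\delta)$-DP constraint with respect to the \emph{full} reward/context stream $\bH_t$, not merely the observed history $\bH_t^{\pi}$; the paper handles this with an explicit two-case neighbouring-dataset argument, and without it the privacy-constrained data-processing inequality cannot be invoked.

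For the privacy term itself, the paper uses neither group privacy across $k$-apart streams nor a fingerprinting coupling. For pure $\epsilon$-DP it applies a KL decomposition that is \emph{linear} in the accumulated total variation, $I\lesssim \epsilon\,\bbE\big[\sum_{s,a}\TV(P^{(s,a)}_{\beta^j},P^{(s,a)}_{\beta^k})\big]$; for $(\epsilon,\delta)$-DP it first converts the mechanism to $\rho$-zCDP with $\rho\asymp\epsilon^2/\log(1/\delta)$ and then applies the \emph{quadratic} zCDP KL decomposition of \cite{azize2023privacy}. The $\sqrt{\log(1/\delta)}/\epsilon$ dependence falls out of $1/\sqrt{\rho}$ after tuning the gap $r$ and the context scale $\sigma_X^2=1/(s^*\log(d/s^*))$; this context scaling, not a substitution of $\log(1/\delta)$ by the packing entropy, is also what produces the extra $\sqrt{s^*\log(d/s^*)}$ factor in the pure-DP rate. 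Your proposed secrecy-of-the-sample/fingerprinting coupling is a genuinely different device, and while such arguments give tight $(\epsilon,\delta)$ rates for i.i.d.\ estimation, you offer no argument for why it survives the adaptive, sequential JDP setting in which only $\cA_{-\tau}$ is constrained --- you concede this is where the technical effort lies, so the central quantitative claim of the theorem is not established by your sketch. The zCDP-conversion route is the concrete way to close that gap.
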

\textit{Implications.} (i) This is the first lower bound for SLCBs with JDP. All the existing bounds with JDP are for low-dimensional LCBs with JDP. (ii) Theorem~\ref{thm:lb_edp} indicates that the impact of privacy appears in the regret lower bound when { $\epsilon^2/\log (1/\delta) < \log (d/s^*)/T$}. For $\delta = 0$, the above theorem also states that the cost of privacy becomes prevalent in the regime { $\epsilon < \sqrt{s^*} \log(d/s^*)/\sqrt{T}$}, which matches in spirit with the lower bound of~\citep{azize2022privacy_dpband} for a fixed low-dimensional context set, where the impact of privacy appears for $\epsilon < T^{-1/2}$. Additionally, in SLCB, the dimension also matters for the transition. This phenomenon is also reflected in our regret upper bounds. 

\textit{Proof Technique.} Our proof differs from the existing lower bound techniques for private bandits in \citep{shariff2018differentially,basu2019differential_dpband,azize2022privacy,azize2023privacy}, and non-private SLCB setting \citep{li2021regret}.
Rather our proof depends on converting the regret lower bound problem to an estimation lower bound problem following \cite{he2022reduction}, and combing the information-theoretic techniques for sparse linear models~\citep{duchi2013distance} and { privacy-restricted lower bound techniques in~\citep{azize2023privacy, kamath2022improved}.} 



\section{REFINED ANALYSIS: PEELING FOR $(\epsilon,\delta)$-DP ERM}\label{section: general result}
In this section, we describe and analyze an $(\epsilon,\delta)$-DP algorithm, which can be used for general private empirical risk minimization. While the algorithm appears in \citep{cai2021cost} in sparse regression setting, we present a refined and simpler proof of the utility guarantee such that the results are applicable not only in the standard regression setting with independence across observations, but also to data arising from highly dependent and dynamic bandit environments. 

The well-known \emph{Peeling} algorithm (Appendix~\ref{section: background_privacy}) is used for $(\epsilon, \delta)$-DP top-$s$ selection problem~\citep{dwork2021differentially}.
We denote the output of Peeling applied to a vector $v \in \bbR^d$ as $\cP_s(v ;\epsilon,\delta, \lambda)$, where $\epsilon, \delta$ are privacy parameters and $\lambda$ is the scale parameter of the Laplace noise. For $v\in \bbR^d$, and an index set $S\subseteq[d]$, $v_S\in\bbR^d$ denotes the vector such that $v_S$ matches $v$ on indices in $S$ and is 0, elsewhere. 
Peeling returns a $s$-sparse vector in $\bbR^d$.  It is important to point out that the peeling algorithm is an important building block of N-IHT, which in turn is a crucial component of FLIPHAT. 

\textbf{A Generic Analysis of N-IHT.} Let us consider $n$ observations $(x_1,y_1),\dots,(x_n,y_n)$, 
where $x_i\in \bbR^d$ and $y_i\in\bbR$ for all $i\in[n]$, arising from a joint distribution $\cQ$. We denote the design matrix by $X$ and $y\in\bbR^n$ as the vector of responses. We note that we neither assume any particular model (e.g. linear model), nor independence across observations. Now, we wish to use Algorithm \ref{alg:noisy-IHT} to minimize the empirical risk
$\cL(\theta|X,y) := \frac{1}{n}\sum_{i=1}^n \ell((y_i, x_i);\theta)$ w.r.t. a loss function $\ell(\cdot; \cdot)$.
In Algorithm~\ref{alg:noisy-IHT}, $\clip_R(z) := z\min \{1, R/|z|\}$ and $\clip_R$ on a vector $y$ means a coordinate-wise application (Line 3). $\cP_s$ is the peeling subroutine serving as a private hard thresholding step (Line 4). Finally, $\Pi_C$ is the (Euclidean) projection on the ball $\{v\in\bbR^d\mid \norm{v}_1\leq C\}$, i.e., $\Pi_C(x) = \argmin_{v: \norm{v}_1 \le C} \norm{x -v}_2$ (Line 5).
We note that the clip function and the projection step at the end are crucial for the sensitivity analysis to get a DP guarantee for the entire algorithm. For subsequent reference, we denote N-IHT$(X,y|s,\epsilon,\delta, M, R, B, \eta, C)$ to be the output of Algorithm \ref{alg:noisy-IHT} on data $(X,y)$ with the tuning parameters as mentioned in Algorithm \ref{alg:noisy-IHT}.

Assume that $\norm{x_i}_{\infty}\leq \xmax$ and $\theta^*\in\bbR^d$ is any point such that  $\norm{\theta^*}_1\leq \bmax$ and $\norm{\theta^*}_0=s^*$. Now, we consider three \textit{good} events. First, we define the event $\cE_1$ as the one satisfying the infimum of minimum sparse eigenvalue, i.e. $\inf_{\theta} \phi_{\min}(Cs^*, \nabla^2 \cL(\theta)) \geq \underline{\kappa}$, and the supremum of maximum sparse eigenvalue, i.e. $\sup_{\theta} \phi_{\max}(Cs^*, \nabla^2 \cL(\theta)) \leq \bar{\kappa}$ (ref. Definition~\ref{def: sparse eigen value}). Here, $\nabla^2 \cL$ is the Hessian computed for the data $(X,\clip_R(y))$. Second, $\cE_2 := \left\{\norm{\nabla \cL(\theta^*)}_{\infty} \leq \bar{g}\right\}$, where $\nabla \cL$ is the corresponding gradient. Finally, $\cE_3 := \left\{\max_{m\in[M]} \bW_m \leq \bar{W}_{(\epsilon,\delta)}\right\}.$
Here, $\bW_m :=\sum_{i\in[s]} \Vert\bw_i^{(m)}\Vert_{\infty}^2 + \Vert\tilde{\bw}_{S^m}^{(m)}\Vert_2^2$ is the total amount of noise injected in the $m$-th iteration, where $\bw_i$ and $\tilde{\bw}$ are the two sources of noise introduced by Peeling $\cP_s$ in Algorithm~\ref{alg: peeling}.
With these events, we derive an upper bound on the $\ell_2$ distance of $\hat{\theta}$, i.e. the output of Algorithm~\ref{alg:noisy-IHT}, from $\theta^*$.

\setlength{\textfloatsep}{6pt}
\begin{algorithm}[t!]
\caption{N-IHT}\label{alg:noisy-IHT}
\begin{algorithmic}[1]
\REQUIRE Data $\bX, \by$, sparsity $s$, privacy level $\epsilon, \delta$, number of iterations $M$, truncation level $R$, noise scale $B$, step-size $\eta$, projection level $C$

\STATE Initialize $\theta^0 \in \bbR^d$.

\FOR{$m = 0, 1, \dots, M-1$}
    \STATE $\theta^{m+0.5} = \theta^m - \eta\nabla \cL(\theta^{m}|X, \clip_R(y))$ 
    \STATE $\tilde{\theta}^{m+1} = \cP_s(\theta^{m+0.5};\epsilon/M, \delta/M,\eta B / n)$. 
    \STATE $\theta^{m+1} = \Pi_C(\tilde{\theta}^{m+1})$.
\ENDFOR

\RETURN \textbf{Return} $\hat{\theta}=\theta^{M}$
\end{algorithmic}
\end{algorithm}

\begin{theorem}\label{thm: general result}
     Let $\eta=1/2\bar{\kappa}, M\ge 28\kappa\log(\bmax^2 n)$, $C\ge \norm{\theta^*}_1$ and $s \gtrsim \kappa^2 s^*$. Then, with probability at least $1-\Pr(\cE_1^c)-\Pr(\cE_2^c)-\Pr(\cE_3^c)$, 
    $$\norm{\theta^M - \theta^*}_2^2\leq \frac{1}{n} + C_1\kappa^2\bar{W}_{(\epsilon,\delta)} + C_2 \frac{\kappa^2}{\underline{\kappa}^2} (s+s^*) \bar{g}^2,$$
    for suitably large constants $C_1, C_2>0$, where the probability measure $\Pr$ is that arising from the joint distribution $\cQ$ and the randomness due to privacy. $\kappa=\bar{\kappa}/\underline{\kappa}$ denotes the condition number of the Hessian of loss $\cL$.
\end{theorem}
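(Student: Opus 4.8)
The plan is to establish a one-step recursion of the form $\norm{\theta^{m+1}-\theta^*}_2 \le \rho\,\norm{\theta^m-\theta^*}_2 + (\text{statistical error}) + (\text{privacy noise})$ with a contraction factor $\rho = 1 - c/\kappa$ bounded away from $1$, and then unroll it across the $M$ iterations. I would carry out the entire argument on the intersection $\cE_1\cap\cE_2\cap\cE_3$, so that the restricted eigenvalue bounds $\underline\kappa,\bar\kappa$, the gradient bound $\bar g$, and the per-iteration noise bound $\bar W_{(\epsilon,\delta)}$ are all simultaneously available. Since the theorem only asserts the estimate with probability at least $1-\Pr(\cE_1^c)-\Pr(\cE_2^c)-\Pr(\cE_3^c)$, a single union bound over these three events handles all of the probabilistic content, and the rest is deterministic.

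The first ingredient is the contraction of the gradient step (Line 3). Setting $S := \supp(\theta^m)\cup\supp(\theta^{m+1})\cup\supp(\theta^*)$, which has cardinality at most $2s+s^*\asymp\kappa^2 s^*$, I would write $\nabla\cL(\theta^m)-\nabla\cL(\theta^*) = \bar H(\theta^m-\theta^*)$ for an averaged Hessian $\bar H$ whose restricted eigenvalues lie in $[\underline\kappa,\bar\kappa]$ on $\cE_1$. With $\eta = 1/(2\bar\kappa)$ the restricted map $I-\eta\bar H$ has operator norm at most $1-\tfrac{1}{2\kappa}$, giving
$$\norm{(\theta^{m+0.5}-\theta^*)_S}_2 \le \big(1-\tfrac{1}{2\kappa}\big)\norm{\theta^m-\theta^*}_2 + \eta\,\norm{(\nabla\cL(\theta^*))_S}_2,$$
and on $\cE_2$ the last term is at most $\eta\sqrt{|S|}\,\bar g \asymp \tfrac{\sqrt{s^*}}{2\underline\kappa}\,\bar g$ (using $\kappa/\bar\kappa=1/\underline\kappa$). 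The projection step (Line 5) is then harmless: since $C\ge\norm{\theta^*}_1$, $\theta^*$ is feasible and $\Pi_C$ is non-expansive, so $\norm{\theta^{m+1}-\theta^*}_2 = \norm{\Pi_C(\tilde\theta^{m+1})-\Pi_C(\theta^*)}_2 \le \norm{\tilde\theta^{m+1}-\theta^*}_2$.

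The crux, and the step I expect to be the main obstacle, is the Peeling hard-thresholding step (Line 4): $\tilde\theta^{m+1}=\cP_s(\theta^{m+0.5})$ is \emph{neither} an exact top-$s$ selection \emph{nor} noiseless, so the classical IHT hard-thresholding lemma does not apply directly. I would prove a refined thresholding bound of the shape
$$\norm{\tilde\theta^{m+1}-\theta^*}_2 \le \big(1+c_0\sqrt{s^*/s}\big)\,\norm{\theta^{m+0.5}-\theta^*}_2 + \sqrt{\bW_m},$$
where the two perturbations introduced by Peeling, namely the selection noise $\bw_i^{(m)}$ (which corrupts \emph{which} $s$ coordinates are retained) and the output noise $\tilde\bw^{(m)}$ (which corrupts the retained values), are both absorbed into the single quantity $\bW_m=\sum_{i\in[s]}\Vert\bw_i^{(m)}\Vert_\infty^2+\Vert\tilde\bw^{(m)}_{S^m}\Vert_2^2 \le \bar W_{(\epsilon,\delta)}$ on $\cE_3$. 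This is exactly the place where $\bW_m$ was defined to make the bookkeeping clean. With the choice $s\asymp\kappa^2 s^*$ we get $\sqrt{s^*/s}\asymp 1/\kappa$, so the inflation factor is $1+O(1/\kappa)$.

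Multiplying the thresholding inflation by the gradient contraction yields $(1+O(1/\kappa))(1-\tfrac{1}{2\kappa}) = 1-c/\kappa =: \rho<1$ once the constant $c_0$ is tamed by the scaling $s\asymp\kappa^2 s^*$, which closes the recursion. Unrolling and using $\sum_{j\ge0}\rho^j \le 1/(1-\rho)\asymp\kappa$ gives
$$\norm{\theta^M-\theta^*}_2 \le \rho^M\norm{\theta^0-\theta^*}_2 + O(\kappa)\sqrt{\bar W_{(\epsilon,\delta)}} + O(\kappa)\cdot\tfrac{\sqrt{s^*}}{\underline\kappa}\,\bar g.$$
The choice $M\ge 28\kappa\log(\bmax^2 n)$ forces $\rho^M\norm{\theta^0-\theta^*}_2$ below $n^{-1/2}$ (since $\norm{\theta^0-\theta^*}_2$ is bounded in terms of $\bmax$), which after squaring contributes the $1/n$ term. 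Squaring the remaining two terms reproduces $C_1\kappa^2\bar W_{(\epsilon,\delta)}$ and $C_2\,\tfrac{\kappa^2}{\underline\kappa^2}s^*\bar g^2$ respectively, matching the statement. The delicate points to verify are (i) the restricted-eigenvalue representation of $\bar H$ is legitimate without assuming a linear model (so a mean-value/integral form of the gradient difference is needed), and (ii) the refined Peeling lemma giving the clean $\sqrt{\bW_m}$ error, which is where the advertised improvement over \citep{cai2021cost} must be made rigorous.
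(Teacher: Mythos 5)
Your proposal follows essentially the same route as the paper's proof: non-expansiveness of $\Pi_C$, an almost-contraction lemma for Peeling with additive error controlled by $\bW_m$ and inflation factor governed by $s^*/s$, a gradient-step contraction via the integral-form averaged Hessian restricted to a small support set under $\cE_1$ (with the statistical term $\sqrt{s+s^*}\,\bar g$ from $\cE_2$), and geometric unrolling with $s\asymp\kappa^2 s^*$ and $M\gtrsim\kappa\log(\bmax^2 n)$ closing the recursion. The only differences are cosmetic (you track $\ell_2$ norms and square at the end, while the paper carries squared norms with Young's inequality throughout), and the two ``delicate points'' you flag are precisely the two lemmas the paper supplies.
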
\vspace*{-.5em}
The proof is detailed in Appendix~\ref{app: proof of general result}. For the squared error loss $\ell((y,x);\theta)=(y-x^\top\theta)^2$ and independent samples, the output of Algorithm~\ref{alg:noisy-IHT} with $B=R+\xmax\bmax$ is $(\epsilon,\delta)$-DP~\citep{cai2021cost}. 
In other cases, one can tune $B$ depending on the sensitivity requirement to obtain DP guarantees.

\textit{Generic Implications.} The theorem is general in the sense that it holds for $(X,y)$ arising from any joint distribution $\cQ$ and $\ell(\cdot; \cdot)$ being any loss function. The Laplace noise (in Peeling) is not mandatory to use this result. The first term in the upper bound goes to $0$ as the sample size increases. The third term, controlled by the behavior of the gradient of the loss at the target point, typically is the statistical bound. The second term is the cost we pay for ensuring $(\epsilon,\delta)$-DP for the output. Clearly, we should not expect $\norm{\theta^M-\theta^*}_2$ to go to 0, for arbitrary $\theta^*$. However, once we make certain assumptions about the model (e.g., linear model where $\theta^*$ is the true data generating parameter), and the loss structure (e.g., squared error), the result would allow us to `plug-in' standard guarantees about control of the good events in this result and obtain specific estimation bounds.

\vspace*{-.3em}\section{FLIPHAT: ALGORITHM DESIGN \& REGRET ANALYSIS}\label{sec: upper bound}\vspace*{-.3em}
In this section, we propose a novel algorithm FLIPHAT for the SLCB setting, which is guaranteed to be $(\epsilon,\delta)$-JDP. 
Then, we derive an upper bound on its expected regret that matches the regret lower bound from Section~\ref{sec: lower bound} up to logarithmic terms.

\vspace*{-.2em}\subsection{Algorithm Design}\vspace*{-.2em}

We present the pseudocode of FLIPHAT in Algorithm \ref{alg: fliphat}. FLIPHAT has three components. 

\textit{First}, we employ the doubling trick~\citep{besson2018doubling}, i.e. dividing the entire time horizon into episodes of geometrically progressive lengths. Specifically, the $\ell-$th episode begins at step $t_{\ell}=2^{\ell}$. At the beginning of each episode, the underlying algorithm restarts, computes the required estimates, and the episode continues for $N_\ell = 2^{\ell+1} - 2^{\ell} = 2^{\ell}$ steps. In non-private bandit, we use doubling trick to make the algorithm `anytime', but for private bandits, it also helps to reduce the number of times for which we have to calculate any private estimates from the data. Since DP is typically achieved by adding scaled Laplace noise with the newly computed estimates~\citep{dwork2014algorithmic}, doubling ensures that we do not add noise for more than a logarithmic number of times. This is important for regret optimality.

\textit{Second}, to ensure DP, we use the \textit{forgetfulness} technique, i.e. at the beginning of an episode, we only use the data collected from the previous episode to generate the estimate. In the current episode, we do not update the estimate further. This \textit{forgetful} strategy (the observed data from an episode is used just once at the start of the next episode and then forgotten) allows us to guarantee that the overall algorithm is private, if the computed estimates are private. 
The main idea is that a change in a reward and context will only affect the estimate calculated in one episode. This allows parallel composition of DP and stops the aggravated accumulation of Laplace noise over time, yielding better regret guarantees.

\textit{Third}, we use N-IHT (Algorithm \ref{alg:noisy-IHT}) as the private sparse linear regression oracle to estimate the linear parameter from the observed samples. 
At the beginning of $(\ell+1)$-th episode (line 8), we feed N-IHT the data $(X_\ell,y_\ell)$ consisting of the chosen contexts and observed rewards from the $\ell$th episode (line 13). Other tuning parameters $(s,\epsilon,\delta,\eta,C)$ are not explicitly mentioned as they stay same for all $\ell$'s. 

\begin{algorithm}[t!]
\caption{FLIPHAT}\label{alg: fliphat}
\begin{algorithmic}[1]
\REQUIRE Sparsity $s$, privacy level $\epsilon, \delta$, step-size $\eta$, projection level $C$, sequence of iteration lengths $\{M_\ell\}$, truncation levels $R_\ell$

\FOR{$\ell=0$:} 
    \STATE Play a random arm $a_1\in[K]$ \STATE Observe $r(1) = x_{a_1}(1)^\top \beta^*+\epsilon(1)$. 
    \STATE Set $X_0=\{x_{a_1}(1)\}$, $y_0=\{r(1)\}$.
\ENDFOR
\FOR{$\ell=1,2,\dots$}
    \STATE Set $B=R_\ell+\xmax\bmax$
    \STATE Estimate $\hat{\beta}_\ell = \text{N-IHT}^*(X_{\ell-1}, y_{\ell-1}|M_\ell, R_\ell, B)$
    \STATE Set $X_\ell = \{\}, y_\ell = \{\}$
    \FOR{$t=t_{\ell}, t_{\ell}+1,\dots, t_{\ell+1}-1$}
        \STATE Play $a_t = \argmax_a x_{a}(t)^\top \hat{\beta}_\ell$
        \STATE Observe reward $r(t) = x_{a_t}(t)^\top\beta^* + \epsilon(t)$
        \STATE Store $X_\ell=X_\ell \cup \{x_{a_t}(t)\}$, $y_{\ell}=y_{\ell}\cup\{r(t)\}$
    \ENDFOR
\ENDFOR
\end{algorithmic}
\end{algorithm}

\begin{theorem}
    \label{thm: fliphat DP}
    FLIPHAT preserves $(\epsilon, \delta)$-JDP.
\end{theorem}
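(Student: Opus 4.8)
The plan is to exploit the \emph{forgetful} episodic structure of FLIPHAT to reduce the joint differential privacy of the entire action stream to the differential privacy of a single N-IHT call, and then close the argument with post-processing. Fix a time index $\tau\in[T]$ and a pair of $\tau$-neighboring datasets $\cD,\cD^\prime$ that agree everywhere except at the $\tau$-th reward-context pair, and let $\ell^*$ be the unique episode containing $\tau$, i.e.\ $t_{\ell^*}\le\tau<t_{\ell^*+1}$. I would first show, by induction on the episode index, that the estimates $\hat\beta_1,\dots,\hat\beta_{\ell^*}$ have distributions that do not depend on $\tau$'s data: each $\hat\beta_\ell$ with $\ell\le\ell^*$ is produced by N-IHT from the data $(X_{\ell-1},y_{\ell-1})$ collected in episode $\ell-1$, whose time indices are strictly smaller than $t_{\ell^*}\le\tau$, and forgetfulness guarantees this data is generated only from earlier estimates and the prefixed contexts/rewards at times $\ne\tau$. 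Coupling the independent Laplace noise used in these early N-IHT calls, I can take $\hat\beta_1,\dots,\hat\beta_{\ell^*}$ to be identical realizations under $\cD$ and $\cD^\prime$; consequently every action in episodes $1,\dots,\ell^*$ other than $a_\tau$ (which Definition~\ref{def: JDP} excludes) coincides across the two runs, since each such $a_t=\argmax_a x_a(t)^\top\hat\beta_\ell$ is a deterministic function of an identical estimate and of contexts that agree at $t\ne\tau$.

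The second step isolates where $\tau$'s data actually enters. With $\hat\beta_{\ell^*}$ fixed and equal and the played actions on episode $\ell^*$ agreeing at every time except $\tau$, the collected data $(X_{\ell^*},y_{\ell^*})$ and $(X'_{\ell^*},y'_{\ell^*})$ have the same size $N_{\ell^*}$ and differ in exactly the single entry $(x_{a_\tau}(\tau),r_{a_\tau}(\tau))$, which is bounded by the standing assumption $\norm{x_i(t)}_\infty\le\xmax$ together with the clipping and projection in Algorithm~\ref{alg:noisy-IHT}. Thus the two inputs form a neighboring pair in the replace-one sense required by the privacy analysis of N-IHT, and the next estimate $\hat\beta_{\ell^*+1}=\text{N-IHT}(X_{\ell^*},y_{\ell^*}\mid\cdot)$ is $(\epsilon,\delta)$-DP: the clip function and the projection $\Pi_C$ supply the bounded sensitivity needed to calibrate the Peeling noise, as recorded after Theorem~\ref{thm: general result}. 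Hence for every measurable $F$ we obtain $\pr(\hat\beta_{\ell^*+1}(\cD)\in F)\le e^\epsilon\pr(\hat\beta_{\ell^*+1}(\cD^\prime)\in F)+\delta$.

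The third step absorbs the cascade. Although $\hat\beta_{\ell^*+1}$ influences the actions of episode $\ell^*+1$, hence $(X_{\ell^*+1},y_{\ell^*+1})$, hence $\hat\beta_{\ell^*+2}$, and so on, each of these downstream quantities is produced from $\hat\beta_{\ell^*+1}$ using only prefixed contexts/rewards at times $\ne\tau$ (identical across $\cD,\cD^\prime$) and fresh, independent privacy noise. I would therefore express the tail $\mathbf a^{(2)}:=(a_t)_{t\ge t_{\ell^*+1}}$ as $g(\hat\beta_{\ell^*+1})$ for a single randomized map $g$ common to both datasets, and combine it with the identically distributed head $\mathbf a^{(1)}:=(a_t)_{t<t_{\ell^*+1},\,t\ne\tau}$. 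The excluded output $\cA_{-\tau}=(\mathbf a^{(1)},\mathbf a^{(2)})$ is then a post-processing of the tuple $(\hat\beta_1,\dots,\hat\beta_{\ell^*+1})$, whose first $\ell^*$ coordinates are coupled to be identical and whose last coordinate is conditionally $(\epsilon,\delta)$-DP; invoking closure of $(\epsilon,\delta)$-DP under post-processing yields the inequality of Definition~\ref{def: JDP}.

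I expect the third step to be the main obstacle: making rigorous that the propagation of a single perturbed estimate through all later episodes incurs \emph{no} additional privacy cost. The two crucial points are (i) forgetfulness ensures $\tau$'s data feeds into exactly one estimate, so no composition across episodes is needed and the full budget $(\epsilon,\delta)$ may be spent on every N-IHT call, and (ii) the adaptive dependence of later episodes on $\hat\beta_{\ell^*+1}$ is pure post-processing precisely because the only fresh inputs are the common prefixed contexts and independent noise. In the write-up I would state the cross-episode noise coupling explicitly and verify the single-entry, equal-size neighboring property of $(X_{\ell^*},y_{\ell^*})$ before invoking the N-IHT guarantee.
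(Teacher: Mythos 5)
Your proof is correct and follows essentially the same route as the paper's: forgetfulness localizes the $\tau$-th user's influence to the single N-IHT call producing $\hat\beta_{\ell^*+1}$, the episodes before (identical estimate distributions) and after (post-processing of $\hat\beta_{\ell^*+1}$ through common prefixed data and fresh noise) contribute no additional privacy loss, and the $(\epsilon,\delta)$-DP of N-IHT on the single-entry-differing episode-$\ell^*$ data closes the argument. The paper formalizes the identical three-case split ($\ell\le\ell_0$, $\ell=\ell_0+1$, $\ell>\ell_0+1$) by writing $\pr(\cA_{-\tau}(\cD)=\ba_{-\tau})$ as a product over episodes for a fixed action sequence rather than via your coupling/post-processing language, but the content is the same.
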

To this end, we point out that FLIPHAT also enjoys the reward-DP \citep{hanna2024differentially}, where the algorithm only ensures the privacy protection of the reward stream. 
In this setup, the adversary is only allowed to perturb the reward stream which is quite a strong restriction on the nature of the adversary. JDP protects against a more general adversary and protects the information in both rewards and contexts. 
\subsection{Regret Analysis}
Now, we present the regret upper bound of FLIPHAT. 
We first briefly discuss the assumptions that we impose on the various model components, which are well accepted in the literature on SLCB~\citep{zhang2008sparsity, zhang2010nearly, bastani2020online, li2021regret, chakraborty2023thompson}. \textit{In regret analysis, we do not require any additional assumption for privacy.}


\begin{assumption}[Assumptions on Context Distributions] \label{assumptions: contexts}
We assume that
\begin{enumerate}[label=(\alph*),leftmargin=*,nosep]
\itemsep0em
    \item \label{item: context_bound} \textit{Boundedness:} $\bbP_{x\sim P_i} (\norm{x}_\infty \leq \xmax) =1$ for all $i\in[K]$ for a constant $\xmax\in \bbR^+$.
    \item \label{item: context_subgaussian} \textit{Bounded Orlicz norm:} For all arms $i \in [K]$ the distribution $P_i$ is bounded in the Orlicz norm, i.e., $\norm{X}_{\psi_2}\leq \vartheta$ for $X\sim P_i$ for all $i\in [K]$. 
    \item \label{item: anti-concentration} \textit{Anti-concentration:} There exists a  constant $\xi\in \bbR^+$ such that for each $u\in \{v\in\bbR^d : \norm{v}_0 \leq Cs^*, \norm{v}_2=1\}$ and $h\in \bbR^+$ 
    $P_i(\innerprod{x, u}^2 \leq h) \leq \xi h,$
    for all $i\in [K]$, ,where $C\in (2,\infty)$.
    \item \label{item: sparse_eigenvalue} \textit{Sparsity:} The matrix $\Sigma_i := \bbE_{x\sim P_i} [x x^\top]$ has bounded maximum sparse eigenvalue, i.e., 
    $
\phi_{\max}(Cs^*, \Sigma_i)
        \leq \phi_u   < \infty,
    $
    for all $i \in [K]$,
     where $C$ is the same constant as in part (c).
\end{enumerate}
\end{assumption}

\begin{assumption}[Assumptions on the true parameter]
\label{assumptions: arm-separation}
We assume the following:
\vspace{-5pt}
\begin{enumerate}[label = (\alph*),leftmargin=*,nosep]
\itemsep0em
    \item \label{item: beta-l1} \textit{Sparsity and soft-sparsity:} $\norm{\beta^*}_0 = s^*$ and $\norm{\beta^*}_1 \leq \bmax$ for constants $s^*\in\bbN, \bmax\in\bbR_+$.
    \item \label{item: margin-cond} \textit{Margin condition:} There exists positive constants $\Delta_*, A$ and $\alpha\in [0, \infty)$, such that for $h \in \left[ A \sqrt{\log(d)/T}, \Delta_*\right]$ and $\forall t\in [T]$, $\pr\left( x_{a^*_t}(t)^\top \beta^* \leq \max_{i \neq a^*_t} x_{i}(t)^\top \beta^* +h\right) \leq  \left(\frac{h}{\Delta^*}\right)^\alpha$
\end{enumerate}
\end{assumption}
\begin{assumption}[Assumption on Noise]\label{assumptions: noise}
We assume that the random variables $\{\epsilon(t)\}_{t\in[T]}$ are independent and also independent of the other processes and each one is $\sigma-$Sub-Gaussian, i.e.,  $\bbE [e^{a \epsilon(t)}] \leq e^{\sigma^2 a^2/2}$  for all $ t \in [T]$ and $a\in \bbR$. \vspace*{-.5em}
\end{assumption}

\textit{Remarks:} In ~\ref{assumptions: contexts}\ref{item: context_subgaussian}, $\norm{X}_{\psi_2}:= \sup_{u: \norm{u}_2\le 1} \norm{u^\top X}_{\psi_2}$ is the Orlicz-norm, where for a random-variable $Z$,  $\norm{Z}_{\psi_2} :=\inf \{\lambda >0 : \bbE \exp(Z^2/\lambda^2) \leq 2\}$. It states that the context distribution for each arm is sub-Gaussian, this assumption does not require the distributions to have zero mean. We remark that while we assume i.i.d. contexts across time, for each time $t$, the contexts across different arms are allowed to be correlated. The zero-mean sub-Gaussian assumption on the noise (Assumption~\ref{assumptions: noise}) is satisfied by various families of distributions, including normal distribution and bounded distributions, which are commonly chosen noise distributions. The margin condition (Assumption~\ref{assumptions: arm-separation}\ref{item: margin-cond}) essentially controls the hardness of the bandit instance, with $\alpha=\infty$ making it easiest due to a deterministic gap between arms, and $\alpha=0$ making it hardest, i.e. no apriori information about arm separation. A detailed discussion on the assumptions is in Appendix~\ref{app: assumptions discussion}.

Next, we derive a bound on estimation error of the parameter $\beta^*$ at any episode $\ell$, which is central to the regret analysis and a consequence of Theorem~\ref{thm: general result}. Note that $(X_\ell, y_\ell)$ are only conditionally i.i.d., and not marginally independent. 

\begin{proposition}[Estimation Control for Episode $\ell$]\label{prop: estimation episode}
    Suppose Assumptions \ref{assumptions: contexts}, \ref{assumptions: arm-separation}(a) and \ref{assumptions: noise} holds. If we set $\eta=1/2\bar{\kappa}, C=\bmax, s\gtrsim \kappa^2s^*, M_{\ell+1} = 28\kappa \log (\bmax^2 N_\ell)$ and $R=\xmax\bmax+\sigma\sqrt{2\log N_\ell}$, we have for all $\ell>1$
    \begin{equation}
        \Vert\hat{\beta}_{\ell+1} - \beta^*\Vert_2^2 \lesssim  \Lambda_{\textsc{non-private}} + \Lambda_{\textsc{private}}
    \end{equation}
    with probability at least $1-c_1\exp(-c_2 \log d) - \Pr(\cE_{1\ell}^c) - 1/N_\ell$, where $\Lambda_{\textsc{non-private}} = \sigma^2\kappa^2 \frac{(s + s^*)\log d}{\underline{\kappa}^2 N_\ell}$, and  $\Lambda_{\textsc{private}} = \sigma^2\kappa^2 \frac{(s\log d)^2\log\left( \frac{\log N_\ell}{\delta}\right)\log^3 N_\ell}{N_\ell^2\epsilon^2}$ 
    are the non-private (statistical) and private terms in the estimation bound. Here,  $\cE_{1\ell}=\{\underline{\kappa}\leq \phi_{\min}(Cs^*, \hat{\Sigma}_\ell)<\phi_{\max}(C s^*, \hat{\Sigma}_\ell) \leq \bar{\kappa}\}$, $\kappa=\bar{\kappa}/\underline{\kappa}$, and $\hat{\Sigma}_\ell = X_{\ell} X_{\ell}^\top / N_{\ell}$ is the covariance of the contexts observed in $\ell$-th episode.
\end{proposition}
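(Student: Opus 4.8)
The plan is to specialize the generic N-IHT guarantee of Theorem~\ref{thm: general result} to the episode-$\ell$ data of FLIPHAT. I would apply Algorithm~\ref{alg:noisy-IHT} to $(X_\ell, y_\ell)$ with the squared-error loss $\ell((y,x);\theta)=(y-x^\top\theta)^2$, so that the sample size is $n=N_\ell$ and the target is $\theta^*=\beta^*$. A preliminary observation that makes the good events tractable is that, because actions in episode $\ell$ are played greedily using $\hat\beta_\ell$, which by the forgetting mechanism depends only on episode-$(\ell-1)$ data, conditionally on $\hat\beta_\ell$ the pairs $(x_{a_t}(t),r(t))$ are i.i.d.\ across $t$ within episode $\ell$; this conditional i.i.d.\ structure is what I use to control $\cE_1,\cE_2,\cE_3$. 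Theorem~\ref{thm: general result} then gives $\norm{\hat\beta_{\ell+1}-\beta^*}_2^2\le \tfrac{1}{N_\ell}+C_1\kappa^2\bar W_{(\epsilon,\delta)}+C_2\tfrac{\kappa^2}{\underline\kappa^2}s^*\bar g^2$ on $\cE_1\cap\cE_2\cap\cE_3$, and it remains to match the good events to the quantities in the statement and to evaluate $\bar g$ and $\bar W_{(\epsilon,\delta)}$ under the chosen tuning.

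For $\cE_1$, the squared loss has constant Hessian $\nabla^2\cL(\theta)=2\hat\Sigma_\ell$, so $\cE_1$ coincides with $\cE_{1\ell}$ up to the harmless factor $2$ absorbed into $\underline\kappa,\bar\kappa$, and I keep $\Pr(\cE_{1\ell}^c)$ explicit exactly as in the statement. For $\cE_2$, the $j$-th coordinate of $\nabla\cL(\beta^*)$ is $\tfrac{2}{N_\ell}\sum_t x_{a_t,j}(t)\bigl(\clip_R(r(t))-x_{a_t}(t)^\top\beta^*\bigr)$. I first show the clip is inactive on every observed reward with probability at least $1-1/N_\ell$: since $|r(t)|\le\xmax\bmax+|\epsilon(t)|$, the choice $R=\xmax\bmax+\sigma\sqrt{2\log N_\ell}$ together with a sub-Gaussian tail bound and a union bound over the $\le N_\ell$ rewards gives this. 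On that event the gradient reduces to $\tfrac{2}{N_\ell}\sum_t x_{a_t}(t)\epsilon(t)$, and a maximal sub-Gaussian inequality with a union bound over the $d$ coordinates yields $\norm{\nabla\cL(\beta^*)}_\infty\le\bar g$ with $\bar g\asymp\sigma\xmax\sqrt{\log d/N_\ell}$, failing with probability at most $c_1\exp(-c_2\log d)$. Substituting $\bar g^2\asymp\sigma^2\log d/N_\ell$ into the third term produces exactly $\Lambda_{\textsc{non-private}}=\sigma^2\kappa^2 s^*\log d/(\underline\kappa^2 N_\ell)$.

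The principal obstacle is the explicit evaluation of $\bar W_{(\epsilon,\delta)}$ for $\cE_3$, i.e.\ bounding $\bW_m=\sum_{i\in[s]}\norm{\bw_i^{(m)}}_\infty^2+\norm{\tilde\bw_{S^m}^{(m)}}_2^2$ uniformly over the $M$ iterations. Each call $\cP_s(\cdot\,;\epsilon/M,\delta/M,\eta B/n)$ injects Laplace noise whose per-coordinate scale, after propagating the sensitivity scale $\eta B/n$ through the advanced-composition split of the budget over the $M$ rounds and the $s$ peeling steps, is $b\asymp\tfrac{\eta B}{N_\ell}\cdot\tfrac{M\sqrt{s\log(M/\delta)}}{\epsilon}$. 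Taking maxima of (at most) $d$ Laplace variables contributes a $\log d$ factor per coordinate, so with high probability $\bW_m\asymp s\,b^2\log^2 d$ for every $m$, the $\tilde\bw$ term being lower order; a union bound over the $M$ iterations costs only an extra logarithmic factor absorbed into $c_1\exp(-c_2\log d)$. Plugging in $\eta=1/(2\bar\kappa)$, $B^2\asymp\sigma^2\log N_\ell$, $M\asymp\kappa\log N_\ell$, $s\asymp\kappa^2 s^*$ and $n=N_\ell$, and collecting the $s^*,d,N_\ell,\epsilon,\delta$ dependence, gives $\bar W_{(\epsilon,\delta)}\asymp\sigma^2(s^*)^2\log^2 d\,\log^3 N_\ell\,\log(\log N_\ell/\delta)/(N_\ell^2\epsilon^2)$ up to powers of $\kappa$, whence $C_1\kappa^2\bar W_{(\epsilon,\delta)}\asymp\Lambda_{\textsc{private}}$; the $\kappa$-powers beyond the displayed $\kappa^2$ are absorbed into the constant (treating the condition number as $O(1)$). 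The delicate points here are correctly tracking the $M$ and $s$ exponents through the composition and confirming the $\log^3 N_\ell$ factor arises jointly from $B^2$ and $M^2$.

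Finally, I would assemble the three contributions and take a union bound over the failure of the clipping event ($1/N_\ell$), the gradient and noise high-probability events ($c_1\exp(-c_2\log d)$), and $\cE_{1\ell}^c$, which yields the stated success probability $1-c_1\exp(-c_2\log d)-\Pr(\cE_{1\ell}^c)-1/N_\ell$ and completes the bound $\norm{\hat\beta_{\ell+1}-\beta^*}_2^2\lesssim\Lambda_{\textsc{non-private}}+\Lambda_{\textsc{private}}$.
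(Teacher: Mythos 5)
Your proposal is correct and follows essentially the same route as the paper: specialize Theorem~\ref{thm: general result} to the episode data, show the clip is inactive with probability $1-1/N_\ell$ via the sub-Gaussian tail of $\epsilon(t)$, bound $\norm{\nabla\cL(\beta^*)}_\infty$ by $\asymp\sigma\xmax\sqrt{\log d/N_\ell}$, and evaluate $\bar W_{(\epsilon,\delta)}$ from the Laplace scale $\eta B M\sqrt{s\log(M/\delta)}/(N_\ell\epsilon)$ with Laplace tail bounds, recovering the $\log^3 N_\ell$ factor from $B^2M^2$. The only cosmetic difference is that you justify the concentration steps via the conditional i.i.d.\ structure within an episode, whereas the paper imports the same bounds from Propositions 3 and B.5 of \citep{chakraborty2023thompson}.
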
\vspace*{-.5em}


Proposition B.5 of \citep{chakraborty2023thompson} allows us to control the term $\bbP(\cE_{1\ell}^c)$ appearing in the above proposition, which is used to derive the regret guarantee for FLIPHAT, as presented below.
\begin{theorem}[Regret bounds for FLIPHAT]\label{thm: regret}
    Suppose Assumption \ref{assumptions: contexts}, \ref{assumptions: arm-separation}, and \ref{assumptions: noise} hold. Then for any privacy level $\epsilon>0$ and $\delta < e^{-1}$, and with the same choices of the tuning parameters as in Proposition \ref{prop: estimation episode} with $\bar{\kappa} \asymp  \log K$ and $\underline{\kappa} \asymp K^{-1}$, FLIPHAT enjoys the following regret bound 
    \[
    \bbE[R(T)] \lesssim  \xmax \bmax(s + s^*) \log d + I_\alpha \vee J_\alpha(\epsilon, \delta),
    \]\vspace*{-.5em}
    where 
    \begin{align*}\vspace*{-.1em}
    &I_\alpha = 
    \begin{cases}
        \frac{1}{\Delta_*^\alpha}\left(\frac{T^{\frac{1-\alpha}{2} - 1}}{1-\alpha}\right)\{(s + s^*)^2 \log d\}^{\frac{1+\alpha}{2}}, & \alpha \in [0, \infty) \\
         (s + s^*)^2 \log d , & \alpha = \infty. \\
    \end{cases}\\
    &J_\alpha(\epsilon, \delta) = \begin{cases}
        \Psi_\alpha(T)\left\{\frac{(s + s^*)^3\log^2 d \log(1/\delta)}{\epsilon^2}\right\}^{\frac{1+\alpha}{2}}, & \alpha \in [0, \infty)\\
        \frac{(s + s^*)^3 \log^2 d \log(1/\delta)}{\epsilon^2}, & \alpha = \infty.       
    \end{cases}
    \end{align*}
    Here, $\Psi_{\alpha}(T) = \frac{1}{\Delta_*^\alpha}\sum\limits_{n=0}^{\floor{\log T}+1} \frac{n^{2\alpha+2}}{2^{n\alpha}}$.
For $\alpha > 0$, $\Psi_{\alpha}(T) \lesssim \frac{\Gamma(3+2\alpha)}{\Delta_*^{\alpha} (\alpha \log 2)^{3+2\alpha}}$, and $\Psi_{0}(T) \lesssim {\log^3 T}$.\vspace*{-.5em}
\end{theorem}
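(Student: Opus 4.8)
The plan is to decompose the cumulative regret into a burn-in term, a per-episode instantaneous regret term, and then to bound the episodic regret by combining the estimation error control from Proposition~\ref{prop: estimation episode} with the margin condition (Assumption~\ref{assumptions: arm-separation}\ref{item: margin-cond}). First I would use the doubling structure: the horizon $[T]$ is partitioned into $O(\log T)$ episodes, where episode $\ell$ has length $N_\ell = 2^\ell$, and within an episode the estimate $\hat\beta_\ell$ is fixed. Because the first episode (and a short burn-in while estimation error is not yet controlled) contributes at most $O(\xmax\bmax\strue\log d)$ to the regret regardless of the quality of the estimate, I would isolate this as the first term $\xmax\bmax\strue\log d$ and focus the remaining analysis on episodes $\ell$ large enough that Proposition~\ref{prop: estimation episode} applies.

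Next I would bound the instantaneous regret at a step $t$ in episode $\ell$. The key observation is that playing the greedy arm $a_t=\argmax_a x_a(t)^\top\hat\beta_\ell$ incurs instantaneous regret $x_{a_t^*}(t)^\top\beta^* - x_{a_t}(t)^\top\beta^*$, and a standard argument shows that a suboptimal arm can only be selected if the gap to the best arm is smaller than a quantity controlled by the estimation error, roughly $\lesssim \|\hat\beta_\ell-\beta^*\|_2$ times a sparse-eigenvalue/context-norm factor (using Assumption~\ref{assumptions: contexts}\ref{item: context_bound} and the $Cs^*$-sparse structure). Writing $\rho_\ell := \|\hat\beta_{\ell}-\beta^*\|_2$, I would integrate the margin condition against this gap threshold: conditional on the good events, the expected per-step regret is of order $\int$ (gap) $\cdot$ (probability that gap lies below the estimation-driven threshold), which by Assumption~\ref{assumptions: arm-separation}\ref{item: margin-cond} contributes a factor $(\rho_\ell/\Delta_*)^{\alpha}\cdot\rho_\ell \asymp \rho_\ell^{1+\alpha}/\Delta_*^\alpha$. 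Plugging in $\rho_\ell^2 \lesssim \Lambda_{\textsc{non-private}}+\Lambda_{\textsc{private}}$ from Proposition~\ref{prop: estimation episode}, the per-episode regret is $N_\ell\cdot\Delta_*^{-\alpha}(\Lambda_{\textsc{non-private}}+\Lambda_{\textsc{private}})^{(1+\alpha)/2}$, which splits additively (up to constants) into a non-private piece and a private piece, giving the $I_\alpha \vee J_\alpha$ structure.

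The remaining work is the geometric summation over episodes. Substituting $\Lambda_{\textsc{non-private}} \asymp \sigma^2\kappa^2 s^*\log d/(\underline\kappa^2 N_\ell)$ yields a per-episode contribution proportional to $N_\ell\cdot(s^*\log d/N_\ell)^{(1+\alpha)/2} = N_\ell^{(1-\alpha)/2}(s^*\log d)^{(1+\alpha)/2}$, and summing the geometric series $\sum_\ell 2^{\ell(1-\alpha)/2}$ over $\ell\le\log T$ reproduces the $I_\alpha$ cases: for $\alpha<1$ the sum is dominated by the last episode giving the $T^{(1-\alpha)/2}$ rate, for $\alpha\ge 1$ (in particular the $\alpha=\infty$ margin) the series converges and the bound becomes $s^{*2}\log d$. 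The private piece uses $\Lambda_{\textsc{private}}\asymp (s^*\log d)^2\log(1/\delta)\,\mathrm{polylog}(N_\ell)/(N_\ell^2\epsilon^2)$; here $\Lambda_{\textsc{private}}^{(1+\alpha)/2}$ scales like $N_\ell^{-(1+\alpha)}$, so after multiplying by $N_\ell$ the summand carries $N_\ell^{-\alpha}$ together with polylogarithmic factors $n^{2\alpha+2}$ (from the $\log^3 N_\ell$ and $\log\log$ terms raised to the $(1+\alpha)/2$ power), which is exactly the series defining $\Psi_\alpha(T)=\Delta_*^{-\alpha}\sum_n n^{2\alpha+2}/2^{n\alpha}$; its convergence for $\alpha>0$ (via the Gamma-function bound) and the $\log^3 T$ bound at $\alpha=0$ follow by elementary estimates. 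I expect the main obstacle to be the instantaneous-regret-to-estimation-error reduction under the margin condition, specifically making the event-conditioning rigorous: one must carefully account for the failure probabilities $\Pr(\cE_{1\ell}^c)$ and the $1/N_\ell$ terms from Proposition~\ref{prop: estimation episode} across all episodes (bounding their contribution to expected regret by the trivial per-step bound $\xmax\bmax$ times the failure probability), and must correctly translate the $\ell_2$ estimation error into a per-arm prediction gap using the sparse-eigenvalue control afforded by Assumption~\ref{assumptions: contexts}\ref{item: sparse_eigenvalue}, since $\hat\beta_\ell-\beta^*$ is only approximately sparse after the projection step $\Pi_C$.
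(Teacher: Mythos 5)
Your proposal follows the same architecture as the paper's proof: a burn-in of the first $L\asymp\log(s^*\log d)$ episodes bounded trivially by $\xmax\bmax 2^L\asymp\xmax\bmax s^*\log d$; the three-term greedy decomposition of the instantaneous regret with the middle term nonpositive; the margin condition applied with threshold $h_t$ proportional to the estimation error to get a per-step bound of order $\phi_{\ell(t)}^{1+\alpha}/\Delta_*^\alpha$; failure probabilities absorbed via the trivial $\xmax\bmax$ per-step bound; and the geometric summation over episodes that splits into the $N_\ell^{(1-\alpha)/2}$ series for the statistical part and the $n^{2\alpha+2}/2^{n\alpha}$ series defining $\Psi_\alpha$ for the private part. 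All of that matches.

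There is, however, one concrete gap at the step you yourself flag as the main obstacle, and it changes the powers of $s^*$. You plug $\rho_\ell^2\lesssim\Lambda_{\textsc{non-private}}+\Lambda_{\textsc{private}}$ directly into the per-step bound, which yields $(s^*\log d)^{(1+\alpha)/2}$ and $(s^{*2}\log^2 d\,\log(1/\delta)/\epsilon^2)^{(1+\alpha)/2}$ --- not the $(s^{*2}\log d)^{(1+\alpha)/2}$ and $(s^{*3}\log^2 d\,\log(1/\delta)/\epsilon^2)^{(1+\alpha)/2}$ appearing in $I_\alpha$ and $J_\alpha$. The paper's conversion is not via the sparse-eigenvalue condition on $\bbE[xx^\top]$ (which controls an expectation, not the realized inner product $x_{a}(t)^\top(\hat\beta_\ell-\beta^*)$ at a single round): it is H\"older's inequality $|x^\top v|\le\norm{x}_\infty\norm{v}_1\le\xmax\sqrt{(C+1)s^*}\,\norm{v}_2$ applied to $v=\hat\beta_\ell-\beta^*$, which is $(s+s^*)$-sparse since N-IHT returns an $s$-sparse iterate. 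This is exactly where the extra factor $s^{*(1+\alpha)/2}$ enters $\phi_{\ell(t)}$, so $\phi_{\ell(t)}^2\asymp s^*(\Lambda_{\textsc{non-private}}+\Lambda_{\textsc{private}})$ rather than $\Lambda_{\textsc{non-private}}+\Lambda_{\textsc{private}}$. Your alternative route (a sub-Gaussian or sparse-eigenvalue control of $x^\top v$ directly in $\norm{v}_2$) would, if made rigorous, actually prove a bound with smaller powers of $s^*$ than the theorem states, but as written it is not justified: you would need a high-probability bound on the realized inner product uniformly over the random, data-dependent direction $v$, which requires an additional conditioning or covering argument not sketched in the proposal. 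With the H\"older step substituted in, the rest of your argument goes through and reproduces the paper's proof.
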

The first term in the upper bound $\xmax\bmax s^*\log d$ is intuitively the unavoidable part of regret, that is incurred during the initial part of the horizon when $\beta^*$ has not been estimated well enough. The second term is essentially two terms - $I_\alpha$ is the non-private statistical part of the regret while $J_\alpha(\epsilon,\delta)$ is the part arising due to the privacy constraint.
The non-private term $I_\alpha$ is similar to the corresponding term in the regret bound for SLCB in \citep{chakraborty2023thompson, li2021regret}. Furthermore, for $\alpha = 0$, the upper bound in Theorem \ref{thm: regret} matches with the problem-independent lower bound in Theorem \ref{thm:lb_edp} in terms of time horizon $T$ (up to logarithmic factors), context dimension $d$ and privacy parameters $\epsilon,\delta$.
Below, we state the implications.


1. \textit{Dependence on $d, s, s^*,T$}: The term in $J(\epsilon, \delta)$ scaling as $\{(s + s^*)^3\log^2 d\}^{(1+\alpha)/2}$ dominates the non-private term $I_\alpha$ (for fixed $\epsilon, \delta$). The dependence on $T$ is trickier: for the non-private part, for $\alpha=0$, the regret scales as $\sqrt{T}$, for $\alpha=1$ as $\log T$ and the effect fades as $\alpha\to\infty$. For the private part $J_\alpha$, as $\alpha\to 0$, $\Psi_\alpha(T)$ can be upper bounded by $\log^3 T$, while the effect fades as $\alpha$ increases.      

2. \textit{Privacy trade-off for fixed $\alpha$}: For any fixed level of $\alpha$, the regret bound explicates the threshold of the privacy parameter $\epsilon$, under which the cost of privacy dominates the non-private term. For example, with $\alpha=0$ (no margin condition), $J_\alpha$ dominates $I_\alpha$ when $\epsilon/\sqrt{\log(1/\delta)} < \log^3 T\sqrt{\{(s+s^*)\log d\}/T}$. Similarly, for $\alpha=\infty$, the private part dominates when $\epsilon/\sqrt{\log(1/\delta)} < \sqrt{(s+s^*)\log d}$. 
     
3. \textit{Privacy vs. internal hardness}: An interesting feature arising in the reget analysis is the mutual struggle between the hardness coming from the bandit instance's arm-separation (through $\alpha$) and privacy constraint (through $\epsilon,\delta$). To understand this issue, let us denote $\bar{\epsilon}(\alpha)$ to be the threshold for $\epsilon$ under which the regret due to privacy dominates. We note that as $\alpha$ increases, $\bar{\epsilon}(\alpha)$ also increases, while keeping $\delta$ fixed. For example, $\bar{\epsilon}(\infty) = \sqrt{(s+s^*)\log d}\sqrt{\log(1/\delta)}$ and $\bar{\epsilon}(0) = \log^3 T\sqrt{\{(s+s^*)\log d\}\log(1/\delta)/T}$. Thus, when the bandit instance is easy, (high $\alpha$), the hardness due to the privacy is dominant for a larger range of $\epsilon$ (analogously high $\bar{\epsilon}(\alpha)$). Conversely, when the bandit instance is itself hard (low $\alpha$), the regret incurred due to this inherent difficulty quickly crosses the hardness due to privacy. As a result, even for smaller $\epsilon$, regret is dominated by the non-private part.

4. \textit{Achieving minimax optimal bound:} 
The regret bound in Theorem \ref{thm: regret} crucially depends on the quantity $(s+s^*)$. Therefore, the choice of $s$ is very important to avoid the curse of dimensionality. In a high-dimensional sparse setting, $s^*$ is typically very small, i.e., $s^* \ll d $. Therefore, setting $s $ in the order of $o(d)$ is also a valid choice, and an improved dependence on $d$ can be achieved. Moreover, under a constant order knowledge of $s^*$, the tuning parameter $s$ can be chosen as $s \asymp \kappa^2 s^*$ which leads to a near-optimal dependence in $s^*$. In addition, numerical experiments in Appendix~\ref{app: numerical s} show that the regret only scales polynomially in $s >s^*$ as shown in Theorem \ref{thm: regret}.

\vspace*{-.5em}\section{NUMERICAL EXPERIMENTS}\vspace*{-.5em}
We present an empirical study of the performance of FLIPHAT through several numerical experiments. All experiments were run locally on Macbook Pro with 16 GB RAM and 1.4 GHz Quad-Core Intel Core i5. 
Both of the experiments are setup in the problem-independent setting $\alpha=0$, and the plots show both the mean and $95\%$-CI for the mean (as colored bands) over 60 repetitions of each experiment. For space constraint, we present two of these experiments here, and postpone the additional details and studies to the Appendix \ref{sec: numerical details appendix}. 

In the first experiment, \textbf{we test the time evolution of regret of FLIPHAT}, and illustrate the regret against time $T$ in Figure \ref{fig: regret}(Left) in the case where $d=400, s^*=5, K=3$ up to $T=20000$ for different choices of privacy parameter $\epsilon\in\{0.5,1,2,5,10\}, \delta=0.01$. We also include \textit{random bandit} (taking random action at each time), which corresponds to $\epsilon=0$ (full privacy, no effective utility), and also the sparsity agnostic bandit \citep{oh2021sparsity}, which corresponds to $\epsilon\to\infty$, i.e. without any privacy guarantee, as the baselines. The contexts are drawn i.i.d. from $\cN(0,\Sigma)$, where $\Sigma_{ij}=0.1^{|i-j|}$ (autoregressive design), with observation noise being a centered Gaussian with $\sigma=0.1$. For the second experiment, \textbf{we illustrate the effect of context dimension $d$ on the regret suffered by FLIPHAT} at $T=10000$ for 4 different choices of $\epsilon$ (Figure \ref{fig: regret}-Right) in a setting similar to the first experiment, while $d$ takes 12 equi-spaced values from 400 to 4000.

\textbf{Results.} Figure~\ref{fig: regret} (Left) demonstrates the sublinear behavior of FLIPHAT's regret that approaches the sparsity agnostic bandit's regret for larger $\epsilon$. Figure~\ref{fig: regret} (Right) shows that $R(T)$ with $T=10000$ is nearly linear in $\log d$, where the slope $\beta$ (in the legend) is computed by fitting a linear model. This matches with the theoretical results in Theorem~\ref{thm: regret}, i.e., the cost of privacy in FLIPHAT scales as $O(\log d)$. 

\begin{figure}[t!]
\centering  \vspace*{-.5em}
  \includegraphics[width=\columnwidth]{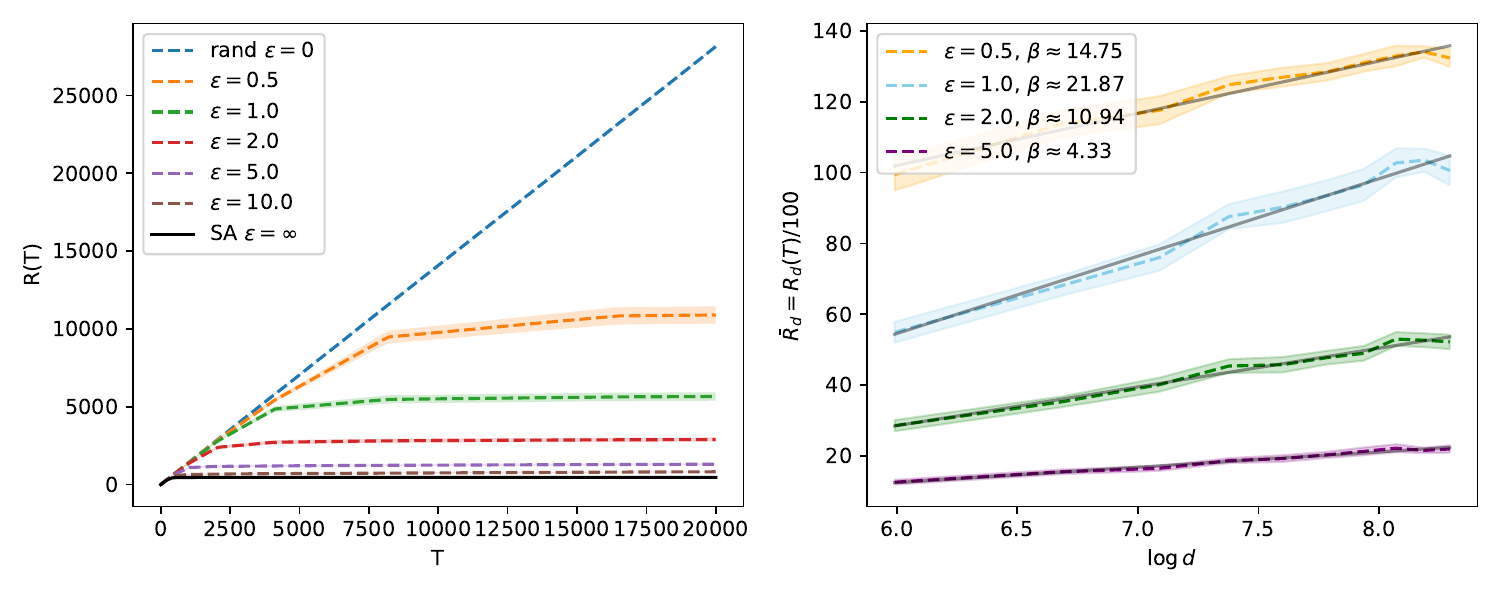}\vspace*{-2em}
  \caption{(Left) Regret vs $T$ for different privacy level $\epsilon$, (Right) Regret at $T=10000$ vs $\log(d)$ for different $\epsilon$}\label{fig: regret}
\end{figure}
 
 
\vspace*{-.5em}\section{CONCLUSION}\vspace*{-.5em}
In this paper, we study the SLCBs under JDP constraints. 
We establish two minimax regret lower bounds under JDP settings, demonstrating phase transition between the hardness of the problem depending on the privacy level.
We propose FLIPHAT, which is the first algorithm to achieves sharp regret-privacy trade-off in SLCBs. FLIPHAT is based on the N-IHT algorithm, for which we propose a refined convergence analysis under a general setting than the existing literature.


An interesting direction would be to study FLIPHAT under an adversarial sparse LCB setting. It would also be an interesting future study to explore the lower bound using the margin condition and obtain bounds dependent on the parameter $\alpha$ in the margin condition.


\bibliography{neurips/ref}
\bibliographystyle{apalike}



\appendix

\appendix
\onecolumn
\section{\uppercase{Background on differential privacy}}\label{section: background_privacy}
In this section we will formally define the notion of differential privacy. On a high level, differential privacy requires the output of a randomized procedure to be robust against small perturbation in the input dataset, i.e., an attacker can hardly recover the presence or absence of an individual in the dataset based on the output only. 

Consider a dateset $D = \{z_1, \ldots, z_n\} \in \cZ^n$ consisting of $n$ data points in the sample space $\cZ$. A \emph{randomized} algorithm $\cA$ takes the dataset $D$ as input and maps it to $\cA(D) \in \cO$, an output space. Two datasets $D$ and $D^\prime$ are said to be \emph{neighbors} if they differ only in one entry.

\begin{definition}[\cite{dwork2006differential}]
    \label{def: DP}
    Given the privacy parameters $(\epsilon, \delta)$, a randomized algorithm is said to satisfy the $(\epsilon, \delta)$-DP if 
    \[
    \pr (\cA(D) \in \cO) \le e^\epsilon\pr(\cA(D^\prime) \in \cO) + \delta
    \]
    for any measurable set $\cO \in range(\cA)$ and any neighboring datasets $D$ and $D^\prime$.
    
\end{definition}
The probability in the above definition is only with respect to the randomization in $\cA$, and it does not impose any condition on the distribution of $D$ or $D^\prime$. For small choices of $\epsilon$ and $\delta$, Definition \ref{def: DP} essentially says that the distribution of $\cA(D)$ and $\cA(D^\prime)$ are nearly indistinguishable form each other for al choices of $D$ and $D^\prime$. This guarantees
strong privacy against an attacker by masking the presence or absence of a particular individual
in the dataset. 

There are several privacy preserving mechanisms that are often the building blocks of other complicated DP mechanisms. A few popular examples include the Laplace mechanism \citep{dwork2006calibrating}, Gaussian mechanism \citep{dwork2006our}, and Exponential mechanism \citep{mcsherry2007mechanism}. We only provide the details of the first technique, since the
other two techniques are out-of-scope for the methods and
experiments in this paper.

\paragraph{Laplace mechanism:} Let $f: \cZ^n \to \bbR^d$ be a deterministic map. The \emph{Laplace mechanism} preserved privacy by perturbing the output $f(D)$ with noise generated from the Laplace density $\Lap(\lambda)$, whose density is $(2 \lambda)^{-1} \exp( - \abs{x}/ \lambda)$. The scaling parameter $\lambda$ is chosen based on the sensitivity of the function $f$, which is defined as 
\[
\Delta f := \sup_{D, D^\prime : \text{$D, D^\prime$ are neighbors}}\norm{f(D) - f(D^\prime)}_1.
\]

In particular, for any deterministic map $f$ with $\Delta f< \infty$, the mechanism that, on database $D$, adds independent draws from $\Lap(\Delta f/\epsilon)$ to each of the $m$ components of $f(D)$, is $(\epsilon, 0)$-DP.

The Laplace mechanism is also an important building block of the well-known \emph{peeling} algorithm (Algorithm \ref{alg: peeling}) which is used for private top-$s$ selection problem \citep{dwork2021differentially}.
We denote the output of peeling applied to a vector $v \in \bbR^d$ as $\cP_s(v ;\epsilon,\delta, \lambda)$, where $\epsilon, \delta$ are privacy parameters and $\lambda$ is the Laplace noise scale parameter. For $v\in \bbR^d$, and an index set $S\subseteq[d]$, $v_S\in\bbR^d$ denotes the vector such that $v_S$ matches $v$ on indices in $S$ and is 0 elsewhere. Therefore, peeling returns a $s$-sparse vector in $\bbR^d$. 
\begin{algorithm}
\caption{Peeling \citep{dwork2021differentially}}\label{alg: peeling}
\begin{algorithmic}[1]
\REQUIRE Vector $v\in\bbR^d$, sparsity $s$, privacy level $\epsilon, \delta$, Laplace noise scale $\lambda$

\STATE \textbf{Initialize}: $S=\emptyset$ and set $\xi = \lambda \frac{2\sqrt{3s\log (1/\delta)}}{\epsilon}$
\FOR{$i = 1, 2,\dots, s$:}
    \STATE Generate $\bw_i = (w_{i1},\dots,w_{id})\overset{i.i.d.}{\sim}\Lap(\xi)$ 
    \STATE $j_i^* = \argmax_{j\in [d]\setminus S} |v_j| + w_{ij}$
    \STATE Update $S\leftarrow S \cup \{j_i^*\}$
\ENDFOR
\STATE Set $\tilde{P}_s(v) = v_{S}$
\STATE Generate $\tilde{\bw} = (\tilde{w}_1,\dots,\tilde{w}_d)\overset{i.i.d.}{\sim} \Lap(\xi)$
\STATE \textbf{Return} $\cP_s(v; \epsilon, \delta, \lambda):=\tilde{P}_s(v) + \tilde{\bw}_S$
\end{algorithmic}
\end{algorithm}

The following lemma shows that peeling is $(\epsilon, \delta)$-DP.
\begin{lemma}[\cite{dwork2021differentially}]
    \label{lemma: peeling DP}
    If for every pair of adjacent datasets $D, D^\prime$, we have $\norm{v(D) - v(D^\prime)}_\infty \le \lambda$, then \textit{Peeling}  \ref{alg: peeling} is $(\epsilon, \delta)$-DP. 
\end{lemma}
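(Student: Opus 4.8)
The statement is the standard privacy guarantee for the Peeling mechanism (Algorithm~\ref{alg: peeling}), and the plan is to decompose the mechanism into a sequence of noisy-selection steps followed by a noisy value-release step, certify each piece as a textbook differentially private primitive, and stitch them together by adaptive composition. The only property of the data I use is the hypothesis $\norm{v(D)-v(D')}_\infty \le \lambda$ for neighboring $D,D'$. By the reverse triangle inequality this forces each score $|v_j|$ to have $\ell_\infty$-sensitivity at most $\lambda$, and likewise each coordinate of the value vector $v_S$ to have sensitivity at most $\lambda$; this is precisely the input needed to scale the Laplace noise.

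First I would treat the $i$-th selection step, which computes $j_i^* = \argmax_{j\in[d]\setminus S} |v_j| + w_{ij}$ with $w_{ij}\overset{\iid}{\sim}\Lap(\xi)$. This is exactly the report-noisy-max mechanism applied to the score vector $(|v_j|)_j$ with Laplace noise of scale $\xi$; since the scores have sensitivity at most $\lambda$, a single such step is $\epsilon_0$-DP with $\epsilon_0 \asymp \lambda/\xi$ (the factor in front coming from the standard non-monotone report-noisy-max bound). The final release adds independent $\Lap(\xi)$ draws to the $s$ selected coordinates; this is equivalent to $s$ per-coordinate Laplace releases, each of sensitivity $\lambda$ and scale $\xi$, hence each individually $O(\lambda/\xi)$-DP.

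Next I would combine these primitives. Because the support $S$, and hence the later scores, depend on the outcomes of the earlier steps, the combination must be treated as an \emph{adaptive} composition of roughly $2s$ mechanisms, each $\epsilon_0 = O(\lambda/\xi)$-DP. Invoking the advanced composition theorem then yields joint $(\epsilon',\delta')$-DP with $\epsilon' \approx \sqrt{2\cdot(2s)\log(1/\delta')}\,\epsilon_0$. Substituting the prescribed scale $\xi = \lambda\cdot 2\sqrt{3 s\log(1/\delta)}/\eps$ gives $\epsilon_0 \asymp \eps/\sqrt{s\log(1/\delta)}$, so the accumulated budget over the $\asymp 2s$ rounds collapses to $\epsilon' \le \eps$ while the composition slack is absorbed into the $\delta$ term. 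The constant $2\sqrt{3}$ in $\xi$ is chosen exactly so that this bookkeeping closes at $(\eps,\delta)$ rather than merely at an order-correct bound.

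The main obstacle is the data-dependence and adaptivity of the selected support $S$: the value-release step cannot be analyzed by conditioning on a fixed $S$, since $S$ is itself a function of the private data and is correlated with the selection noise. Handling this rigorously requires treating the argmax indices and the released values as a single adaptive output sequence and controlling the privacy-loss random variable across all rounds jointly, rather than arguing step-by-step in isolation. A cleaner alternative is to realize report-noisy-max as an instance of the exponential mechanism and to compose exponential mechanisms directly; either route works, but in both the delicate part is the constant tracking needed to confirm that the stated noise scale $\xi$ delivers exactly the claimed $(\eps,\delta)$ guarantee.
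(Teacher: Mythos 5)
The paper does not actually prove this lemma: it is imported verbatim, with citation, from \cite{dwork2021differentially} (see also Lemma A.2 of \citep{cai2021cost}), so there is no in-paper argument to compare against. Your sketch reconstructs exactly the argument of the cited source: each iteration of the for-loop is report-noisy-max on the scores $|v_j|$ restricted to $[d]\setminus S$ (sensitivity $\le\lambda$ by the reverse triangle inequality, hence $(2\lambda/\xi)$-DP per round for non-monotone queries), the final perturbation is $s$ Laplace releases each $(\lambda/\xi)$-DP, and the whole thing is an adaptive composition of $2s$ mechanisms closed by advanced composition with the noise scale $\xi=2\lambda\sqrt{3s\log(1/\delta)}/\epsilon$. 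That is the right proof and the right decomposition.

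Two small points. First, the "main obstacle" you flag --- that the value-release step cannot be analyzed conditionally on $S$ because $S$ is data-dependent --- is not actually an obstacle: adaptive composition is stated precisely for sequences of mechanisms whose identity (here, which coordinates get noised) is a function of the previous outputs, so conditioning on the transcript of selected indices is legitimate and is how the cited proof proceeds. Second, your homogeneous accounting, which charges all $2s$ rounds at the selection-step rate $\epsilon_0\asymp 2\lambda/\xi=\epsilon/\sqrt{3s\log(1/\delta)}$, gives a leading term $\sqrt{2\cdot 2s\log(1/\delta)}\,\epsilon_0=2\epsilon/\sqrt{3}>\epsilon$, so the bookkeeping does not quite close as written. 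The fix is to use the heterogeneous form of advanced composition, $\sqrt{2\log(1/\delta)\sum_i\epsilon_i^2}$, exploiting that the $s$ value-release steps cost only $\lambda/\xi$ (half the selection budget); then $\sum_i\epsilon_i^2=5s\lambda^2/\xi^2$ and the leading term is $\epsilon\sqrt{5/6}<\epsilon$, leaving slack for the second-order term. With that adjustment your argument is complete and is the proof the paper is relying on.
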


Lemma \ref{lemma: peeling DP} gives the privacy guarantees for the Peeling algorithm as described above. We note that two sources of noise are injected through the peeling process -- $\bw_i$ during the \textit{for loop}, and $\tilde{\bw}$ at the end. 

\section{\uppercase{Discussion on Assumptions}}\label{app: assumptions discussion}
Assumption \ref{assumptions: contexts}\ref{item: context_bound} about bounded contexts are standard in the bandit literature to obtain regret bounds independent of the scaling of the contexts. Assumption \ref{assumptions: contexts}\ref{item: context_subgaussian} says that the context distribution for each arm is a sub-Gaussian distribution with parameter $\vartheta$, this assumption does not require the distributions to have zero mean. This is indeed a very mild assumption on the context distribution and a broad class of distributions, such as truncated multivariate Gaussian and uniform on a compact set, enjoys such property. In comparison, several works in the literature on sparse contextual bandits \citep{oh2021sparsity, kim2019doubly, li2021regret} assume bounded $L_2$ norm, i.e. $\norm{X}_2\leq L$, which in the high-dimensional setting is much stronger. Indeed $\norm{X}_2\leq L$ implies $\norm{X}_{\psi_2}\leq (L/\log 2)^{1/2}$ and in the example of truncated normal, $\norm{X}_2 = \Theta(\sqrt{d})$.
 Assumption \ref{assumptions: contexts}\ref{item: anti-concentration} is an anti-concentration condition that ensures that the directions of the arms are well spread across every direction, thereby assisting in the estimation accuracy for $\beta^*$. Recent works \citep{li2021regret,chakraborty2023thompson} use similar assumptions to establish sharp estimation rates for $\beta^*$ under bandit settings.
 Assumption \ref{assumptions: contexts}\ref{item: sparse_eigenvalue} imposes an upper bound on the maximum sparse eigenvalue of $\Sigma_i$ which is a common assumption in high-dimensional literature \citep{zhang2008sparsity,zhang2010nearly}. Below we define the minimum and maximum sparse eigenvalue conditions formally.
 \begin{definition}[Sparse Riesz Condition]
\label{def: sparse eigen value}
For a $d\times d$ positive semi-definite matrix $A$, its maximum and minimum sparse eigenvalues with parameter $s\in [d]$ are defined as 
\begin{align}
    \phi_{\min}(s; A):= \inf_{u: u \neq 0, \norm{u}_0 \leq s} u^\top A u /\norm{u}_2^2, \quad \phi_{\max}(s; A) := \sup_{u: u \neq 0, \norm{u}_0 \leq s} u^\top A u/\norm{u}_2^2\,.
\end{align}
\end{definition}
 
For the assumptions on the parameter, boundedness of $\beta^*$ ensures that the final regret bound is scale free and is also a standard assumption in the bandit literature \citep{bastani2020online, abbasi2011improved, chakraborty2023thompson}.

The margin condition in the second part essentially controls the probability of the optimal arm falling into an $h$-neighborhood of the sub-optimal arms. As $\alpha$ increases, the margin condition becomes stronger. As an illustration, consider the two extreme cases $\alpha = \infty$ and $\alpha = 0$. The $\alpha = \infty$ case enforces a deterministic gap between rewards corresponding to the optimal arm and sub-optimal arms. This is the same as the ``gap assumption'' in \cite{abbasi2011improved}. Thus, it is easy for any bandit policy to recognize the optimal arm, which is reflected in the poly-logarithmic dependence on horizon $T$ in the regret bound of Theorem 5 in \cite{abbasi2011improved}. In contrast, $\alpha = 0$ corresponds to the case when there is no apriori information about the separation between the arms, and as a consequence, we pay the price in regret bound by a $\sqrt{T}$ term \citep{hao2021information, agrawal2013thompson, chu2011contextual}. Assumption \ref{assumptions: arm-separation}\ref{item: margin-cond} is also considered in \cite{li2021regret, chakraborty2023thompson}, and more details can be found therein.

\section{\uppercase{Regret Lower Bound: Details from Section}~\ref{sec: lower bound}}
\label{app: lower bound proof new}
Before getting into the proof, we discuss some additional details about the probability structure in the bandit environment, due to the randomness coming from the (i) bandit instance (through contexts and rewards) and (ii) policy (randomness due to privacy). 

Let $\cB_{\beta^*}$ denote the space of joint distributions $P_{\bX,\br}$ of contexts and rewards up to time $T$ when the underlying parameter is $\beta^*$. In particular, 
\begin{align*}
    P_{\bX,\br} = \otimes_{t\in[T]} P_{\bX_t, \br_t}.
\end{align*}
Here, $P_{\bX_t, \br_t}$ is the joint distribution of the contexts $(x_1(t),\dots,x_K(t))\in\bbR^{Kd}$ and rewards $\br_t=(r_1(t),\dots,r_K(t))\in\bbR^K$ at time $t$, induced through $(x_1(t),\dots,x_K(t))\sim P_{\bX}$ (with the marginal for $x_i(t)$ being $P_i$) and conditional on $x_i(t)$, $r_i(t)=x_i(t)^\top\beta^* + \epsilon_i(t)$. $\epsilon_i(t)\sim P_{\epsilon}$ independently for all $i,t$. A particular choice of $\beta^*, P_{\bX}$ and $P_{\epsilon}$ lead to a particular bandit instance $P_{\bX,\br}\in \cB_{\beta^*}$. { {To control for the scaling in the contexts and the conditional rewards, we assume that for $(x_1,\dots,x_K)\sim P_{\bX}$, the marginal covariance of each $x_k$ has eigenvalues bounded by $\sigma_{X,\min}^2$ and $\sigma_{X,\max}^2$. We also assume that $P_{\epsilon}$ is a sub-Gaussian distribution with variance-proxy $\sigma^2$. The dependence on these parameters will be denoted by $\cB_{\beta^*,\sigma_X,\sigma}$ in the space of joint distributions of contexts and rewards.}}. However, given a particular bandit instance, the learner does not get to observe all this data. This is where the policy comes in.

A policy $\pi$ is defined by a sequence of randomized decisions $a^{\pi}_t:\bH_{t-1}^{\pi}\times \bX_t \to \Delta([K])$, where $\bH_{t-1}^{\pi}=\{x_{a^\pi_s}(s), r_{a_s^\pi}(s): 1\leq s <t\}$ is the observed history and $\Delta([K])$ is the space of probability distributions on $[K]$ (the action set). When the policy is clear from context, it is often abbreviated to saying that $a_t$ is the (random) \textit{action} taken at time $t$, depending on the history (which also depends on the policy-induced actions at previous time points) and the current contexts. The stochasticity in $a_t^{\pi}$ could be from the randomness required for exploration, or privacy concerns. The expectation in the expected regret $\bbE [R_{\pi}(T)]$ refers to the expectation taking with respect to the randomness arising due to the interaction of the bandit instance $P_{\bX,\br}$ and policy $\pi$ and the minimax regret for JDP-SLCB problem is
$$R^{\mathrm{minimax}}_{(\epsilon,\delta)}(T) = \inf_{\pi\in \Pi(\epsilon,\delta)} \sup_{P_{\bX,\br}\in \cB_{\sigma_X,\sigma}} \bbE[R_{\pi}(T)]$$
where the infimum is with respect to all $(\epsilon,\delta)-$JDP policies $\pi$ and the supremum is with respect to $\cB_{\sigma_X,\sigma}=\cup \{\cB_{\beta^*,\sigma_X,\sigma}: \beta^* \text{ satisfies Assumption~\ref{assumptions: arm-separation}\ref{item: beta-l1}}\}$, i.e. all SLCB instances with the unknown parameter having sparsity $s^*$. { {We denote the regret minimax bound for $\epsilon-$JDP policy by $R^{\text{minimax}}_{\epsilon}$, where the infimum is taken over all $\epsilon-$JDP policies. Note that under this definition, the minimax regret also depends on the noise scaling $\sigma^2$ and the contexts' marginal scaling controlled by $\sigma_X^2$. To ensure that $\bbE\norm{x}_{\infty} \asymp 1$, we take $\sigma_X=1/\sqrt{\log d}$.}}

\paragraph{Step 1: Choosing Hard Instances}

Now, we choose several hard instances $P_{\bX,\br}\in \cB_{\sigma_X,\sigma}$ as follows -- consider a 2-armed SLCB (i.e., $K=2$), with $\beta^*\in \tilde{\bB}=\{\beta\in\bbR^d\mid \beta_i\in \{-r,0,r\}, \norm{\beta}_0=s^*\}$ where $s^*$ is the assumed true sparsity level and $r>0$ will be specified later. Then $|\tilde{\bB}|={{d}\choose{s^*}}2^{s^*}$. Let
$$\delta(t) = \{\delta \mid \norm{\beta-\beta'}_2\geq \delta, \text{ for all } \beta,\beta'\in \tilde{\bB} \text{ s.t. } d_{H}(\beta,\beta')> t\},$$
where $d_H$ is the Hamming distance. Considering a $t-$packing of $\tilde{\bB}$ (in the Hamming distance) following \cite{duchi2013distance}, this leads to a $\delta(t)-$packing in the $\ell_2$ distance. In our case, with the given construction of $\tilde{\bB}$, this leads to $\delta(t)>\max\{1, \sqrt{t}\}r$. Let $\bB^*=\{\beta_1,\dots,\beta_M\}$ be the elements in this $\delta(t)$ packing with $t=\ceil{s^*/4}$, where $\log M\geq cs^*\log (d/s^*)$ for some numerical constant $c$. Moreover, for this choice of $t$, we have for any $\beta,\beta'\in\bB^*$, $\norm{\beta-\beta'}^2_2 \geq \max\{1,s^*/4\} r^2 =:\alpha$ (by construction of the packing). For the contexts, we consider $P_{\bX}=\cN(\boldsymbol{0}, \sigma_X^2I_d)\otimes \cN(\boldsymbol{0}, \sigma_X^2I_d)$ (independent multivariate standard normal for both arms). Let $z_t=x_1(t)-x_2(t)$ for each $t$. Lastly, let $\nu$ denote the uniform distribution on $\bB^*$.

\paragraph{Step 2: Bounds on distances between context-rewards under different true parameters}

Denote the space of bandit instances using $\beta^*\in \bB^*$ and contexts as constructed above as $\tilde{\cB}$ and for each $\beta\in \tilde{\cB}$, let $p_t(\cdot|\beta)$ denote the distribution of \textit{all} the contexts and rewards till time $t$ under true parameter $\beta$. We first establish upper bounds between the KL divergence and total variation distances between $p_t(\cdot|\beta)$ and $p_t(\cdot|\beta')$ for $\beta,\beta'\in\bB^*$. These upper bounds will be used while applying differentially private Fano's inequalities. 

 On the other hand, the data consists of $\{X_j, r_j\}$ for $j\in [Kt]$ { {(recall that the data consists of all rewards and contexts till time $t$ irrespective of the previously chosen actions -- hence, the data consists of i.i.d. observations)}}. The KL can be bounded as
\begin{align*}
    \KL(p(\cdot|\beta)\Vert p(\cdot|\beta')) &= Kt \,\bbE_{x\sim N(0, \sigma_X^2 I)}\KL\left(\cN(x^\top\beta, \sigma^2) \Vert \cN(x^\top\beta', \sigma^2\right) \\
    &= Kt \bbE_{x\sim N(0, \sigma_X^2 I)} \left[\frac{(x^\top(\beta-\beta'))^2}{2\sigma^2}\right] \\
    &= \frac{Kt \sigma_X^2 \norm{\beta-\beta'}_2^2}{2\sigma^2} \leq \frac{2Kt\sigma_X^2 s^* r^2}{\sigma^2}=:\gamma.
\end{align*}
The last inequality arises since the worst distance can occur if $\beta,\beta'$ has the same support but opposing signs (then $\norm{\beta-\beta'}_2^2 = 4s^*r^2$).

{ {Finally, the total variation distance can be bounded using Pinsker and Jensen's inequality as follows
\begin{align*}
    \TV(p(\cdot|\beta), p(\cdot|\beta')) &\leq Kt \bbE_x\TV(\cN(x^\top\beta,\sigma^2 ), \cN(x^\top\beta', \sigma^2)) \\
    &\leq Kt \sqrt{\frac{1}{2} \bbE_x \KL(\cN(x^\top\beta,\sigma^2 ), \cN(x^\top\beta', \sigma^2))} \\
    &\leq Kt \frac{\sigma_X}{\sigma}\sqrt{s^*}r =: D.
\end{align*}}}

\paragraph{Step 3: Reduction to estimation lower bound}
Given a bandit instance from the above class, suppose $\pi$ is any bandit policy. Define the $\pi-$induced estimator in period $t$, $\beta^{\pi}_t$ to be the maximizer of
\begin{align}\label{eq: maximizer beta pi hat}
    f^{\pi}_t(\beta):=P(a^{\pi}_t=1, z_t^\top \beta>0|\bH^{\pi}_{t-1}) + P(a^{\pi}_t=2, z_t^\top \beta\leq 0|\bH^{\pi}_{t-1})
\end{align}
where the maximization is taken over all DP (random) maps $\beta:\bH^{\pi}_{t-1}\to \cR(\bbR^d)$, the space of all mappings of the history till time $t-1$ to a random variable taking values in $\bbR^d$ that are $(\epsilon,\delta)-$DP with respect to the history. Let $\hat{\Theta}_t^{\epsilon, \delta}$ denote this space of DP random maps  { {(the notion of DP could also be $\epsilon-$DP, for which we denote the space of DP maps as $\hat{\Theta}_t^{\epsilon}$)}}. Note that the probability $P$ in the above display is with respect to the random action $a^{\pi}_t$ and the (random) estimator $\beta$. Now, since we are only interested in JDP algorithms, let $\pi$ be any $(\epsilon,\delta)-$JDP bandit policy. We note that the degenerate maps $\beta^*:\bH^{\pi}_t \mapsto \delta_{\beta^*}$ (abusing notation slightly, viewing $\beta^*\in\bbR^d$ equivalent to the map which takes any history to the degenerate distribution at $\beta^*$) are $(\epsilon,\delta)-$DP with respect to the data $\bH^{\pi}_t$ for any $\beta^*$ --hence, for any $\beta^*\in\bB^*$, the corresponding map $\beta^*\in \hat{\Theta}_t^{\epsilon, \delta}$. Since $\beta^{\pi}_t$ and $\beta^*$ are both in $\hat{\Theta}_t^{\epsilon, \delta}$, Lemma 3.1 in \citep{he2022reduction} applies and consequently, Theorem 3.2 in \citep{he2022reduction} can be used to conclude that
\begin{align}
    \inf_{\pi\in\Pi}\sup_{P_{\bX,\br}\in\cB}\bbE[R_\pi(T)] \geq \inf_{\pi\in\Pi}\sup_{P_{\bX,\br}\in \tilde{\cB}} \bbE[R_{\pi}(T)] =\Omega\left(\frac{1}{r\sqrt{s^*}}\sum_{t=1}^T \inf_{\beta_t\in \hat{\Theta}_t^{\epsilon, \delta}} \bbE_{\nu}\left[\norm{\beta_t - \beta^*}_2^2\right]\right).
\end{align}
In the above, $\bbE_{\nu}[\norm{\beta_t-\beta^*}_2^2]$ is the Bayesian 2-norm risk, where the expectation is taken jointly with respect to the randomness in both $\beta_t$ and $\beta^*\sim \nu$. We remark that this technique does not require a policy to have a specific form that depends on DP estimators sequentially.

Now, we denote by $\Theta_t^{\epsilon, \delta}$ the collection of $(\epsilon, \delta)$-DP estimators with respect oto $\bH_t:=\{x_a(s), r_a(s): s\in [t], a\in\{1,2\}\}$. Therefore, trivially we have $\bH_t^\pi \subseteq \bH_t$. Also, quite crucially, for an $\beta_t \in \hat{\Theta}_t^{\epsilon, \delta}$, we know that it sis only dependent on $\bH_t^\pi$ and is also $(\epsilon,\delta)$-DP with respect to $\bH_t^\pi$. This suggests that $\beta_t$ is also $(\epsilon,\delta)$-DP with respect to the input data $\bH_t$. To see this, let $\bH_t$ and $\bH_t^\prime$ be two neighboring datasets. There are only two possibilities in this case: (i) $\bH_t^\pi = \bH_t^{\prime \pi}$, or (ii) $d_{H}(\bH_t^\pi, \bH_t^{\prime \pi}) = 2$.
\begin{itemize}
    \item \textbf{Case (i):} Since $\beta_t$ only depends on $\bH_t^\pi$ (or $\bH_t^{\prime\pi}$), we can conclude
    \begin{align*}
        \pr(\beta(\bH_t) \in O)&= \pr(\beta_t(\bH_t^\pi) \in O)\\
        & = \pr(\beta_t(\bH_t^{\prime \pi})\in O)\\
        & = \pr(\beta_t(\bH_t^{\prime})\in O).
    \end{align*}

    \item \textbf{Case (ii):} In this case $\bH_t$ and $\bH_t^{\prime \pi}$ are neighboring datasets. Therefore, we have
     \begin{align*}
        \pr(\beta(\bH_t) \in O)&= \pr(\beta_t(\bH_t^\pi) \in O)\\
        & \le e^\epsilon\pr(\beta_t(\bH_t^{\prime \pi})\in O) + \delta\\
        & = e^\epsilon\pr(\beta_t(\bH_t^{\prime})\in O) + \delta.
    \end{align*}
\end{itemize}
The above argument essentially shows that $\hat{\Theta}_t^\pi \subseteq \Theta_t$. This immediately shows that 
\begin{align*}
\inf_{\beta_t \in \hat{\Theta}_t^{\epsilon, \delta}} \bbE_\nu [\norm{\beta_t - \beta^*}_2^2] &\ge \inf_{\beta_t \in \Theta_t^{\epsilon, \delta}} \bbE_\nu [\norm{\beta_t - \beta^*}_2^2].
\end{align*}

{  {Now, we divide the last part into 2 steps - for the first, we prove lower bound for $\epsilon-$JDP policies using the $\epsilon-$DP Fano's inequality in \cite{acharya2021differentially}, while for the second part, we use the $\rho-$zCDP  Fano's inequality in \cite{kamath2022improved} and exploit the connection of $\rho-$zCDP with $(\epsilon,\delta)-$DP \citep{Bun:tCDP} to establish regret lower bound for $(\epsilon,\delta)-$JDP policy.}}

\paragraph{Step 4(a): Analysis under $\epsilon-$DP}
Following the same steps as in the proof of Theorem 2 \cite{acharya2021differentially}  we get the following:
\paragraph{non-private lower bound:}
\begin{align*}
    \bbE_{\nu}[\norm{\beta_t - \beta^*}_2^2] &\geq \frac{\max\{1,s^*/4\} r^2}{2}\left(1 - \frac{\gamma + \log 2}{\log M}\right)\\
    &= \frac{\max\{1,s^*/4\} r^2}{2}\left(1 - \frac{\frac{4t\sigma_X^2s^* r^2}{\sigma^2} + \log 2}{cs^* \log (d/s^*)}\right).
\end{align*}
\paragraph{private lower bound:}
\begin{align*}
    \bbE_{\nu}[\norm{\beta_t-\beta^*}_2^2] &\geq 0.4\alpha \min\left\{1, \frac{M}{e^{10\epsilon D}}\right\} \\
    &\geq 0.4 \max\{1,s^*/4\} r^2\min\left\{1, \exp\left(cs^*\log (d/s^*)-10\epsilon Kt\frac{\sigma_X}{\sigma}\sqrt{s^*}r\right)\right\}.
\end{align*}
For $s^*>4$ and $\epsilon>0$ small (so that the minimum term is the exponential one), we have
\begin{equation}
    \bbE_{\nu}[\norm{\beta_t-\beta^*}_2^2] \geq \frac{s^*}{4}r^2 \max\left\{0.5\left(1 - \frac{\frac{2Kt\sigma_X^2s^* r^2}{\sigma^2} + \log 2}{cs^* \log (d/s^*)}\right), 0.4 \exp\left(cs^*\log (d/s^*)-10\epsilon Kt\frac{\sigma_X}{\sigma}\sqrt{s^*}r\right)\right\}.
\end{equation}
Hence the minimax regret is (noting $K=2)$
\[
\inf_{\pi \in \Pi} \sup_{P_{\bX, \br} \in \cB} \bbE [R_\pi(T)] \gtrsim \frac{r \sqrt{s^*} T}{4} \max\left\{ 1 - \frac{\frac{4T\sigma_X^2s^* r^2}{\sigma^2} + \log 2}{cs^* \log(d/s^*)}, \exp \left(cs^* \log(d/s^*) - 20 \epsilon T \frac{\sigma_X}{\sigma} \sqrt{s^*}r\right)\right\}
\]
If we set $r^2 = \frac{\sigma^2\log(d/s^*)}{8T}$, then the non-private lower bound becomes
\[
\inf_{\pi \in \Pi} \sup_{P_{\bX, \br} \in \cB} \bbE [R_\pi(T)] \gtrsim \sigma \sqrt{s^* T \log(d/s^*)} \left(1 - \frac{\sigma_X^2}{2} - \frac{\log 2}{cs^* \log(d/s^*)}\right).
\]
Now. recall that $\sigma_X^2 = 1/\log d$. Therefore, whenever $d>9$ and $s^* \log(d/s^*) > 4c^{-1} \log 2$, the above lower bound becomes $\Omega(\sqrt{s^* T \log(d/s^*)})$.

Now, we shift focus to the private part of the lower bound. We set $r = \frac{c \sigma\sqrt{s^*} \log(d/s^*)}{20 \epsilon \sigma_X T}$, which yields 
\[
\inf_{\pi \in \Pi} \sup_{P_{\bX, \br} \in \cB} \bbE [R_\pi(T)] \gtrsim \sigma \frac{s^* \log(d/s^*)\log^{1/2}(d) }{\epsilon} \gtrsim \sigma \frac{\log^{3/2}(d/s^*) }{\epsilon}.
\]
Combining the above facts, we finally have
$$R^{\mathrm{minimax}}_{\epsilon}(T) = \Omega\Big(\max \lbrace s^* \log^{3/2} (d/s^*)\epsilon^{-1}, 
    {\sqrt{s^* T\log (d/s^*)} }\rbrace\Big).$$
    
\paragraph{Step 4(b): Analysis under $(\epsilon,\delta)-$DP}  For $(\epsilon,\delta)-$DP regret lower bound, we use the lower bound technique for $\rho-$zCDP \citep{kamath2022improved} and its connection to $(\epsilon,\delta)-$DP \citep{bun2016concentrated}. Towards this, given $(\epsilon,\delta)$, we construct $\rho:=\rho(\epsilon,\delta)$ such that any $(\epsilon,\delta)-$DP algorithm is $\rho-$zCDP -- thus, $\Theta_t^{\epsilon,\delta}\subseteq \Theta_t^{\rho}$, where the latter is the space of all $\rho-$zCDP estimators constructed using the entire history $\bH_{t-1}$. By \citep[Lemma 3.7]{bun2016concentrated}, we know that if $(\epsilon,\delta)$ satisfies $\epsilon=\xi + \sqrt{\rho_0\log (1/\delta)}$, then any $(\epsilon,\delta)-$DP mechanism is also $(\xi-\rho_0/4+5\rho_0^{1/4}, \rho_0/4)-$zCDP. Hence, by choosing $\xi=\epsilon-\sqrt{\rho_0\log(1/\delta)}$, we note that if $\rho_0$ satisfies

$$\epsilon - \sqrt{\rho_0\log(1/\delta)} -\rho_0/4 + 5 \rho_0^{1/4} = 0$$

then, the mechanism is $\rho$-zCDP with $\rho = \rho_0/4$. Furthermore, we can show that a positive root of the above quartic equation satisfies $\rho_0\leq 20^{4/3}\epsilon^2/\log(1/\delta)$ (for the fixed $\epsilon,\delta$). Hence, with this choice of $\rho = \rho_0/4$,
\begin{align*}
    \inf_{\beta_t \in \Theta_t^{\epsilon, \delta}} \bbE_\nu [\norm{\beta_t - \beta^*}_2^2] &\ge \inf_{\beta_t \in \Theta_t^{\rho}} \bbE_\nu [\norm{\beta_t - \beta^*}_2^2].
\end{align*}

Now, to lower bound the second term, we use \cite[Theorem 1.4]{kamath2022improved}. The non-private version is the same as in the $\epsilon-$DP case discussed above and we do not reproduce it. We focus on the private case. Here the loss is $\ell(x,y)=\norm{x-y}^2$, under which we have, by arguments above, $\ell(\beta_,\beta') \geq \max\{1, s^*/4\}r^2$ for any $\beta,\beta'\in \bB^*$. Furthermore, $n=2t$ is the number of i.i.d. observations used in the above estimation problem and $\alpha = \bbE_x\TV(\cN(x^\top\beta,\sigma^2),\cN(x^\top\beta',\sigma^2))\leq r\sigma_X\sqrt{s^*}/\sigma$. Hence, we have
\begin{align*}
    \bbE_{\nu} [\norm{\beta_t-\beta^*}_2^2] &\geq \frac{\max\{1,s^*/4\}r^2}{2}\left(1 - \frac{\rho\left(\frac{4t^2\sigma_X^2 s^* r^2}{\sigma^2} + 2t\frac{r\sigma_X\sqrt{s^*}}{\sigma}\left(1 - \frac{r\sigma_X\sqrt{s^*}}{\sigma}\right)\right) + \log 2}{cs^*\log (d/s^*)}\right) \\
    &\geq \frac{\max\{1,s^*/4\}r^2}{2}\left(1 - \frac{\frac{4\rho t^2\sigma_X^2 s^* r^2}{\sigma^2} + \frac{2\rho t \sigma_X r\sqrt{s^*}}{\sigma} + \log 2}{cs^*\log (d/s^*)}\right).
\end{align*}
Thus for $s^*>4$, the minimax regret takes the form
\begin{align*}
    \sup_{P_{\bX,r}} \bbE[R_{\pi}(T)] \geq \frac{r\sqrt{s^*} T}{8}\left(1 - \frac{\frac{4\rho\sigma_X^2}{\sigma^2}s^*r^2T^2 + \frac{2\rho \sigma_X}{\sigma}\sqrt{s^*} r T + \log 2}{cs^* \log(d/s^*)}\right).
\end{align*}
Putting $r=\sqrt{\frac{c\sigma^2\log(d/s^*)}{16\rho\sigma_X^2 T^2}}$, we have
\begin{align*}
    \inf_{\pi} \sup_{P_{\bX,r}}\bbE[R_{\pi}(T)] \gtrsim \frac{\sigma\sqrt{s^*\log(d/s^*)}}{\sigma_X\sqrt{\rho}}\left(1 - \frac{1}{4} - \frac{\sqrt{\rho}}{2\sqrt{c s^*\log(d/s^*)}} - \frac{\log 2}{cs^*\log(d/s^*)}\right).
\end{align*}
We put $\sigma_X=1/\sqrt{\log d}$ as before. Recall that $\rho \leq 20^{4/3}\epsilon^2/\log(1/\delta)$. Thus, whenever $s^*\log(d/s^*) \geq \max\{\frac{\log 2}{8c}, \frac{20^{2/3}\epsilon}{16\sqrt{\log(1/\delta)}}\}$, we have
\begin{align*}
    \inf_{\pi} \sup_{P_{\bX,r}}\bbE[R_{\pi}(T)] \gtrsim \frac{\sigma \sqrt{s^* \log(d/s^*)\log d \log(1/\delta)}}{\epsilon}\geq \frac{\sigma \log(d/s^*)\sqrt{s^*\log(1/\delta)}}{\epsilon}.
\end{align*}

Combining the non-private and private parts, we get
\begin{align*}
    R^{\text{minimax}}_{\epsilon,\delta}(T) = \Omega\left(\max\left\{\sqrt{s^* T \log(d/s^*)}, \frac{\log(d/s^*) \sqrt{s^*\log(1/\delta)}}{\epsilon}\right\}\right).
\end{align*}

\paragraph{Choice of $\rho$:} Consider $\epsilon - \sqrt{\rho_0\log(1/\delta)} -\rho_0/4 + 5 \rho_0^{1/4}=0$, recall that we need the positive root of this equation. Putting $t=\rho_0^{1/4}$, we get the quartic equation
$$f(t):=\frac{t^4}{4} + \sqrt{\log(1/\delta)} t^2 - 5 t - \epsilon = 0.$$
Note that $f(0)=-\epsilon<0$. Let $t_0=\tilde{c}\sqrt{\epsilon} / (\log(1/\delta))^{1/4}$. Note that if $\tilde{c}\geq 20^{1/3}$, then $f(t_0)>0$. To see this,
$$f(t_0) = \left[\frac{\tilde{c}^4\epsilon^2}{4\log(1/\delta)} - \frac{5\tilde{c}\sqrt{\epsilon}}{(\log(1/\delta))^{1/4}}\right] + [\tilde{c}^2\epsilon - \epsilon] > 0, \text{ since } \tilde{c}^3 > 20 \geq 20\left(\frac{\epsilon^2}{\log(1/\delta)}\right)^{-3/4}.$$
This ensures that there exists a positive root $\hat{t}$ of $f$ such that $\hat{t}\leq t_0$. This also indicates that there exists a positive root $\rho_0$ of  $\epsilon - \sqrt{\rho_0\log(1/\delta)} -\rho_0/4 + 5 \rho_0^{1/4}=0$ such that $\rho_0^{1/4} \leq 20^{1/3}\sqrt{\epsilon} / (\log(1/\delta))^{1/4}$, i.e., $\rho_0 \leq 20^{4/3}\epsilon^2 / \log(1/\delta)$.

\section{\uppercase{Proof of Results in Section}~\ref{section: general result}}

\subsection{Results on Peeling}
We first discuss some properties of the peeling mechanism. Given a vector $v\in\bbR^d$, we denoted $\cP_s(v)$ to be the peeled vector (i.e., the output of Algorithm \ref{alg: peeling} for the chosen hyperparameters). To this end, we denote $\tilde{P}_s(v,\bW)$ to be the vector $\tilde{P}_s(v)$ in the algorithm where $\bW$ represents a realization of $(\bw_1,\dots,\bw_s)$, the collection of Laplace noise vectors used in the iterative phase of the algorithm. Similarly, we denote $\cP_s(v;\bW,\tilde{\bw})$ to denote the output of peeling $v$ for a particular realization of all the Laplace random variables.

We use the following lemma (see Lemma A.3 of \cite{cai2021cost})
\begin{lemma}[Basic Peeling]\label{lemma: peeling}
    For any index set $I\subset[d]$ and $v\in \bbR^d$ with $\supp(v) \subseteq I$ and any $\hat{v}\in\bbR^d$ such that $\norm{\hat{v}}_0\leq \hat{s}\leq s$, we have that for every $c>0$
    \begin{equation}
        \norm{\tilde{P}_s(v, \bW) - v}_2^2 \leq \left(1+\frac{1}{c}\right)\frac{|I|-s}{|I|-\hat{s}}\norm{\hat{v} - v}_2^2 + 4(1+c)\sum_{i\in[s]} \norm{\bw_i}_{\infty}^2.
    \end{equation}
\end{lemma}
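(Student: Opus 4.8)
The plan is to analyze the selected support $S=\{j_1^*,\dots,j_s^*\}$ produced by the peeling mechanism and to exploit that the error of $\tilde{P}_s(v,\bW)=v_S$ is carried entirely by the coordinates of $v$ that $S$ fails to capture. Since $\supp(v)\subseteq I$, we have $\norm{\tilde{P}_s(v,\bW)-v}_2^2=\norm{v_{S^c}}_2^2=\sum_{j\in I\setminus S}v_j^2$, so the whole task reduces to bounding the mass of $v$ on the missed index set $I\setminus S$. A preliminary simplification I would make: because the right-hand side is nondecreasing in $\norm{\hat{v}-v}_2^2$ and the noise term does not involve $\hat{v}$, it suffices to prove the bound for the single optimal choice $\hat{v}=v_{T_0}$, where $T_0$ is the set of the $\hat{s}$ largest-magnitude coordinates of $v$; the stated inequality for an arbitrary $\hat{s}$-sparse $\hat{v}$ then follows automatically. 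With this choice, $\norm{\hat{v}-v}_2^2=\sum_{j\in I\setminus T_0}v_j^2$ is exactly the sum of the $|I|-\hat{s}$ smallest squared magnitudes on $I$.

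First I would record the defining greedy property of peeling: at step $i$ the selected coordinate satisfies $|v_{j_i^*}|+w_{i,j_i^*}\ge |v_k|+w_{i,k}$ for every index $k$ still available, and in particular for every $k\in I\setminus S$ (which is available at all steps). Rearranging and bounding the two noise entries by $\norm{\bw_i}_\infty$ yields the pointwise comparison $|v_k|\le |v_{j_i^*}|+2\norm{\bw_i}_\infty$ for each missed $k$. Squaring this and applying Young's inequality in the form $(a+b)^2\le(1+1/c)a^2+(1+c)b^2$ with $a=|v_{j_i^*}|$ and $b=2\norm{\bw_i}_\infty$ produces the two constants $(1+1/c)$ and $4(1+c)$ appearing in the statement, cleanly separating a signal contribution attached to a selected coordinate from a noise contribution of size $4\norm{\bw_i}_\infty^2$ charged to step $i$.

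The factor $\frac{|I|-s}{|I|-\hat{s}}\le 1$ I would obtain from an averaging argument. The missed set $I\setminus S$ should be matched against the $|I|-\hat{s}$ smallest coordinates of $I$: in the idealized noiseless run, $S$ is exactly the top-$s$ set, $I\setminus S$ is the bottom $|I|-s$ block, and the elementary fact that the average of the smallest $|I|-s$ values is at most the average of the smallest $|I|-\hat{s}$ values gives $\sum_{j\in I\setminus S}v_j^2\le\frac{|I|-s}{|I|-\hat{s}}\sum_{j\in I\setminus T_0}v_j^2$. I would install this averaging on the signal half of the Young split, while summing the noise halves directly over the $s$ steps to obtain $4(1+c)\sum_{i\in[s]}\norm{\bw_i}_\infty^2$, with no averaging factor, exactly as required.

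The hard part will be reconciling the averaging argument with the noisy selection: noise may cause peeling to miss a genuinely large coordinate of $I$ or to spend a selection on an index outside $I$ (where $v=0$), so that $I\setminus S$ is neither the bottom block nor even of size $|I|-s$. The crux of the argument is therefore a charging scheme showing that every discrepancy relative to the ideal top-$s$ selection is itself paid for by a noise gap $\norm{\bw_i}_\infty$ at the corresponding step, via the step-$i$ comparison above, so that these wrongly missed coordinates are absorbed into the additive $4(1+c)\sum_{i\in[s]}\norm{\bw_i}_\infty^2$ term while the factor $\frac{|I|-s}{|I|-\hat{s}}$ survives on the remaining signal mass. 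Once each missed coordinate is assigned either to a comparable selected coordinate (feeding the signal term through the averaging) or to the noise of the step at which it lost, combining the two halves gives the claimed inequality.
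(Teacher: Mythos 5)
First, a point of reference: the paper does not prove this lemma at all --- it is imported verbatim as Lemma A.3 of \citep{cai2021cost} (``We use the following lemma\dots''), so there is no in-paper argument to compare against. Judged on its own, your outline is correct and completable. The reduction to $\hat{v}=v_{T_0}$ is valid, since for any $\hat{s}$-sparse $\hat{v}$ one has $\norm{\hat{v}-v}_2^2\ge\sum_{j\notin\supp(\hat{v})}v_j^2\ge\sum_{j\in I\setminus T_0}v_j^2$; the greedy comparison $|v_k|\le|v_{j_i^*}|+2\norm{\bw_i}_\infty$, valid for every never-selected $k$ and every step $i$, is the right engine; Young's inequality produces exactly the constants $(1+1/c)$ and $4(1+c)$; and the averaging inequality $\sigma_{|I|-s}\le\frac{|I|-s}{|I|-\hat{s}}\sigma_{|I|-\hat{s}}$ (with $\sigma_m$ the sum of the $m$ smallest squared entries of $v$ on $I$) correctly produces the ratio.

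The one place where you assert rather than prove is the charging scheme, and since that is the entire content of the lemma it needs to be written out --- in particular, the injectivity of the assignment is what guarantees each $\norm{\bw_i}_\infty^2$ is charged only once, and that is not evident from your description. Here is how to close it. Let $B$ be the set of the $|I|-s$ smallest-magnitude indices of $I$, let $p=|S\setminus I|$ count the selections landing outside $I$ (where $v_{j_i^*}=0$), and let $q=|S\cap B|$ count the selections landing in the bottom block. Split $\sum_{k\in I\setminus S}v_k^2$ over $(I\setminus S)\cap B$ and $(I\setminus S)\setminus B$. The first piece equals $\sigma_{|I|-s}-\sum_{j\in S\cap B}v_j^2$. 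For the second, the counting identity $|(I\setminus S)\setminus B|=p+q$ holds exactly, so these wrongly missed indices can be matched injectively to the $p$ wasted steps and the $q$ bottom-selecting steps; the step-$i$ comparison plus Young gives $v_k^2\le(1+1/c)\,v_{j_i^*}^2+4(1+c)\norm{\bw_i}_\infty^2$ with $v_{j_i^*}=0$ for wasted steps and $j_i^*\in S\cap B$ for the others. Summing, the term $(1+1/c)\sum_{j\in S\cap B}v_j^2$ recombines with the first piece to give at most $(1+1/c)\,\sigma_{|I|-s}$, while each noise term is used at most once, yielding $4(1+c)\sum_{i\in[s]}\norm{\bw_i}_\infty^2$. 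Chaining with $\sigma_{|I|-s}\le\frac{|I|-s}{|I|-\hat{s}}\norm{\hat{v}-v}_2^2$ finishes the proof. With this paragraph added, your argument is a complete and correct proof of the cited lemma.
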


We note that if $\supp(\tilde{P}_s(v,\bW))=S$, then $\tilde{P}_s(v,\bW)=\tilde{P}_s(v_S,\bW)$. The above lemma allows us to show that peeling is an \textit{almost-contraction}.

\begin{lemma}\label{lemma: peeling contraction}
Given $\theta^*\in\bbR^d$ with $\norm{\theta^*}_0\leq s^*$ and $\theta\in \bbR^d$ with $\supp(\theta^*)\subseteq\supp(\theta)$ with $\norm{\theta}_0\leq s+s^*$, for any $c_1,c_2>0$ and $c>0$ as in Lemma \ref{lemma: peeling}, we have
        $$\norm{\cP_s(\theta, \bW, \tilde{\bw}) - \theta^*}_2^2 \leq a\norm{\theta -\theta^*}_2^2 + b,$$
        where
        \begin{align*}
            a &= \left(1 + \frac{1}{c_1}\right)\left[\left(1 + \frac{1}{c_2}\right)\left(1 + \frac{1}{c}\right) \frac{s^*}{s} + (1+c_2)\right]\\
            b &= (1+c_1)\norm{\tilde{\bw}_S}_2^2 + 4\left(1 + \frac{1}{c_1}\right)\left(1 + \frac{1}{c_2}\right)(1+c)\sum_{i\in [s]} \norm{\bw_i}_{\infty}^2
        \end{align*}
\end{lemma}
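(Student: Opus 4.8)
The plan is to reduce everything to the Basic Peeling Lemma (Lemma~\ref{lemma: peeling}) by inserting two intermediate reference points, $\theta$ and $\theta^*$, and splitting the squared error across them with the elementary quadratic inequality $\norm{x+y}_2^2 \le (1+\tfrac1c)\norm{x}_2^2 + (1+c)\norm{y}_2^2$, which holds for any $c>0$ (expand the square and bound the cross term by $2\innerprod{x,y}\le \tfrac1c\norm{x}_2^2 + c\norm{y}_2^2$). The two free parameters $c_1,c_2$ in the statement are exactly the constants from two successive applications of this inequality, while $c$ is the constant already present in Lemma~\ref{lemma: peeling}.

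First I would recall from Algorithm~\ref{alg: peeling} that $\cP_s(\theta,\bW,\tilde{\bw}) = \tilde{P}_s(\theta,\bW) + \tilde{\bw}_S$, where $S = \supp(\tilde{P}_s(\theta,\bW))$. Writing $\cP_s(\theta,\bW,\tilde{\bw}) - \theta^* = \big(\tilde{P}_s(\theta,\bW) - \theta^*\big) + \tilde{\bw}_S$ and applying the quadratic inequality with parameter $c_1$ peels off the additive noise, giving
$$\norm{\cP_s(\theta,\bW,\tilde{\bw}) - \theta^*}_2^2 \le \left(1+\tfrac1{c_1}\right)\norm{\tilde{P}_s(\theta,\bW) - \theta^*}_2^2 + (1+c_1)\norm{\tilde{\bw}_S}_2^2,$$
which already produces the summand $(1+c_1)\norm{\tilde{\bw}_S}_2^2$ of $b$ and the overall prefactor $(1+\tfrac1{c_1})$ multiplying every other term.

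Next I would split $\tilde{P}_s(\theta,\bW) - \theta^* = \big(\tilde{P}_s(\theta,\bW) - \theta\big) + (\theta - \theta^*)$ and apply the quadratic inequality a second time with parameter $c_2$, yielding $\left(1+\tfrac1{c_2}\right)\norm{\tilde{P}_s(\theta,\bW) - \theta}_2^2 + (1+c_2)\norm{\theta - \theta^*}_2^2$; the second summand contributes the $(1+c_2)\norm{\theta-\theta^*}_2^2$ piece of $a$. It remains to control $\norm{\tilde{P}_s(\theta,\bW) - \theta}_2^2$, which is precisely the quantity bounded by Lemma~\ref{lemma: peeling}. I would invoke that lemma with $I=\supp(\theta)$ (so $\supp(\theta)\subseteq I$ trivially), reference vector $\hat{v}=\theta^*$, and $\hat{s}=s^*$; this is admissible because $\norm{\theta^*}_0\le s^*\le s$. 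The lemma then gives $\left(1+\tfrac1c\right)\tfrac{|I|-s}{|I|-s^*}\norm{\theta-\theta^*}_2^2 + 4(1+c)\sum_{i\in[s]}\norm{\bw_i}_\infty^2$.

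The one genuinely non-routine step is the bound $\tfrac{|I|-s}{|I|-s^*}\le \tfrac{s^*}{s}$. I would use the hypothesis $\norm{\theta}_0\le s+s^*$, so that $|I|=\norm{\theta}_0\le s+s^*$, together with monotonicity: the map $p\mapsto \tfrac{p-s}{p-s^*}$ has derivative $\tfrac{s-s^*}{(p-s^*)^2}\ge 0$ on $p>s^*$, hence attains its maximum on $(s^*,\,s+s^*]$ at the endpoint $p=s+s^*$, where it equals $s^*/s$. The degenerate case $|I|=s^*$ (i.e.\ $\supp(\theta)=\supp(\theta^*)$) is handled by enlarging $I$ with one zero coordinate of $\theta$, which only increases $|I|$ and, since then $s\ge s^*+1$, keeps the ratio nonpositive. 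Substituting this into the previous display and chaining the three estimates, the multiplicative constants collect into exactly $a=\left(1+\tfrac1{c_1}\right)\big[(1+\tfrac1{c_2})(1+\tfrac1c)\tfrac{s^*}{s}+(1+c_2)\big]$ and the noise contributions into $b$, completing the proof. I expect no serious obstacle beyond careful bookkeeping; the only places demanding attention are verifying the ratio bound and confirming the admissibility of the choices $I,\hat{v},\hat{s}$ in Lemma~\ref{lemma: peeling}.
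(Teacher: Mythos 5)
Your proposal is correct and follows essentially the same route as the paper: two applications of the quadratic (Young) inequality with parameters $c_1$ and $c_2$ to peel off $\tilde{\bw}_S$ and to insert $\theta$, followed by Lemma~\ref{lemma: peeling} with $\hat{v}=\theta^*$, $\hat{s}=s^*$, and the constants assembling into exactly the stated $a$ and $b$. The only (immaterial) difference is that the paper sidesteps your monotonicity argument for $\tfrac{|I|-s}{|I|-\hat{s}}$ by simply padding $I\supseteq\supp(\theta)$ to have cardinality exactly $s+s^*$, so the ratio equals $s^*/s$ outright.
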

\begin{proof}
    Note that $\cP_s(\theta, \bW, \tilde{\bw}) = \tilde{P}_s(\theta, \bW) + \tilde{\bw}_S$, where $S = \supp(\cP_s(\theta, \bW, \tilde{\bw})$. Two applications of Young's inequality gives
    \begin{align*}
        \norm{\cP_s(\theta, \bW, \tilde{w}) - \theta^*}_2^2 &\leq \norm{\tilde{P}_s(\theta, \bW) - \theta^* + \tilde{\bw}_S}_2^2 \\
        &\leq \left(1 + \frac{1}{c_1}\right)\norm{\tilde{P}_s(\theta, \bW) - \theta^*}_2^2 + (1+c_1)\norm{\tilde{\bw}_S}_2^2 \\
        &= \left(1 + \frac{1}{c_1}\right)\norm{\tilde{P}_s(\theta, \bW) - \theta + \theta - \theta^*}_2^2 + (1+c_1)\norm{\tilde{\bw}_S}_2^2 \\
        &\leq \left(1 + \frac{1}{c_1}\right)\left(1 + \frac{1}{c_2}\right)\norm{\tilde{P}_s(\theta,\bW) - \theta}_2^2 + \left(1 + \frac{1}{c_1}\right)(1+c_2) \norm{\theta - \theta^*}_2^2 + (1+c_1) \norm{\tilde{\bw}_S}_2^2
    \end{align*}
    Now, let $S' := \supp(\theta)\cup \supp(\theta^*) = \supp(\theta)$, then applying Lemma \ref{lemma: peeling} with $\hat{v}=\theta^*$ and $S'\subset I$ with $|I|= s+s^*$, we have
    $$\norm{\tilde{P}_s(\theta,\bW)-\theta}_2^2 \leq \left(1 + \frac{1}{c}\right)\frac{s^*}{s}\norm{\theta-\theta^*}_2^2 + 4(1+c)\sum_{i\in[s]}\norm{\bw_i}_{\infty}^2.$$
    Plugging this bound above, we get the result.
\end{proof}

Th above result is valid for all choices of $c_1, c_2, c>0$.

\subsection{Proof of Theorem \ref{thm: general result}} \label{app: proof of general result}

\textbf{Step 1: Defining good events.} Assume that $\norm{x_i}_{\infty}\leq \xmax$ and $\theta^*\in\bbR^d$ is any arbitrary point such that  $\norm{\theta^*}_1\leq \bmax$. Now, we define the following \textit{good} events:
\allowdisplaybreaks
\begin{align*}
    \cE_1 &\triangleq \left\{{\inf_{\theta} \phi_{\min}(Cs^*, \nabla^2 \cL(\theta)) \geq \underline{\kappa}, \sup_{\theta} \phi_{\max}(Cs^*, \nabla^2 \cL(\theta)) \leq \bar{\kappa}}\right\} \\
    \cE_2 &\triangleq \left\{\norm{\nabla \cL(\theta^*)}_{\infty} \leq \bar{g}\right\} \\
    \cE_3 &\triangleq \left\{\max_{m\in[M]} \bW_m \leq \bar{W}_{(\epsilon,\delta)}\right\} \\
    \cE_4 &\triangleq \{\clip_R(y_i) = y_i~~\forall i\in[n] \mid R=\xmax\bmax+\sigma\sqrt{2\log n}\}.  
\end{align*}
Here, $\sigma$ is the sub-Gaussian parameter for the errors $\xi$.
$\bW_m=\sum_{i\in[s]} \norm{\bw_i^{(m)}}_{\infty}^2 + \norm{\tilde{\bw}_{S^m}^{(m)}}_2^2$ is the total amount of noise injected in the $m$th iteration (through the peeling $\cP_s$ step) and both the gradient $\nabla \cL$ and Hessian $\nabla^2 \cL$, appearing in $\cE_3$ and $\cE_1$ respectively, are with respect to the data $(X,\clip_R(y))$. $\phi(k, A)$ represents the $k$-sparse eigenvalue of matrix $A$. We denote $\kappa\triangleq \bar{\kappa}/\underline{\kappa}$ as the condition number of the Hessian of the loss $\cL$.

\textbf{Step 2: Implication of good events.} We observe that we can prove the upper bound if the good events are true. 
By union bound, the good events hold simultaneously with probability at least 
$$\Pr(\cE_1\cap\cE_2\cap\cE_3\cap\cE_4)\ge 1 - \sum_{i=1}^4 \Pr(\cE_i^c),$$
which would prove the theorem.

\textbf{Step 3: Bounding per-step error $\|\theta^{m+1} - \theta\|_2$ at step ${m+1}$ with respect to the per-step error in step $m$ under good events.} 

(a) \textit{Contraction by projection.} First, we observe that
\begin{align}
    \norm{\theta^{m+1} - \theta^*}_2^2 &= \norm{\Pi_C(\tilde{\theta}^{m+1}) - \Pi_C(\theta^*)}_2^2 {\leq} \norm{\tilde{\theta}^{m+1} - \theta^*}_2^2. \label{ineq: projection}
\end{align}
The last inequality is true since $\Pi_C$ is a projection on a convex set, which makes it a contraction~\citep[Theorem 1.2.1]{schneider2014convex}, i.e.
\begin{equation*}
    \norm{\Pi_C(u) - \Pi_C(v)}_2 \leq \norm{u-v}_2 \quad \forall u,v \in\bbR^d.
\end{equation*}

(b) \textit{Impact of peeling.} Now, using the notations in Algorithm \ref{alg:noisy-IHT}, let $S_m=\supp (\theta^m)$, $\tilde{S}_{m+1}=\supp(\tilde{\theta}^{m+1})$ and $S^*=\supp(\theta^*)$. Note that because of the projection $\Pi_C$, $S_{m+1}\subseteq \tilde{S}_{m+1}$.  Denote $I_m = S^* \cup \tilde{S}_{m+1}$. 

Now, we get
\begin{align*}
\norm{\tilde{\theta}^{m+1} - \theta^*}_2^2     &\overset{\text{defn}}{=} \norm{\cP_s(\theta^{m+0.5}, \bW^{(m)}, \tilde{\bw}^{(m)}) - \theta^*}_2^2 \\
    &= \norm{\cP_s(\theta^{m+0.5}_{I_m}, \bW^{(m)}, \tilde{\bw}^{(m)}) - \theta^*}_2^2 \\
    &{\leq} a\norm{\theta^{m+0.5}_{I_m} - \theta^*}_2^2 + b.
\end{align*}
The last inequality is a direct application of Lemma~\ref{lemma: peeling contraction}. Since $|I_m|\leq s+s^*$ and $\supp(\theta^{m+0.5}_{I_m})\subseteq \supp(\theta^*)$, we can apply Lemma~\ref{lemma: peeling contraction} with noises $\bW^{(m)}, \tilde{\bw}^{(m)}$ as the noise used in the $m$-th iteration and $a, b$ defined in the lemma.

(c) \textit{Impact of optimization step.} Now, we focus on bounding the impact of the optimization step in the term $\norm{\theta^{m+0.5}_{I_m} - \theta^*}_2^2$.
\begin{align*}
  \norm{\theta^{m+0.5}_{I_m} - \theta^*}_2^2
    &\overset{\text{defn}}{=}  \norm{\theta^m_{I_m} - \eta (\nabla \cL(\theta^m))_{I_m} - \theta^*}_2^2 \\
    &= \norm{\theta^m_{I_m} -\theta^* - \eta ((\nabla \cL(\theta^*))_{I_m} + \cH_{*I_m}(\theta^m_{I_m} - \theta^*))}_2^2 \\
    &= \norm{(I - \eta \cH_{*I_m})(\theta^m_{I_m} - \theta^*) - \eta (\nabla \cL(\theta^*))_{I_m}}_2^2.
\end{align*}
The first inequality is a direct consequence of the Algorithm~\ref{alg:noisy-IHT}.
In the second equality above, we denote $\cH = \int_0^1 \nabla^2 \cL (\theta^* + \gamma(\theta^m - \theta^*))d\gamma$ as the integral form of the remainder for the Taylor's expansion applied to $\nabla \cL(\theta^m)$. However, owing to the choice of $I_m$, we only need to restrict  columns of $\cH$ to $I_m$, which we denote by $\cH_{*I_m}$.

(d) \textit{Contraction due to convexity.}
Now, by applying H\"older's inequality, we observe
\begin{align*}
    &\norm{(I-\eta\cH_{*I_m})(\theta^m_{I_m}-\theta^*) - \eta\nabla\cL(\theta^*)_{I_m}}_2^2 \\
    &\leq \norm{I-\eta\cH_{*I_m}}_2^2\norm{\theta^m_{I_m}-\theta^*}_2^2 + \eta^2\norm{\nabla\cL(\theta^*)_{I_m}}_2^2 - 2\eta\innerprod{(I-\eta \cH_{*I_m})(\theta^m_{I_m}-\theta^*), \nabla\cL(\theta^*)_{I_m}} \\
    &\leq (1-\eta\underline{\kappa})^2 \norm{\theta^m_{I_m}-\theta^*}_2^2 + \eta^2\norm{\nabla\cL(\theta^*)_{I_m}}_2^2 + \eta\left(\frac{1}{c_3}\norm{\theta^m_{I_m}-\theta^*}_2^2 + c_3 \norm{I-\eta\cH_{*I_m}}_2^2\norm{\nabla\cL(\theta^*)_{I_m}}_2^2\right)\\
    &\overset{\eta\underline{\kappa}<1}{\leq} \left(1 - \eta\underline{\kappa}+\frac{\eta}{c_3}\right)\norm{\theta^m_{I_m}-\theta^*}_2^2 + \left(\eta^2 + \eta c_3 (1-\eta\underline{\kappa})^2\right)\norm{\nabla \cL(\theta^*)_{I_m}}_2^2 \tag{**} \\
    &\overset{c_3=7/6\underline{\kappa}}{\leq} \left(1 - \frac{\eta\underline{\kappa}}{7}\right)\norm{\theta^m_{I_m}-\theta^*}_2^2 + \left(\eta^2 + \frac{7\eta}{6\underline{\kappa}}(1 - \eta\underline{\kappa})\right) \norm{\nabla \cL(\theta^*)_{I_m}}_2^2 \\
    &= \left(1 - \frac{\eta\underline{\kappa}}{7}\right)\norm{\theta^m_{I_m}-\theta^*}_2^2 + \frac{\eta}{6\underline{\kappa}}\left(7 - \eta\underline{\kappa}\right) \norm{\nabla \cL(\theta^*)_{I_m}}_2^2
\end{align*}

Under $\cE_1$, we have the restricted strong convexity and restricted strong smoothness for $\cL$, i.e., for any $\theta\in\bbR^d$ such that $\norm{\theta}_0\lesssim s^*$,
\begin{equation}\label{ineq: RSS_RSC}
    \innerprod{\nabla \cL(\theta^*), \theta - \theta^*} + \underline{\kappa} \norm{\theta - \theta^*}_2^2 \leq \innerprod{\nabla \cL(\theta), \theta - \theta^*} \leq \innerprod{\nabla \cL(\theta^*), \theta - \theta^*} + \bar{\kappa} \norm{\theta - \theta^*}_2^2.
\end{equation}
We note that by algorithm, $\theta^m$ is at most $s$ sparse and we would choose $s \gtrsim \kappa^2 s^*$ (made precise later). Inequality $(**)$ above now follows by using the RSC parameter $\underline{\kappa}$ from \eqref{ineq: RSS_RSC} and arguing as before.

In the above, we require that $I-\eta\cH$ is positive definite, for which we need $\eta\bar{\kappa}<1$. 

(e) \textit{Retrieving $\theta^m$ from $\theta^m_{I_m}$.}

\begin{align*}
&\left(1 - \frac{5\eta\underline{\kappa}}{7}\right)\norm{\theta^m_{I_m} - \theta^*}_2^2 + \frac{\eta}{2\underline{\kappa}}(7 - 5\eta\underline{\kappa})\norm{\nabla \cL(\theta^*)_{I_m}}_2^2 \\
\leq &\left(1 - \frac{5\eta\underline{\kappa}}{7}\right)\norm{\theta^m - \theta^*}_2^2 + \frac{\eta}{2\underline{\kappa}}(7 - 5\eta\underline{\kappa})(s+s^*)\norm{\nabla \cL(\theta^*)}_{\infty}^2.
\end{align*}

The last line uses two facts: (i) $\norm{v}_2^2\leq \norm{v}_0\norm{v}_{\infty}^2$ and (ii) since $\theta^*$ is 0 outside $I_m$, we have $$\norm{\theta^m - \theta^*}_2^2 = \norm{\theta^m_{I_m} - \theta^*_{I_m}}_2^2 + \norm{\theta^m_{I_m^c} - \theta^*_{I_m^c}}_2^2 = \norm{\theta^m_{I_m} - \theta^*}_2^2 + \norm{\theta^m_{I_m^c}}_2^2 \geq \norm{\theta^m_{I_m} - \theta^*}^2.$$

(f) \textit{Combining results with tuned constants.} For such choices, combining all the above (and using the forms for $a$ and $b$ as obtained from Lemma \ref{lemma: peeling contraction}, we get
\begin{align}\label{eq: contraction1}
    \norm{\theta^{m+1}-\theta^*}_2^2 \leq A \norm{\theta^m-\theta^*}_2^2 + B
\end{align}
where
\begin{align}
    A &= \left(1 + \frac{1}{c_1}\right)\left[\left(1 + \frac{1}{c_2}\right)\left(1 + \frac{1}{c}\right) \frac{s^*}{s} + (1+c_2)\right]\left(1 - \frac{\eta\underline{\kappa}}{7}\right) \\
    B &= \underbrace{(1+c_1)}_{b_1}\norm{\tilde{\bw}_S^{(t)}}_2^2 + \underbrace{4\left(1 + \frac{1}{c_1}\right)\left(1 + \frac{1}{c_2}\right)(1+c)}_{b_2}\sum_{i\in [s]} \norm{\bw_i^{(t)}}_{\infty}^2 \nonumber\\
    &\quad\quad + \underbrace{\left(1 + \frac{1}{c_1}\right)\left[\left(1 + \frac{1}{c_2}\right)\left(1 + \frac{1}{c}\right) \frac{s^*}{s} + (1+c_2)\right] \frac{\eta}{6\underline{\kappa}}(7 - \eta\underline{\kappa})(s+s^*)}_{b_3}\norm{\nabla \cL(\theta^*)}_{\infty}^2
\end{align}

Now, with the choices $\eta=1/2\bar{\kappa}, c_1=56\kappa-3, c_2=1/8(14\kappa-1), c=1$ and $s \ge 16(14\kappa-1)(112\kappa-7)s^*$, we have for suitably large constants $C_1, C_2>0$
\begin{align*}
    1 - \frac{\eta\underline{\kappa}}{7} &= 1 - \frac{1}{14\kappa} \\
    A & \le 1 - \frac{1}{28\kappa} := \rho \\
    b_1 &= 56\kappa-2 \leq C_1\kappa \\
    b_2 &= 8\frac{(56\kappa-2)(112\kappa-7))}{(56\kappa-3)} \leq C_1\kappa \\
    b_3 &\leq C_2 \frac{\kappa}{\underline{\kappa}^2}(s +s^*).
\end{align*}
Plugging these into the upper bound in Equation \eqref{eq: contraction1}, we have
\begin{align*}
    \norm{\theta^{m+1}-\theta^*}_2^2 \leq \rho \norm{\theta^m-\theta^*}_2^2 + C_1\kappa \underbrace{\left(\norm{\tilde{\bw}_{S^m}^{(m)}}_2^2 + \sum_{i\in[s]} \norm{\bw_i^{(m)}}_{\infty}^2\right)}_{\bW_m} + C_2\frac{\kappa}{\underline{\kappa}^2}(s +s^*)\norm{\nabla \cL(\theta^*)}_{\infty}^2.
\end{align*}

\textbf{Step 4: Convergence rate over multiple iterations.} Iterating the above inequality for $m=0,\dots,M-1$, we obtain
\begin{align}
    \norm{\theta^{M}-\theta^*}_2^2 &\leq \rho^M \norm{\theta^0-\theta^*}_2^2 + C_1\kappa \sum_{m=0}^{M-1} \rho^m \bW_m + C_2\frac{\kappa}{\underline{\kappa}^2}(s+s^*)\norm{\nabla \cL(\theta^*)}_{\infty}^2 \sum_{m=0}^{M-1} \rho^m \nonumber\\
    &\leq 2\rho^M \bmax^2 + C_1\kappa \left(\max_m \bW_m\right) \frac{1-\rho^M}{1-\rho} + C_2\frac{\kappa}{\underline{\kappa}^2} (s+s^*)\norm{\nabla \cL(\theta^*)}_{\infty}^2 \frac{1-\rho^M}{1-\rho}
\end{align}
where we assume $\norm{\theta^0}_1\leq \bmax$. Finally using the good sets $\cE_2$ and $\cE_3$ and using the choice $M=28\kappa\log (2\bmax^2 n)$, we get
\begin{equation}
    \norm{\theta^M-\theta^*}_2^2 \leq \frac{1}{n} + C\kappa^2\bar{W} + C'\frac{\kappa^2}{\underline{\kappa}^2}(s+s^*)\bar{g}
\end{equation}
where the constants $C=28C_1, C'=28C_2$.

\section{\uppercase{Proofs in Section}~\ref{sec: upper bound}}

\subsection{Proof of Theorem \ref{thm: fliphat DP}}
Recall that $\br_t = (r_1(t), r_2(t), \ldots, r_K(t))^\top$ and $\cC_t := (x_1(t), \ldots, x_{K}(t))^\top$. Consider two $\tau$-neighboring datasets $\cD:= \{(\br_t, \cC_t)\}_{t \in [T]}$ and $\cD^\prime = \{(\br_t^\prime, \cC_t^\prime)\}_{t \in [T]}$. We denote by $\hat{a}_t$ the random map induced by the FLIPHAT at time point $t$ that takes a set of observed reward-context pairs from the previous episode as input and spits out an action $a \in [K]$ at the $t$th time point. Define the function $\gamma: [T] \mapsto \bbN\cup\{0\}$ as follows:
\[
\gamma(t) := \sum_{\ell=0}^{\floor{\log_2 T}}  \ell \ind\{t_\ell \le t < t_{\ell+1}\}, \quad \text{where $t_\ell =  2^\ell$.}
\]
Essentially, $\gamma(t)$ identifies the index of the episode in which the time point $t$ lies. Now recall that at the $t$th time point, FLIPHAT takes the action $\hat{a}_t(\cC_t; \cH_{\gamma(t)-1}^{\hat{\ba}}) \in [K]$, where $\cH_{\gamma(t)-1}^{\hat{\ba}} = \{(r_{\hat{a}_u}(u), x_{\hat{a}_u}(u)); \gamma(t)-1 \le u < \gamma(t)\}$ are the observed rewards and contexts until time $t$ under the (random) action sequence $\{\hat{a}_u\}_{u \le t-1}$. Also, we define $\cH_{-1}^{\hat{\ba}} = \emptyset$. In FLIPHAT,  $\hat{a}_1$ is chosen uniformly randomly over $[K]$.
For conciseness, we will use $\cA$ to denote the FLIPHAT protocol.
Now we fix a sequence of action $\ba_T = (a_1, \ldots, a_T)$, and define $\ba_{-\tau}:= (a_1, \ldots, a_{\tau-1}, a_{\tau+1}, \ldots, a_T)$. WLOG, let us first assume that $\gamma(\tau) = \ell_0$. Using the sequential nature of FLIPHAT, we get 
\begin{align*}
\pr(\cA_{-\tau}(\cD) = \ba_{-\tau}) = \prod_{\ell\ge 0} \pr(\hat{a}_t( \cC_t; \cH_{\ell-1}^\ba) = a_t; t_\ell \le t < t_{\ell+1}, t \ne \tau) .
\end{align*}
Now, we will break the product into three parts based on the episode index $\ell$: $\ell \le \ell_0, \ell = \ell_0 +1$, and $\ell > \ell_0 + 1$. 
Also, recall that changing the reward and context at time $\tau$ only affects the private estimate $\hat{\beta}_{\ell_0+1}$ as it only depends on the data $\cH^\ba_{\ell_0}:=  \{(r_{a_t}(t), x_{a_t}(t))\}_{t = t_{\ell_0}}^{t_{\ell_0+1}-1}$.

\paragraph{Case $\ell \le \ell_0$:}
Note that for the fixed action sequence $\ba_{-\tau}$, $(\br_t, \cC_t) = (\br_t^\prime, \cC_t^\prime)$ for all $t \in [t_{\ell_0 + 1}]\setminus \{\tau\}$, and we also have $\cH^\ba_{\ell-1} = \cH^{\prime \ba}_{\ell-1}$. 
First, fix a time point $t$ in the $\ell$th episode.
By the design of $\cA$, the randomness in the action $\hat{a}_t$ only comes through the N-IHT estimator $\hat{\beta}_\ell$. For clarity,  let us use the notation $\hat{\beta}_{\ell} (\cH^\ba_{\ell-1})$ to emphasize the dependence of $\cH^\ba_{\ell-1}$ in the N-IHT estimator. 
Let $F_{\hat{\beta}_\ell}(\cdot)$ be the distribution function of $\hat{\beta}_\ell(\cH^\ba_{\ell-1})$, and $F^\prime_{\hat{\beta}_\ell}(\cdot)$ be the distribution function of $\hat{\beta}_\ell(\cH^{\prime\ba}_{\ell-1})$. Since $\cH^\ba_{\ell-1} = \cH^{\prime \ba}_{\ell-1}$ under two neighboring datasets, the distribution functions $F_{\hat{\beta}_\ell}$ and $F^\prime_{\hat{\beta}_\ell}$ are identical. Moreover, $\hat{a}_t( \cC_t; \cH_{\ell-1}^\ba) = \argmax_{a \in [K]} x_a(t)^\top \hat{\beta}_\ell(\cH^\ba_{\ell-1}) =: \sfA(\hat{\beta}_\ell(\cH^\ba_{\ell-1}); \cC_t)$. Also define the set $O_t:= \{z \in \bbR^d : \sfA(z; \cC_t) = a_t\}$. Hence, we have the following for all $\ell\le \ell_0$:

\begin{align*}
    & \pr(\hat{a}_t( \cC_t; \cH_{\ell-1}^\ba) = a_t; t_\ell \le t < t_{\ell+1}, t \ne \tau) \\
    &=\int_{\bbR^d}\ind\{\sfA(z; \cC_t) = a_t; t_\ell \le t < t_{\ell+1}, t \ne \tau\} \;dF_{\hat{\beta}_\ell}(z)\\
    & = \pr \left(\hat{\beta}_\ell(\cH^\ba_{\ell-1}) \in \cap_{\substack{t_\ell \le t < t_{\ell+1}\\ t \ne \tau}} O_t\right)\\
    & = \pr \left(\hat{\beta}_\ell(\cH^{\prime\ba}_{\ell-1}) \in \cap_{\substack{t_\ell \le t < t_{\ell+1}\\ t \ne \tau}} O_t\right)\\
    & = \pr(\hat{a}_t( \cC_t; \cH_{\ell-1}^{\prime\ba}) = a_t; t_\ell \le t < t_{\ell+1}, t \ne \tau)
\end{align*}

\paragraph{Case $\ell> \ell_0+1$:}
The argument in this is exactly the same as the previous case. Note that $(\br_t, \cC_t) = (\br_t^\prime, \cC_t^\prime)$ for all $t \ge t_{\ell_0 + 2}$, and we also have $\cH^\ba_{\ell-1} = \cH^{\prime \ba}_{\ell-1}$. Recall that due to the forgetful nature of FLIPHAT, $\hat{\beta}_\ell$ does not depend on $\{\cH^\ba_{j}\}_{j \le \ell_0}$ at all. Therefore, following the same reasoning, we have 
\[
\pr(\hat{a}_t( \cC_t; \cH_{\ell-1}^{\ba}) = a_t; t_\ell \le t < t_{\ell+1}, t \ne \tau) = \pr(\hat{a}_t( \cC_t; \cH_{\ell-1}^{\prime\ba}) = a_t; t_\ell \le t < t_{\ell+1}, t \ne \tau)
\]
for all $\ell > \ell_0+1$.

\paragraph{Case $\ell = \ell_0 + 1$:} First note that $\cH^\ba_{\ell_0}$ and $\cH^{\prime \ba}_{\ell_0}$ are neighboring datasets as they differ only in one entry. Also, recall that by the design of the algorithm, $\hat{\beta}_{\ell_0 + 1}$ is an $(\epsilon, \delta)$-DP estimator. Hence, we have

\begin{align*}
       & \pr(\hat{a}_t( \cC_t; \cH_{\ell_0}^\ba) = a_t; t_{\ell_0+1} \le t < t_{\ell_0+2}, t \ne \tau) \\
    &=\int_{\bbR^d}\ind\{\sfA(z; \cC_t) = a_t; t_{\ell_0+1} \le t < t_{\ell_0+2}, t \ne \tau\} \;dF_{\hat{\beta}_\ell}(z)\\
    & = \pr \left(\hat{\beta}_{\ell_0+1}(\cH^\ba_{\ell_0}) \in \cap_{\substack{t_{\ell_0+1} \le t < t_{\ell_0+2}\\ t \ne \tau}} O_t\right)\\
    & \le e^\epsilon\pr \left(\hat{\beta}_{\ell_0+1}(\cH^{\prime\ba}_{\ell_0}) \in \cap_{\substack{t_{\ell_0+1} \le t < t_{\ell_0+2}\\ t \ne \tau}} O_t\right) + \delta\\
    & = e^\epsilon\pr(\hat{a}_t( \cC_t; \cH_{\ell-1}^{\prime\ba}) = a_t; t_\ell \le t < t_{\ell+1}, t \ne \tau) + \delta.
\end{align*}

Combining all these cases we have:
\begin{align*}
    &\pr(\cA_{-\tau}(\cD) = \ba_{-\tau})\\
    &= \prod_{\ell\ge 0} \pr(\hat{a}_t( \cC_t; \cH_{\ell-1}^\ba) = a_t; t_\ell \le t < t_{\ell+1}, t \ne \tau)\\
    & \le e^\epsilon \prod_{\ell\ge 0} \pr(\hat{a}_t( \cC_t; \cH_{\ell-1}^{\prime\ba}) = a_t; t_\ell \le t < t_{\ell+1}, t \ne \tau) + \delta \prod_{\ell\ne \ell_0 + 1} \pr(\hat{a}_t( \cC_t; \cH_{\ell-1}^\ba) = a_t; t_\ell \le t < t_{\ell+1}, t \ne \tau)\\
    &\le e^\epsilon \pr(\cA_{-\tau}(\cD^\prime) = \ba_{-\tau}) + \delta.
\end{align*}

\subsection{Proof of Proposition \ref{prop: estimation episode}}\label{app: bandit corollary}
We use Theorem \ref{thm: general result}, with $\bar{g}=2\sigma\xmax\sqrt{\log d/N_\ell}$ and $\bar{W}=K' R^2 (s^* \log d \log N_\ell)^2 \log ((\log N_\ell)/\delta) / (N_\ell^2 \epsilon^2)$ for a suitably large $K>0$. Plugging these into the upper bound, we have the desired result given the mentioned choices. We are only left to control the probabilities of the events
\begin{align*}
    \cE_{2\ell} &= \left\{\norm{\nabla \cL(\beta^*|X_\ell, \clip_R(y_\ell)}_{\infty} \leq 2\sigma\xmax\sqrt{\frac{\log d}{N_\ell}}\right\} \\
    \cE_{3\ell} &= \left\{\max_m \bW_m \leq K'.\frac{(s^*)^2\log^2 d\log \left(\frac{\log N_\ell}{\delta}\right)\log^3 N_\ell}{N_\ell^2\epsilon^2}\right\} \\
    \cE_{4\ell} &= \left\{\clip_R(y_\ell)=y_\ell\right\}.
\end{align*}
Using the argument as in Section \ref{section: general result}, due to $\cE_{4\ell}$, we can use $\nabla \cL(\beta^*)$ evaluated on the data $(X_\ell,y_\ell)$ instead of $(X_\ell, \clip(y_\ell))$. To see this, consider $\tilde{\cE}=\{\norm{\nabla \cL(\beta^*|X_\ell,y_\ell)}_{\infty}\leq 2\sigma\xmax\sqrt{(\log d)/N_\ell}\}$. Then $\Pr(\cE_{2\ell}^c)=\Pr(\cE_{2\ell}^c\cap \cE_{4\ell}) + \Pr(\cE_{2\ell}^c\cap \cE_{4\ell}^c)$. For the first term $\cE_{2\ell}\cap\cE_{4\ell}\subset \tilde{\cE}$. For the second term, we can simply upper bound by $\Pr(\cE_{4\ell}^c)$, thus, we obtain $\Pr(\cE_{2\ell}^c) \leq \Pr(\tilde{\cE}^c) + \Pr(\cE_{4\ell}^c)$. Thus, it suffices to have upper bounds for $\Pr(\tilde{\cE}^c)$ and $\Pr(\cE_{4\ell}^c)$, to guarantee that $\cE_{2\ell}$ occurs with high-probability.

\paragraph{Understanding $\cE_{4\ell}$} Recall that $\cE_{4\ell}=\{\clip_R(r_t)=r_t,  \, \forall t\in[t_\ell,t_{\ell+1})\}=\{|r_t|\leq R, \, \forall t\in [t_{\ell}, t_{\ell+1})\}$. Now, we have
    $$|r_t| = |x_{a_t}(t)^\top\beta^*+\varepsilon_t| \leq \norm{x}_{\infty} \norm{\beta^*}_1 + |\varepsilon(t)| \leq \xmax \bmax + |\varepsilon(t)|.$$
    Recall $R = \xmax\bmax + \sigma\sqrt{2\log N_\ell}$, hence we have
    $$|r_t|\leq R \iff |\varepsilon(t)| \leq \sigma \sqrt{2\log N_\ell}.$$
    The $\varepsilon'$s are i.i.d. Sub-Gaussian$(\sigma)$. Thus, we have
    $$\Pr\left(\max_{t\in[t_\ell,t_{\ell+1}]} |\varepsilon(t)| > \sigma \sqrt{2\log N_\ell}\right) \leq N_\ell\exp(-2\sigma^2 \log N_\ell / (2\sigma^2)) = 1/N_\ell,$$
    which implies that 
    $$\Pr(\cE_{4\ell}^c) \leq 1/N_\ell.$$

\paragraph{Understanding $\cE$:} Recall that $\cE = \left\{\norm{\nabla\cL (\beta^*)}_\infty \leq 2 \sigma\xmax \sqrt{\frac{\log d}{N_\ell}}\right\}$, where the gradient is computed for $(X_\ell,y_\ell)$. Note that $\nabla\cL(\beta^*) = X_\ell^\top (y_\ell - X_\ell\beta^*)/N_\ell = X_\ell^\top \boldsymbol{\epsilon}_\ell/N_\ell$, where  $\boldsymbol{\epsilon}_\ell$ is the $N_\ell-$length vector of noise $\epsilon_t$ for $t\in[t_\ell, t_{\ell+1})$. Thus, $\norm{\nabla \cL(\beta^*)}_\infty=\max_{j\in[d]} |\boldsymbol{\epsilon}_\ell^\top X^{(j)}_\ell|/N_\ell$. Now, under the joint distribution of the contexts and noise under the bandit environment, we have \citep[Proposition 3]{chakraborty2023thompson}
\begin{equation}
    \Pr\left(\max_{j\in[d]} \frac{|\boldsymbol{\epsilon}_\ell^\top X_\ell^{(j)}|}{N_\ell} \leq \sigma \xmax \sqrt{\frac{\gamma^2+2\log d}{N_\ell}}\right) \geq 1 - 2\exp(-\gamma^2 / 2).
\end{equation}
Plugging $\gamma^2=2\log d$, we get
\begin{equation}
    \Pr\left(\cE^c\right) \leq 2\exp (-2\log d).
\end{equation}

\paragraph{Understanding $\cE_{3\ell}$:}  The following tail-bound for Laplace distribution is taken from \citep[Lemma A.1]{cai2021cost}
\begin{lemma}
    Consider $\boldsymbol{w}\in\bbR^k$ with $w_1,w_2,\dots,w_k\overset{iid}{\sim} \text{Laplace}(\lambda)$. For every $C>1$, we have
    \begin{align}
        P(\norm{\boldsymbol{w}}_2^2 > kC^2\lambda^2) &\leq ke^{-C} \label{eq: Laplace tail 2 norm}\\
        P(\norm{\boldsymbol{w}}_\infty^2 > C^2\lambda^2\log^2 k) &\leq e^{-(C-1)\log k} \label{eq: Laplace tail inf norm}
    \end{align}
\end{lemma}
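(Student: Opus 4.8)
The plan is to reduce both statements to the exact one-dimensional tail of a Laplace variable combined with a union bound. First I would record the only distributional input: for a single $w\sim\text{Laplace}(\lambda)$, integrating the symmetric density $(2\lambda)^{-1}e^{-|x|/\lambda}$ over $\{|x|>t\}$ gives the clean identity $P(|w|>t)=e^{-t/\lambda}$ for every $t\ge 0$. Everything after this is deterministic bookkeeping.

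For the $\ell_\infty$ bound, I would rewrite the event as $\{\max_{i\in[k]}|w_i|>C\lambda\log k\}$, since $\norm{\bw}_\infty^2>C^2\lambda^2\log^2 k$ is equivalent to $\max_i|w_i|>C\lambda\log k$. A union bound over the $k$ coordinates together with the one-dimensional tail then yields $k\cdot e^{-C\lambda\log k/\lambda}=k^{1-C}=e^{-(C-1)\log k}$, which is exactly the claimed bound.

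For the $\ell_2$ bound, the key observation is a purely deterministic containment: since $\norm{\bw}_2^2=\sum_{i\in[k]}w_i^2\le k\max_i w_i^2$, the event $\{\norm{\bw}_2^2>kC^2\lambda^2\}$ forces $\max_i w_i^2>C^2\lambda^2$, i.e.\ $\max_i|w_i|>C\lambda$. Applying the union bound and the one-dimensional tail once more gives $P(\norm{\bw}_2^2>kC^2\lambda^2)\le\sum_{i\in[k]}P(|w_i|>C\lambda)=k e^{-C}$, as required.

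I do not anticipate any genuine obstacle, as the whole lemma is elementary. The only point worth flagging is that the $\ell_2$ bound should be obtained through the crude coordinate-wise containment $\norm{\bw}_2^2\le k\norm{\bw}_\infty^2$ rather than via a Chernoff / moment-generating-function computation on $\sum_i w_i^2$: the latter route is more delicate, since the square of a Laplace variable has a heavier tail whose MGF exists only on a bounded interval, and it would not reproduce the stated constant. The simple max-based argument sidesteps this entirely and is already loose enough to suffice for the downstream control of $\cE_{3\ell}$.
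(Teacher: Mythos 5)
Your proof is correct. The paper itself does not prove this lemma --- it imports it verbatim from Lemma~A.1 of \cite{cai2021cost} --- and your argument (exact one-dimensional tail $P(|w|>t)=e^{-t/\lambda}$, a union bound for the $\ell_\infty$ claim, and the deterministic containment $\norm{\bw}_2^2\leq k\norm{\bw}_\infty^2$ to reduce the $\ell_2$ claim to the same union bound) is the standard derivation and recovers both stated constants exactly. Your remark that a Chernoff computation on $\sum_i w_i^2$ would be the wrong route is also well taken, since the max-based bound is precisely what the constants $ke^{-C}$ and $e^{-(C-1)\log k}$ reflect.
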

Recall in our case, we have $\bW_m = \left(\sum_{i\in[s]} \norm{\bw_i^{(m)}}_\infty^2 + \norm{\tilde{\bw}^{(m)}_{S}}_2^2\right)$, where each $\bw_i^{(m)}\in \bbR^d$ and $\tilde{\bw}_{S}$ has $|S|=s$ non-zero coordinates. For the moment, let us drop the sub/super-scripts involving $m$. 

For control on $\norm{\tilde{\bw}_S}_2^2$, we note that in our case $\lambda=\eta B T\sqrt{3s\log(M_\ell/\delta)}/(N\epsilon)$, with $B \asymp R_\ell$ and $M_\ell=28\kappa\log (2\bmax^2 N_\ell) \asymp \log N_\ell$, which gives $$\lambda \asymp (\log N_\ell)^2 s^{1/2}\sqrt{\log ((\log N_\ell)/\delta)}/N_\ell\epsilon.$$ Using \eqref{eq: Laplace tail 2 norm} from the above lemma with $k=s $, $\lambda$ as above and $C=c_1'\log d$ so that 
$$ k C^2\lambda^2 = s \times 16\log^2 d \times \frac{ s\log((\log N_\ell)/\delta) \log^3 N_\ell}{N_\ell^2\epsilon^2} \asymp \frac{s^2\log^2 d\log((\log N_\ell)/\delta) \log^2 N_\ell}{N_\ell^2 \epsilon^2}$$
Thus, for suitably large constant $K^\prime>0$ and $c_1'=8$ we have
$$\Pr\left(\norm{\tilde{\bw}_S}_2^2 > K^{\prime}. \frac{s^2\log^2 d\log((\log N_\ell)/\delta) \log^3 N_\ell}{N_\ell^2 \epsilon^2}\right) \leq \frac{s}{d^{8}}.$$
For control on $\norm{\bw_i}_{\infty}$, we use \eqref{eq: Laplace tail inf norm} with $k=d$, $\lambda$ as before and $C=c_2'+1$
$$C^2\lambda^2\log^2 k = C^2\times \frac{s\log((\log N_\ell)/\delta) \log^3 N_\ell}{N_\ell^2\epsilon^2} \times \log^2 d \asymp \frac{s \log^2 d \log((\log N_\ell)/\delta) \log^3 N_\ell}{N_\ell^2 \epsilon^2}$$
and hence for sufficiently large $K'>0$ and $c_2'=9$
$$P\left(\norm{\bw_i}_\infty^2 > K'. \frac{s \log^2 d \log((\log N_\ell)/\delta) \log^3 N_\ell}{N_\ell^2 \epsilon^2}\right) \leq \frac{1}{d^{8}}.$$
Thus, we get for each $m\in[M_\ell]$
\begin{align*}
    P\left(\bW_m \leq K'. \frac{ s^2 \log^2 d \log((\log N_\ell)/\delta) \log^3 N_\ell}{N_\ell^2\epsilon^2}\right) \geq \left(1 - \frac{s}{d^{8}}\right)\left(1 - \frac{1}{d^{8}}\right)^s \geq 1 - \frac{2s}{d^{8}}.
\end{align*}
Now, by union bound,
\begin{align*}
    &P\left(\max_{m\in[M_\ell]} \bW_m > K'. \frac{s^2 \log^2 d \log((\log N_\ell)/\delta) \log^3 N_\ell}{N_\ell^2\epsilon^2}\right) \\
    &\leq \sum_m P\left( \bW_m > K'. \frac{s^2 \log^2 d \log((\log N_\ell)/\delta) \log^3 N_\ell}{N_\ell^2\epsilon^2}\right) \\
    &\leq \frac{M_\ell \times 2s}{d^{8}} = O(1/d^6)
\end{align*}
where the last line follows from $M_\ell\asymp N_\ell \lesssim d$ and $s  \lesssim d$.

Thus, we obtain $P(\cE_4^c) \leq c_3' \exp (- c_4' \log (d))$ for some constants $c_3', c_4'>0$.

\subsection{Proof of Regret Bound}
\paragraph{Regret Bound:}


First, we consider the good event 
$\cA_\ell := \cap_{i =1}^4\cE_{i,\ell}$, where $\cE_{i,\ell}$ are the corresponding events of $\cE_i$ consisting of the data from $\ell$th episode. Also, recall that we need $N_\ell \gtrsim s^* \log d$ in order to hold $\cE_{1,\ell}$ with high probability. This implies that we need $\ell \ge  \log (s^* \log d) + A =: L$ for a large enough positive constant $A$. Let $\ell(t):= \ell$ if $t_{\ell} \leq t < t_{\ell+1}$. Also, note that $\log(\log N_{\ell-1}/\delta) \le \log(1/\delta) \log N_{\ell-1}$ if $\delta < e^{-1}$.


\paragraph{Problem independent bound:}
Let us define $\kappa_0 = \max\{\kappa, \kappa/\underline{\kappa}\} \asymp  K^2 \log K$.
\begin{align*}
   &\bbE \{R(T)\} = \sum_{t=1}^T \bbE \left(x_{t,a^*_t}^\top \beta^* - x_{t,a_t}^\top\beta^*\right) \\
    &= \underbrace{2 x_{\max}\norm{\beta^*}_1\sum_{0\leq \ell < L} N_{\ell}}_{\text{first } L \text{ episodes}} +  \sum_{t=t_L}^T \bbE\left[\left(x_{t,a^*_t}^\top \beta^* - x_{t,a_t^*}^\top\hat{\beta}_{\ell(t)}\right) + \underbrace{\left(x_{t,a_t^*}^\top\hat{\beta}_{\ell(t)} - x_{t,a_t}^\top\hat{\beta}_{\ell(t)}\right)}_{\leq 0 \text{ by how } a_t \text{ is chosen}} + \left(x_{t,a_t}^\top\hat{\beta}_{\ell(t)} - x_{t,a_t}^\top\beta^*\right)\right] 
    \\
    &\leq 2 x_{\max}b_{\max}(2^L-1)+ \sum_{\ell\geq L} \sum_{t_\ell\leq t<t_{\ell+1}} \bbE\left[\left(x_{t,a^*_t}^\top \beta^* - x_{t,a_t^*}^\top\hat{\beta}_{\ell(t)}\right) \ind_{\cA_\ell}+\left(x_{t,a_t}^\top\hat{\beta}_{\ell(t)} - x_{t,a_t}^\top\beta^*\right) \ind_{\cA_\ell}\right] + \xmax \bmax \pr(\cA_\ell^c)\\
    &\leq 2 x_{\max}b_{\max} (2^L-1)+ \sum_{\ell\geq L} \sum_{t_\ell\leq t<t_{\ell+1}} 2x_{\max}\bbE \norm{\beta^* - \hat{\beta}_{\ell(t)}}_1 \ind_{\cA_\ell}  + \xmax \bmax \pr(\cA_\ell^c)\\
    & \lesssim 2 x_{\max}b_{\max} (2^L - 1) + 2x_{\max}\sum_{\ell \geq L} (t_{\ell+1}-t_\ell) \sigma^2 \kappa_0 \left( \sqrt{\frac{(s + s^*)^2 \log d}{N_{\ell-1}}} + \sqrt{\frac{(s + s^*)^3 (\log d)^2 \log(\log N_{\ell-1}/\delta) \log^3 N_{\ell-1}}{N_{\ell-1}^2 \epsilon^2}}\right)\\
    & \quad + O(1)\\
    &\leq x_{\max}b_{\max} (2^L - 1) + 4 \sigma \kappa_0 x_{\max} \sum_{\ell\geq L} \left[ (s + s^*) \sqrt{N_{\ell-1} \log d} +  \sqrt{\frac{(s + s^*)^3 (\log d)^2 \log(1/\delta) \log^4 N_{\ell-1}}{\epsilon^2}} \right]  \\
    &\leq x_{\max}b_{\max} (2^L - 1) + 4\sigma^2 \kappa_0 x_{\max} (s + s^*) \sqrt{\log d} \sum_{\ell\geq L} 2^{(\ell-1)/2} + 4\sigma \kappa_0 x_{\max}\sqrt{\frac{(s + s^*)^3 (\log d)^2 \log(1/\delta) }{\epsilon^2}} \sum_{\ell \ge L} (\ell-1)^{2}\\
    &\lesssim x_{\max}b_{\max} (2^L - 1) + 4\sigma \kappa_0 x_{\max} (s + s^*) \sqrt{T\log d} + 4\sigma \kappa_0  \xmax \sqrt{\frac{(s + s^*)^3 (\log d)^2 \log(1/\delta) \log^6 T }{\epsilon^2}}\\
    & \lesssim 4\sigma \kappa_0 x_{\max} (s + s^*) \sqrt{T\log d} + 4\sigma \kappa_0  \xmax \sqrt{\frac{(s + s^*)^3 (\log d)^2 \log(1/\delta) \log^6 T }{\epsilon^2}}.
\end{align*}


\subsubsection{Margin Bound}
Recall the per-round regret is 
\begin{align*}
    \Delta_{a_t}(t)  &= x_{a_t^*}^\top(t) \beta^* - x_{a_t}^\top (t) \beta^*\\
    & = x_{a_t^*}^\top (t) \beta^*  - x_{a_t^*}^\top(t) \hat{\beta}_{\ell(t)} + \underbrace{(x_{a_t^*}^\top(t) \hat{\beta}_{\ell(t)} -  x_{a_t}^\top (t) \hat{\beta}_{\ell(t)})}_{\leq 0} +  x_{a_t^*}^\top (t) \hat{\beta}_{\ell(t)} - x_{a_t}^\top (t) \beta^*\\
    & \leq \norm{x_{a_t^*}(t)}_\infty \Vert \hat{\beta}_{\ell(t)} - \beta^*\Vert_1 + \norm{x_{a_t} (t)}_\infty \Vert \hat{\beta}_{\ell(t)} - \beta^*\Vert_1\\
    & \leq 2 \phi_{\ell(t)},
\end{align*}
where 
\[
\phi_{\ell(t)} \asymp \xmax \sigma^2 \kappa_0 \left( \sqrt{\frac{(s + s^*)^2 \log d}{N_{\ell-1}}} + \sqrt{\frac{(s + s^*)^3 (\log d)^2 \log(1/\delta) \log^4 N_{\ell-1}}{N_{\ell-1}^2 \epsilon^2}}\right).
\]

Define the event 
\[
\cM_t : = \left\{ x_{a_t^*}^\top \beta^* > \max_{i \neq a_{t}^*} x_{a_t}^\top \beta^* + h_{t} \right\}.
\]
Under $\cM_t \cap \cA_{\ell(t)}$, we have the following for any $i \neq a_t^*$:
\begin{align*}
    x_{a_t^*}^\top(t) \hat{\beta}_{\ell(t)} - x_i^{\top}(t) \hat{\beta}_{\ell(t)} & = \innerprod{x_{a_t^*}(t), \hat{\beta}_{\ell(t)} - \beta^*} + \innerprod{x_{a_t^*}(t) - x_i(t), \beta^*} + \innerprod{x_i(t), \beta^* - \hat{\beta}_{\ell(t)}}\\
    & \geq - \phi_{\ell(t)} + h_{t-1} -  \phi_{\ell(t)}.
\end{align*}

Thus, if we set $h_{t} = 3  \phi_{\ell(t)}$ then $ x_{a_t^*}^\top(t) \hat{\beta}_{\ell(t)} - \max_{i \neq a_t^*}x_i^{\top}(t) \hat{\beta}_{\ell(t)} \geq \phi_{\ell(t)}$. As a result, in $t$th round the regret is 0 almost surely as the optimal arm will be chosen with probability 1. Therefore,
\vspace{1cm}
 \begin{align*}
     \bbE(\Delta_{a_t}(t)) & = \bbE\{ \Delta_{a_t}(t) \ind_{\cM_t^c}\}\\
     & = \bbE\{ \Delta_{a_t}(t) \ind_{\cM_t^c \cap \cE_{\ell(t)} \cap \cG_{\ell(t)}}\} + \bbE\{ \Delta_{a_t}(t) \ind_{\cM_t^c \cap (\cE_{\ell(t)} \cap \cG_{\ell(t)})^c}\}\\
     & \leq 2 \phi_{\ell(t)} \pr (\cM_t^c) + 2 \xmax \bmax \pr(\cE_\ell(t)^c \cup \cG_\ell(t)^c) \\
     & \leq 2  \phi_{\ell(t)} \pr(\cM_t^c) + 2 \xmax \bmax \pr(\cA_{\ell(t)}^c)\\
     & \le 2 \phi_{\ell(t)} \left(\frac{3 \phi_{\ell(t)}}{\Delta_*}\right)^\alpha +  2 \xmax \bmax \pr(\cA_{\ell(t)}^c)
    \end{align*}

\paragraph{Case 1- $\alpha = 0$}:
Discussed in the problem interdependent case.

\paragraph{Case 2 - $\alpha \in (0,1)$:}
\begin{align*}
    \bbE \{R(T)\} & \le 2 \xmax\bmax (2^L -1) + \\
    & 24 \xmax \sigma^2 \kappa_0 \underbrace{\sum_{\ell \ge L} \Delta_*^{-\alpha} N_{\ell-1} \left[ \left(\frac{(s + s^*)^2 \log d}{N_{\ell-1}}\right)^{(1+\alpha)/2} + \left(\frac{(s + s^*)^3 (\log d)^2 \log(1/\delta) \log^4 N_{\ell-1}}{N_{\ell-1}^2 \epsilon^2}\right)^{(1+\alpha)/2}\right]}_{I_\alpha} + O(1). 
\end{align*}

Straightforward algebra leads to 
\[
I_\alpha \lesssim \frac{(s + s^*)^{1+\alpha} (\log d)^{(1+\alpha)/2}}{\Delta_*^\alpha} \left(\frac{T^{\frac{1-\alpha}{2}} - 1}{1-\alpha}\right) + \Psi_\alpha \Delta_*^{-\alpha}\epsilon^{-(1+\alpha)} \{\log(1/\delta)\}^{(1+\alpha)/2}\left((s + s^*)^3 \log^2 d \right)^{\frac{1+\alpha}{2}} ,
\]
where $\Psi_\alpha = \frac{1 - T^{-\alpha}}{1 - 2^{-\alpha}} (\log T)^{2\alpha + 2}$.

\paragraph{Case 3 - $\alpha =1 $:}
In this case, taking $\alpha \to 1$ in the previous bound yields
\[
I_1 \lesssim \frac{(s + s^*)^2 \log d}{\Delta_*} \log T + \frac{\Psi_1}{\Delta_* \epsilon^2} \log(1/\delta) (s + s^*)^3 \log^2 d.
\]

\paragraph{Case 4- $\alpha >1$}
In this case, we get
\[
I_\alpha \lesssim \frac{(s + s^*)^{1+\alpha} (\log d)^{(1+\alpha)/2}}{\Delta_*^\alpha} \left(\frac{1 - T^{-\frac{\alpha-1}{2}} }{\alpha  - 1}\right) + \Psi_\alpha \Delta_*^{-\alpha}\epsilon^{-(1+\alpha)} \{\log(1/\delta)\}^{(1+\alpha)/2}\left((s + s^*)^3 \log^2 d \right)^{\frac{1+\alpha}{2}} ,
\]

\paragraph{Case 5- $\alpha = \infty$}
In this case, we will ensure that $3 \phi_{\ell(t)}/\Delta_* <1$ for all $ t \geq t_L$. Therefore,  it suffices to have 
\[
\ell(t) \gtrsim (s + s^*)^2 \log d + \frac{(s + s^*)^3 (\log d)^2 \log(1/\delta)}{\epsilon^2}.
\]

Under the above condition, we have $I_\infty = 0$ and the result follows.

\section{\uppercase{Numerical Experiments: More details}}
\label{sec: numerical details appendix}
We discuss the numerical experiments in further details and also provide additional experiments for (i) different noise distribution, (ii) effect of the tuning parameter $s$ and (iii) effect of the number of arms $K$. For implementation of FLIPHAT, we used three hyperparameters $s$ (sparsity guess), $\eta$ (step size for N-IHT) and $M>0$ (to control the number of steps in N-IHT with $M_\ell = M \log N_\ell$ at the $\ell$-th episode). For these synthetic experiments, we used $\xmax$ and $\bmax$ based on the true underlying setting and chose $C, R_\ell$ accordingly (see Algorithm \ref{alg: fliphat} and Proposition \ref{prop: estimation episode}. In practice, the contexts can be rescaled (to eliminate $\xmax$) and $\bmax$ could either be chosen based on existing domain knowledge or held-out data or included as another hyperparameter. The task of tuning such hyperparameters can be tricky because of privacy constraints, but we do not focus on that aspect in this paper and keep it for future work.

\begin{figure}
    \centering
    \includegraphics[width=0.48\linewidth]{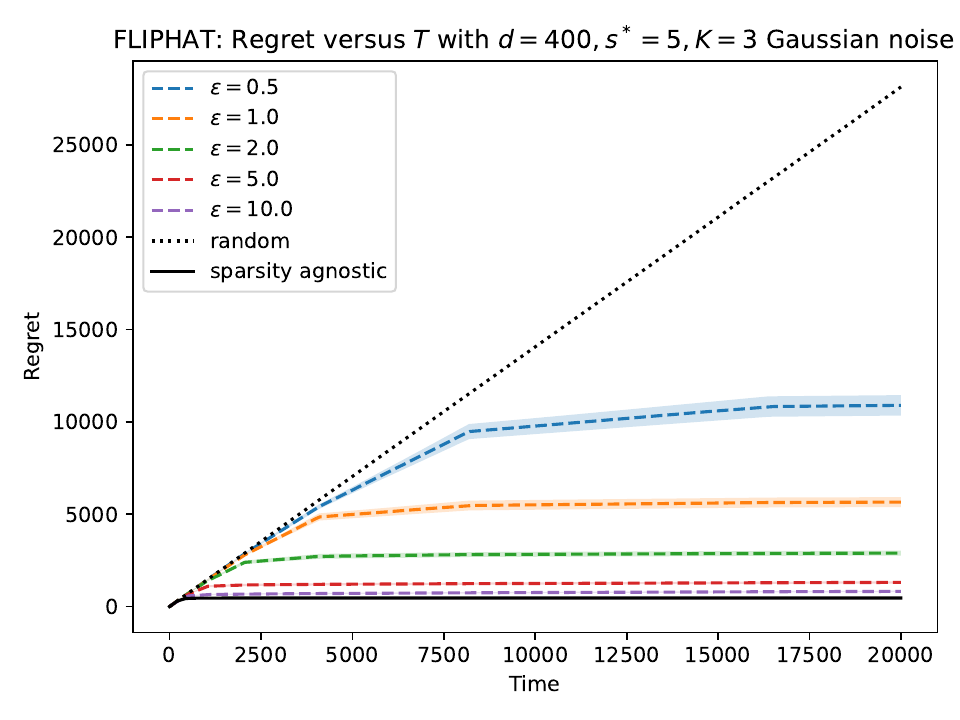}
    \includegraphics[width=0.48\linewidth]{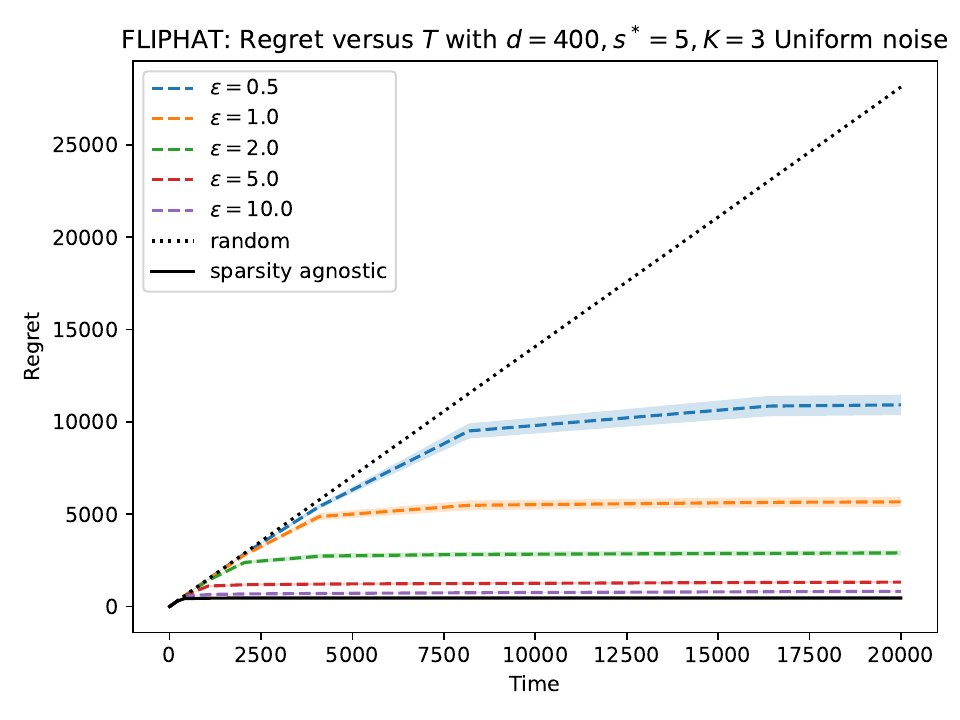}
    \caption{Regret $R(T)$ versus time horizon $T$ in the setting $d=400, s^*=5, K=3$, with different privacy parameters $\epsilon$ and $\delta=0.01$. Contexts iid from normal with autoregressive covariance. (Left) Gaussian observation noise, mean 0 and $\sigma=0.1$, (Right) Bounded observation noise, drawn from $\text{Uniform}(-0.1,0.1)$}
    \label{fig:regret-vs-T}
\end{figure}
\subsection{Regret vs $T$}
We perform two experiments to study how the regret $R(T)$ depends on time horizon $T$ for different levels of the privacy parameter $\epsilon$. For both experiments, we consider $d=400, s^*=5, K=3$ and choose the privacy parameters $\epsilon\in\{0.5,1,2,5,10\}$ and $\delta=0.01.$ The contexts were drawn i.i.d. from $\cN(0,\Sigma)$ with an auto-regressive covariance matrix $\Sigma_{ij}=0.1^{|i-j|}$. The rewards were sampled from the linear model \eqref{eq: base model} -- in the first experiment $\epsilon(t)\sim \cN(0,\sigma)$ with $\sigma=0.1$ and the second experiment, $\epsilon(t)\sim \text{Uniform}(-0.1,0.1)$. The hyperparameters were chosen as $s=10, \eta=10^{-4}, M=1.6$. Each experiment was repeated 60 times and the mean and 95\% confidence intervals are shown in Figure \ref{fig:regret-vs-T}. We find the behavior very similar in both cases. For additional perspective, we added the random bandit (full privacy $\epsilon=0$) and sparsity agnostic bandit (no privacy $\epsilon\to\infty$). We note that even for $\epsilon=0.5$, we observe clear sub-linear behavior of the regret, particularly after $T=7500$. For larger $\epsilon$, we observe that the performance is similar to the sparsity agnostic bandit. To our knowledge, this current paper is the first work on joint differentially private algorithm for sparse contextual bandits, as a result, we could not find any other suitable baseline to compare the performance with.

\subsection{Regret vs $d$}
The setup is exactly similar to the first two experiments, discussed above. In this case, we only collect the regret suffered at $T=10000$, but use different context dimensions $d$, to examine the effect of $d$ on the regret for different choices of $\epsilon$. Note that $s^*$ is fixed in these experiments. The result is shown in Figure \ref{fig: regret} (right) in the main text. We wish to remark on the distinct behavior of FLIPHAT, compared to the JDP contextual bandit in \citep{shariff2018differentially}. In this work, the authors explore joint differential privacy for contextual bandits, with no additional condition on the sparsity -- as a result, the regret scales as $d^2$, which is illustrated in Figure 3 (in the Supplementary Material of \citep{shariff2018differentially}). In our case, we demonstrate that with the additional assumption of sparsity, we can improve this dependence on $d$ under joint differential privacy -- this is illustrated in Figure \ref{fig: regret} (right), where we observe that $R_d(T) \approx \alpha + \beta \log d$, where $\beta$ (for different $\epsilon$) is estimated using a linear model and shown in the legend.

\subsection{Effect of $s$}\label{app: numerical s}
In this experiment, the goal was to examine the effect of one of the more important hyperparameters $s$. According to Theorem \ref{thm: regret}, as long as $s\gtrsim \kappa^2 s^*$, the upper bound (in the problem independent setting of $\alpha=0$), scales as $(s+s^*)^{3/2}$. As discussed in the remarks following the theorem, setting $s$ in the order of $o(d)$ guarantees improved dependence on $d$, with $s\asymp s^*$ (same order) yielding near-optimal dependence on $s^*$. In practice, $s^*$ is not known, although some domain knowledge might give reasonable lower (and upper) bounds for it. Figure \ref{fig:regret-s-K} (left) shows $\log R(T)$ against $\log s$ for $T=10000$ in the same setting as the first experiment (with the only change being $s^*=10$ here) for different choices of $s$ ranging from 5 to 60 (and different $\epsilon$). We observe that the regret is poor for lower values and is best around $s=s^*$ and then worsens a bit. The performance is reasonably robust for higher values of $s$ under high privacy (small $\epsilon$) regime. Under low privacy regime, the log-regret scales only linearly in $\log s$. The slope of the $\log R(T)$ vs $\log s$ lines for each $\epsilon$ in the part $s\geq s^*$ is estimated by fitting a linear model and displayed in the corresponding legend. 

\subsection{Effect of $K$}
In our analysis, we held the number of arms $K$ fixed (small compared to $d, T$). We conduct numerical experiment to see the effect of $K$ on $R(T)$ for FLIPHAT with $T=10000$, in the setting of the first experiment $d=400, s^*=5$, at different levels of privacy. The results are shown in Figure \ref{fig:regret-s-K} (right), where we notice that the dependence is indeed sub-linear.

\begin{figure}
    \centering
    \includegraphics[width=0.48\linewidth]{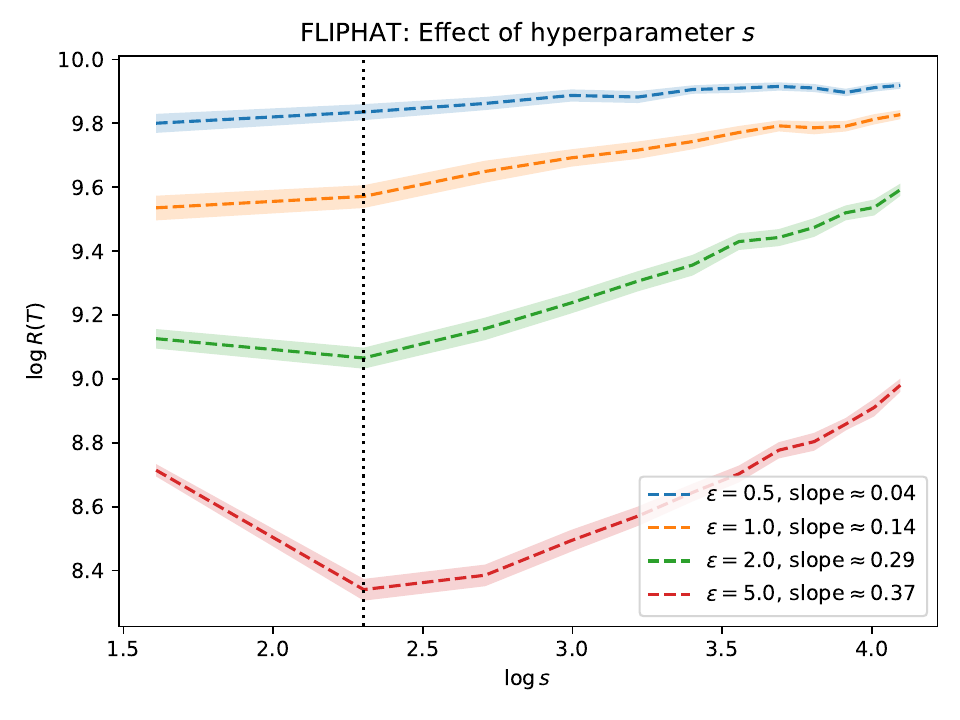}
    \includegraphics[width=0.48\linewidth]{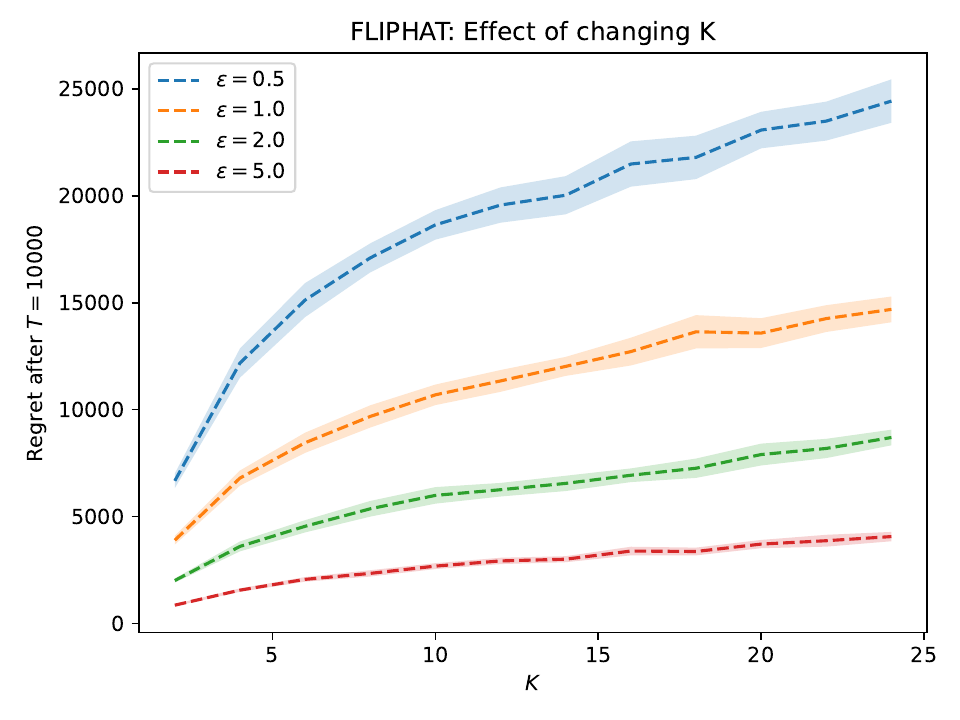}
    \caption{(Left) Effect of the choice of the tuning parameter $s$ on regret $R(T)$, with $T=10000$ - true $s^*=10$ - for different privacy levels. (Right) Regret at $T=10000$ for different number of arms $K$, keeping $d, s^*$ and other parameters fixed.}
    \label{fig:regret-s-K}
\end{figure}

\clearpage
\section*{Checklist}



 \begin{enumerate}

 \item For all models and algorithms presented, check if you include:
 \begin{enumerate}
   \item A clear description of the mathematical setting, assumptions, algorithm, and/or model. \yes  \; {\color{blue} The mathematical setting including assumptions, algorithm, and descriptions are described in Sections 2 and 5.} 
   \item An analysis of the properties and complexity (time, space, sample size) of any algorithm. \yes\; {\color{blue} Regret analysis of the main algorithm and other theoretical details are provided in Appendix E.}
   \item (Optional) Anonymized source code, with specification of all dependencies, including external libraries. \yes \;{\color{blue} We have provided the code files (anonymized) as supplementary materials. We will make a fully polished Github repository available upon acceptance.}
 \end{enumerate}

 \item For any theoretical claim, check if you include:
 \begin{enumerate}
   \item Statements of the full set of assumptions of all theoretical results. \yes \; {\color{blue} Details about the assumptions are provided in Section 5.} 
   \item Complete proofs of all theoretical results. \yes \; {\color{blue} Details are provided in Appendix sections.} 
   \item Clear explanations of any assumptions. \yes \; {\color{blue} Details about the assumptions are provided in Section 5.} 
 \end{enumerate}

 \item For all figures and tables that present empirical results, check if you include:
 \begin{enumerate}
   \item The code, data, and instructions needed to reproduce the main experimental results (either in the supplemental material or as a URL). \yes \;{\color{blue}  We have provided the code files as supplementary materials.}
   \item All the training details (e.g., data splits, hyperparameters, how they were chosen). \yes \; {\color{blue} All the simulation details are provided in Section 6 and Appendix F.} 
    \item A clear definition of the specific measure or statistics and error bars (e.g., with respect to the random seed after running experiments multiple times). \yes \; {\color{blue} Details are provided in Section 6.}
    \item A description of the computing infrastructure used. (e.g., type of GPUs, internal cluster, or cloud provider). \yes \; {\color{blue} Details are provided in Section 6.}
 \end{enumerate}

 \item If you are using existing assets (e.g., code, data, models) or curating/releasing new assets, check if you include:
 \begin{enumerate}
   \item Citations of the creator If your work uses existing assets. \yes \; {\color{blue} We have properly cited prior works corresponding to existing algorithms or models.}
   \item The license information of the assets, if applicable. \na 
   \item New assets either in the supplemental material or as a URL, if applicable. \na 
   \item Information about consent from data providers/curators. \na 
   \item Discussion of sensible content if applicable, e.g., personally identifiable information or offensive content. \no \; {\color{red} We have not included any discussion related to sensible or offensive content.} 
 \end{enumerate}

 \item If you used crowdsourcing or conducted research with human subjects, check if you include:
 \begin{enumerate}
   \item The full text of instructions given to participants and screenshots. \na 
   \item Descriptions of potential participant risks, with links to Institutional Review Board (IRB) approvals if applicable. \na 
   \item The estimated hourly wage paid to participants and the total amount spent on participant compensation. \na 
 \end{enumerate}

 \end{enumerate}

\end{document}


%

%

\onecolumn
\aistatstitle{Instructions for Paper Submissions to AISTATS 2025: \\
Supplementary Materials}

\section{FORMATTING INSTRUCTIONS}

To prepare a supplementary pdf file, we ask the authors to use \texttt{aistats2025.sty} as a style file and to follow the same formatting instructions as in the main paper.
The only difference is that the supplementary material must be in a \emph{single-column} format.
You can use \texttt{supplement.tex} in our starter pack as a starting point, or append the supplementary content to the main paper and split the final PDF into two separate files.

Note that reviewers are under no obligation to examine your supplementary material.

\section{MISSING PROOFS}

The supplementary materials may contain detailed proofs of the results that are missing in the main paper.

\subsection{Proof of Lemma 3}

\textit{In this section, we present the detailed proof of Lemma 3 and then [ ... ]}

\section{ADDITIONAL EXPERIMENTS}

If you have additional experimental results, you may include them in the supplementary materials.

\subsection{The Effect of Regularization Parameter}

\textit{Our algorithm depends on the regularization parameter $\lambda$. Figure 1 below illustrates the effect of this parameter on the performance of our algorithm. As we can see, [ ... ]}

\vfill